\documentclass[12pt]{article}
\usepackage{amsmath,amssymb,amsthm,natbib}
\usepackage[linesnumbered, ruled]{algorithm2e}
\usepackage{graphicx,tikz,pgfplots,caption,subcaption,psfrag}
\usepackage{url} 

\newcommand{\blind}{0}

\numberwithin{equation}{section}
\theoremstyle{plain}
\newtheorem{example}{Example}
\newtheorem{theorem}{Theorem}[section]
\newtheorem{proposition}{Proposition}[section]

\newtheorem{lemma}{Lemma}
\newtheorem{assumpA}{}

\newtheorem{assumpB}{}

\newcommand{\E}{\mathbb{E}}
\newcommand{\R}{\mathbb{R}}
\newcommand{\F}{\mathcal{F}}
\newcommand{\p}{\mathcal{P}}

\addtolength{\oddsidemargin}{-.5in}%
\addtolength{\evensidemargin}{-.5in}%
\addtolength{\textwidth}{1in}%
\addtolength{\textheight}{-.3in}%
\addtolength{\topmargin}{-.8in}%

\begin{document}

	\def\spacingset#1{\renewcommand{\baselinestretch}%
		{#1}\small\normalsize} \spacingset{1}

	
	\if0\blind
	{	
	\title{Statistical Inference for Online Decision Making via Stochastic Gradient Descent}
	
	\author{Haoyu Chen, Wenbin Lu and Rui Song\\
		Department of Statistics, North Carolina State University\\
		\date{}}
	\maketitle
	} \fi

	\if1\blind
	{
	\bigskip
	\bigskip
	\bigskip
	\begin{center}
		{\LARGE\bf Statistical Inference for Online Decision Making via Stochastic Gradient Descent}
	\end{center}
	\medskip
	} \fi

	\bigskip
	\begin{abstract}
	Online decision making aims to learn the optimal decision rule by making personalized decisions and updating the decision rule recursively. It has become easier than before with the help of big data, but new challenges also come along. Since the decision rule should be updated once per step, an offline update which uses all the historical data is inefficient in computation and storage. 
	To this end, we propose a completely online algorithm that can make decisions and update the decision rule online via stochastic gradient descent. It is not only efficient but also supports all kinds of parametric reward models. Focusing on the statistical inference of online decision making, we establish the asymptotic normality of the parameter estimator produced by our algorithm and the online inverse probability weighted value estimator we used to estimate the optimal value. Online plugin estimators for the variance of the parameter and value estimators are also provided and shown to be consistent, so that interval estimation and hypothesis test are possible using our method. The proposed algorithm and theoretical results are tested by simulations and a real data application to news article recommendation.
	\end{abstract}
	
	\noindent%
	{\it Keywords:} Big data, epsilon-greedy, inverse probability weighted estimation, online decision making, optimal decision rule, value function
	\vfill

	\newpage
	\spacingset{1.5} 
	\section{Introduction}\label{sec:intro}
	With the help of massive customer data, service providers from different fields like healthcare and online business can make personalized decisions to achieve better performance. One motivating example is news article recommendation \citep{li2010contextual}, where websites can analyze users' features to deliver the most suitable content and increase the click rate. Similar problems also exist in applications like precision medicine \citep{kim2011battle} and dynamic pricing \citep{qiang2016dynamic}. In all of these problems, the optimal decision rule needs to be learned from historical information but it is often not the best choice to carry out randomized trials and learn the rule offline. For instance, the timeliness of breaking news requires us to apply the recommendation rule as soon as possible. In clinical trials, it would be unethical to assign patients to randomized treatment if a more individualized and possibly better treatment is available. Online decision making aims to solve these problems by taking personalized actions during experiments and continuously improve the decision rule with accumulated information. 
	
	Following the seminal work of \citet{robbins1952some} and \citet{woodroofe1979one}, contextual multi-armed bandit, or contextual bandit for short, has been widely accepted as a basic setting for studying the online decision making problem. In this setting, the service provider observes a user's feature $X_t\in\R^p$ at each decision step $t$, which is independent and identically distributed across $t$, and decides to take action $A_t\in\mathcal{A}$ accordingly. The response of the user is coded as reward $Y_t$ such that larger value is preferable. The original goal of the contextual bandit problem is to maximize the cumulative reward up to step $T$ by making each decision based on historical information. Many solutions have been proposed for this problem, including upper confidence bound methods \citep{auer2002using, dani2008stochastic}, Thompson sampling \citep{agrawal2013thompson}, $\varepsilon$-greedy methods \citep{yang2002randomized, qian2016kernel}, and forced sampling methods \citep{goldenshluger2013linear, bastani2015online}. Readers are referred to the survey of \citet{tewari2017ads} for a comprehensive discussion of these methods. The key ingredient of all these solutions is a design addressing the trade-off between exploration and exploitation: the rule of decision making should be learned and improved by exploring insufficiently explored actions, but exploration may result in a lower reward than exploiting the currently learned rule. Through careful design of the online decision making algorithms, these solutions can balance exploration and exploitation and maximize the cumulative reward in an asymptotic sense.
	
	Compared to the performance of the online decision making algorithms in terms of cumulative reward, which has been extensively studied in the aforementioned literature, we care more about assessing the uncertainty of the decision rules and the mean reward they can achieve. After all, the decision rules are learned from random samples and no inference can be made about them unless we can quantify the uncertainty. Despite being important, inferential properties of online decision making have been less studied. \citet{chambaz2017targeted} considered a general parametric model of the mean reward. They used $\varepsilon$-greedy method with $\varepsilon$ being a function of the estimated treatment effect and gave the asymptotic distribution of the mean reward under the optimal decision rule. 
	\citet{chen2020statistical} studied the contextual bandit problem with a linear reward model and also adopted $\varepsilon$-greedy method but their $\varepsilon$ is a function of decision steps. They gave the asymptotic distributions for the reward model parameters and the expected reward under the optimal decision rule. Although the decisions are made in an online fashion in these two papers, the estimation of the rule and the expected reward is still offline. That is, we have to store the historical data from the very beginning and use them all for each update of the rule. This requires $\mathcal{O}(Tp)$ storage and is not efficient in computation as well when the sample size $T$ becomes large. Therefore, we set out to adapt the online decision making algorithm into a completely online one using Stochastic Gradient Descent (SGD).
	
	SGD algorithms have been widely used in applications with large datasets due to its memory and computational efficiency. Since SGD updates the estimation with one data point at each time, it coincides with the mechanism of how data are observed in online decision making and becomes a natural solution to the online estimation of the decision rule. Moreover, statistical inference of the SGD estimators are made possible by the classic work of \citet{ruppert1988efficient} and \citet{polyak1992acceleration}. They suggested using the averaged SGD estimator for fast convergence and established its asymptotic distribution. More recently, research in the statistical inference of SGD estimators such as asymptotic variance and interval estimation has gained popularity. \citet{chen2016statistical} proposed a plugin estimator and a batch-means estimator for the asymptotic variance and proved their consistency. However, the batch-means estimator tends to underestimate the variance in finite-sample studies due to the correlation between batches. \citet{fang2018online} designed an online bootstrap procedure by randomly perturbing the gradients so that they can estimate the variance of the SGD estimators using resampling. Their method shows its strength when the loss function of the SGD estimator is not twice-differentiable. But in cases where the Hessian of the loss function exists, the plugin estimator is still preferable as it saves time from generating bootstrap samples.
	
	In this paper, we study the online decision making problem in a contextual bandit setting and use $\varepsilon$-greedy method to address the exploration-and-exploitation dilemma. Our main contributions are 1) proposing a completely online decision making algorithm  that scales easily for big datasets and 2) deriving inferential results of the decision rule produced by the algorithm and the expected reward under the optimal rule. Our online decision making algorithm is based on SGD but the gradients are modified using inverse probability weighting (IPW) so that it becomes possible to establish the asymptotic normality of the weighted SGD estimator. The algorithm enables us not only to make decisions online but also to estimate the model parameters, the expected reward under the optimal rule, and their variances online. Since the algorithm does not have to store all the historical information, the storage is only $\mathcal{O}(p^2)$ if we want to estimate the variance of the parameter estimators. It also achieves computational efficiency by updating the stored data such as the second moment and the Hessian online instead of calculating them from the historical data. Another benefit of using SGD algorithm is that we are not restricted to linear reward model anymore, which is a limit of the most parametric solutions of contextual bandit \citep[see e.g.,][]{goldenshluger2013linear, bastani2015online}. Our method works for any parametric reward model as long as a suitable loss function can be found, and the negative log-likelihood function is often a good choice. 
	The inference of the model parameters is non-trivial due to the inherent data dependence in online decision making and the asymptotic normality of the weighted SGD estimator is established by fully exploiting its martingale structure. We also propose an online IPW estimator for the expected reward under the optimal decision rule and show its asymptotic normality under a margin assumption. 
	
	The rest of the paper is organized as follows. We first introduce the proposed SGD algorithm for online decision making in Section \ref{sec:algo}. Inferential results for the SGD parameter estimator and the IPW value estimator are given in Section \ref{sec:para} and Section \ref{sec:value} respectively. Simulation results of the estimators under linear and logistic reward models are presented in Section \ref{sec:numerical} and a real data analysis using Yahoo! Today module user click log data is presented in Section \ref{sec:real}. Finally, we discuss some of the potential extensions to our work in Section \ref{sec:discuss}. Proofs of the main results and extended simulation results are presented in the Appendix.

	\section{The Proposed Algorithm}\label{sec:algo}
	\subsection{Online decision making with epsilon-greedy}\label{sec:algo1}
	Recall that the available data at each decision point is a triplet $O = (X, A, Y)$ consisting of feature, action, and reward. Here we consider a binary action space $\mathcal{A} = \{0, 1\}$ for all decision steps. Define a decision rule $d: \R^p \mapsto \mathcal{A}$ as a mapping from the feature space to the action space. We are interested in the optimal decision rule $d^{opt}(X) = \arg\max_{A\in\mathcal{A}}\E(Y|A, X)$. It is obvious that one way to find $d^{opt}$ is to estimate the conditional mean outcome of each action when feature $X$ is given, aka the Q-function, and choose the action that yields the largest conditional mean outcome. Assuming that the conditional distribution of $Y$ given $A, X$ is fixed across decisions, we can posit a parametric model for the Q-function
	\begin{equation}
		\E(Y|A, X) = \mu(A, X; \beta),
	\end{equation}
	where $\beta \in \mathcal{B}\subseteq \R^{2p}$ consists of $p$ parameters for each action and $\mathcal{B}$ is the parameter space. Let $\beta_{[1:p]}$ denote the vector consisting of the first $p$ elements of $\beta$ and $\beta_{[p+1:2p]}$ denote the other half. The Q-function is then
	$$
	    \E(Y|A, X) = (1 - A)\mu_0(X; \beta_{[1:p]}) + A\mu_1(X; \beta_{[p+1:2p]}), 
	$$
	where $\mu_0$ and $\mu_1$ are the parametric models for $\E(Y|A = 0, X)$ and $\E(Y|A = 1, X)$ respectively. For the discussion below, we use the concatenated vector $\beta$ instead of two separate vectors to spare notations, but keep in mind that half of the parameters are redundant when the action is specified.\footnote{For example, $\beta_{[1:p]}$ is not used in $\mu(1, X; \beta)$. The same rule applies to the true parameter $\beta_0$ and the estimators $\hat{\beta}$, $\hat{\beta}_t$ and $\bar{\beta}_t$ that are introduced below.} Let $\beta_0 \in \mathcal{B}$ be the true value of $\beta$. Then, the optimal decision is
	\begin{equation}\label{eq:oracle}
		d^{opt}(X) = I\{\mu(1, X; \beta_0) > \mu(0, X; \beta_0)\},
	\end{equation}
	or equivalently $I\{\mu_1(X, \beta_{0, [1:p]}) > \mu_0(X, \beta_{0, [p+1:2p]})\}$ with $\beta_{0, [1:p]}$ and $\beta_{0, [p+1:2p]}$ representing the two halves of $\beta_0$. Let $\hat{\beta}$ denote an estimator of $\beta_0$. The corresponding estimated optimal decision rule is given by
	\begin{equation*}
		\hat{d}^{opt}(X) = I\{\mu(1, X, \hat{\beta}) > \mu(0, X, \hat{\beta})\}.
	\end{equation*}
	The fundamental idea of online decision making is to make decisions based on $\hat{\beta}$, and update it recursively using newly acquired data. 
	
	In order to update the parameter estimator using gradient descent, we have to specify a loss function $\ell(\beta; O)$ that measures the difference between the estimated reward $\mu(A, X; \beta)$ and the true reward $Y$. For example, the quadratic loss $\{Y - \mu(A, X; \beta)\}^2/2$ can be used when the reward is continuous. In general, the loss function can be constructed as 
	$$
	\ell(\beta; O) = (1 - A)\ell_0(\beta_{[1:p]}; X, Y) + A\ell_1(\beta_{[p+1:2p]}; X, Y),
	$$
	where $\ell_0$ and $\ell_1$ are the usual loss functions for a regression of $Y$ on $X$ when $A$ is $0$ and $1$ respectively, e.g., $\ell_0(\beta_{[1:p]}; X, Y) = \{Y - \mu_0(X; \beta_{[1:p]})\}^2/2$ and $\ell_1(\beta_{[p+1:2p]}; X, Y) = \{Y - \mu_1(X; \beta_{[p+1:2p]})\}^2/2$ in the above example. Given a series of predetermined learning rates $\{\alpha_t\}$, the update rule of the original SGD algorithm \citep{robbins1951stochastic} is 
	\begin{equation}\label{eq:sgd}
		\hat{\beta}_t = \hat{\beta}_{t-1} - \alpha_t \nabla\ell(\hat{\beta}_{t-1}; O_t).
	\end{equation}
	Suppose we start from an initial estimate $\hat{\beta}_0$ and use (\ref{eq:sgd}) to obtain $\hat{\beta}_1, \cdots, \hat{\beta}_{t}$ after observing $O_1, \cdots, O_t$. As suggested by \citet{polyak1992acceleration}, we use the average $\bar{\beta}_t = t^{-1}\sum_{s=1}^t \hat{\beta}_s$ as the final estimator to accelerate the estimation. The estimated optimal decision rule after step $t$ is
	\begin{equation}\label{eq:estopt}
		\hat{d}^{opt}_{t}(X) = I\{\mu(1, X, \bar{\beta}_t) > \mu(0, X, \bar{\beta}_t)\}.
	\end{equation}
	To address the exploration-and-exploitation dilemma, $\hat{d}^{opt}_{t}(X_{t+1})$ is not used directly as the next action, but $\varepsilon$-greedy policy is adopted instead to explore the other action with a small probability. At each decision step $t$, the propensity score $\pi(X) = P\{d(X)=1| X\}$ is determined by
	\begin{equation}\label{eq:pi}
		\pi_t(X) = (1-\varepsilon_t)I\{\mu(1, X, \bar{\beta}_t) > \mu(0, X, \bar{\beta}_t)\} + \frac{\varepsilon_t}{2},
	\end{equation}
	where $\{\varepsilon_t\}$ is a series of predetermined exploration rate. The $\varepsilon$-greedy decision $d_t(X_{t+1})$, later collected into $O_{t+1}$ as $A_{t+1}$, is then sampled from a Bernoulli distribution with success probability $\pi_t(X_{t+1})$. So this policy will choose the better action under the currently estimated optimal decision rule with probability $1-\varepsilon_t/2$ and choose the inferior option with probability $\varepsilon_t/2$.
	
	\subsection{SGD with weighted gradients}\label{sec:algo2}
	In the settings studied by \citet{chen2016statistical} and \citet{fang2018online}, the observed data $\tilde{O}_t = (\tilde{X}_t, \tilde{Y}_t)$ are i.i.d. and there is no decision making process\footnote{To distinguish between the i.i.d. setting and the online decision making setting, we use the tilde symbol to mark the data, the conditional mean response model and loss functions from the i.i.d. settings and use $b$ to denote the parameters.}. Assume that the conditional expectation of $\tilde{Y}$ follows a parametric model $\E(\tilde{Y}|\tilde{X}) = \tilde{\mu}(\tilde{X}; b)$. Then their loss function has the form $\tilde{\ell}(b; \tilde{O}_t) = \tilde{\ell}(b; \tilde{X}_t, \tilde{Y}_t)$, which is also i.i.d.\,for any fixed $b$. So the expectation of the loss function can be easily defined as 
	$$
	\tilde{L}(b) = \iint \tilde{\ell}(b; x, y)d\p_{\tilde{Y}|\tilde{X}}(y|x)d\p_{\tilde{X}}(x),
	$$
	where $\p_{\tilde{X}}$ is the distribution of ${\tilde{X}}$ and $\p_{\tilde{Y}|\tilde{X}}(y|x)$ is the conditional distribution of $\tilde{Y}$ given $\tilde{X}$.
	Rewrite the Robbins-Monro updating rule $\hat{b}_t = \hat{b}_{t-1} - \alpha_t \nabla\ell(\hat{b}_{t-1}; \tilde{O}_t)$ as
	\begin{equation}\label{eq:partition}
		\hat{b}_t = \hat{b}_{t-1} - \alpha_t \nabla \tilde{L}(\hat{b}_{t-1}) + \alpha_t \{\nabla \tilde{L}(\hat{b}_{t-1}) - \nabla\tilde{\ell}(\hat{b}_{t-1}; \tilde{O}_t)\}.
	\end{equation}
	We can see that $-\nabla \tilde{L}(\hat{b}_{t-1})$ is the main force that pushes $\hat{b}_t$ towards the minimizer of $\tilde{L}(b)$ and the extra part $\{\nabla \tilde{L}(\hat{b}_{t-1}) - \nabla\ell(\hat{b}_{t-1}; \tilde{O}_t)\}$ can be seen as some random disturbance. In i.i.d. settings (see the left graph of Figure \ref{fig:structure}), the randomness only comes from the new data $\tilde{O}_t$ when $\hat{b}_{t-1}$ is given. Providing interchangeability of expectation and derivative, we have
	\begin{equation}\label{eq:cond}
		\nabla \tilde{L}(\hat{b}_{t-1}) = \E\{ \nabla\tilde{\ell}(\hat{b}_{t-1}; \tilde{O}_t) | \bar{\tilde{O}}_{t-1} \},
	\end{equation}
	since $\hat{b}_{t-1}$ is a function of $\bar{\tilde{O}}_{t-1} = \{\tilde{O}_1, \cdots, \tilde{O}_{t-1}\}$. Define $\tilde{\F}_t$ as the $\sigma$-field generated by $\bar{\tilde{O}}_t$, (\ref{eq:cond}) implies that $\{\nabla \tilde{L}(\hat{b}_{t-1}) - \nabla\tilde{\ell}(\hat{b}_{t-1}; \tilde{O}_t)\}_{t\ge 1}$ is a martingale difference process with respect to $\{\tilde{\F}_t\}_{t\ge 1}$. Then the asymptotic normality of the averaged SGD estimator can be established by martingale central limit theorem \citep{polyak1992acceleration}.
	
	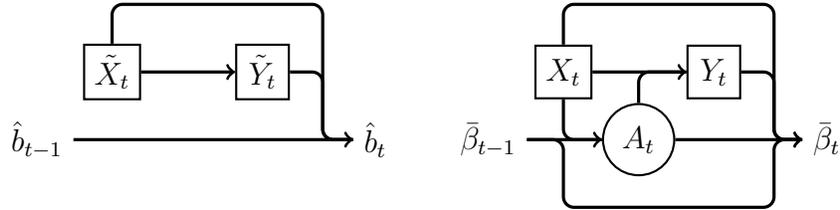
\begin{figure}[!htbp]
		\centering
		\begin{tikzpicture}[
		roundnode/.style={circle, draw=black, thick, minimum size=7mm},
		squarednode/.style={rectangle, draw=black, thick, minimum size=7mm},
		]
		\node[roundnode](a) at (3,0) {$A_t$};
		\node[](abtm1) at (1, 0) {$\bar{\beta}_{t-1}$};
		\node[](abt) at (5.5, 0) {$\bar{\beta}_{t}$};
		\node[squarednode](ax) at (2, 0.9) {$X_t$};
		\node[squarednode](ay) at (4, 0.9) {$Y_t$};
		
		\draw[very thick, rounded corners, ->] (abtm1.east) -- (2, 0) -- (2, -0.9) -- (4.8, -0.9) -- (4.8, 0) -- (abt.west);
		\draw[very thick, ->] (abtm1.east) -- (a.west);
		\draw[very thick, rounded corners, -] (ax.south) -- (2, 0) -- (a.west);
		\draw[very thick, ->] (ax.east) -- (ay.west);
		\draw[very thick,rounded corners, -] (a.north) -- (3, 0.9) -- (ay.west);
		\draw[very thick, ->] (a.east) -- (abt.west);
		\draw[very thick,rounded corners, -] (ay.east) -- (4.8, 0.9) -- (4.8, 0) -- (abt.west);
		\draw[very thick,rounded corners, -] (ax.north) -- (2, 1.8) -- (4.8, 1.8) -- (4.8, 0) -- (abt.west);
		
		\node[](btm1) at (-5, 0) {$\hat{b}_{t-1}$};
		\node[](bt) at (-.5, 0) {$\hat{b}_{t}$};
		\node[squarednode](x) at (-4, 0.9) {$\tilde{X}_t$};
		\node[squarednode](y) at (-2, 0.9) {$\tilde{Y}_t$};
		
		\draw[very thick, rounded corners, ->] (btm1.east) -- (bt.west);
		\draw[very thick, ->] (x.east) -- (y.west);
		\draw[very thick,rounded corners, -] (y.east) -- (-1.2, 0.9) -- (-1.2, 0) -- (bt.west);
		\draw[very thick,rounded corners, -] (x.north) -- (-4, 1.8) -- (-1.2, 1.8) -- (-1.2, 0) -- (bt.west);
		\end{tikzpicture}
		\caption{Data dependence structure in an SGD update for online learning with i.i.d. data (left) and online decision making (right).}
		\label{fig:structure}
	\end{figure}
	
	However, the martingale structure is not an immediate result from our setting. As shown in the right graph of Figure \ref{fig:structure}, the action at step $t$ is decided by the previous parameter estimate $\bar{\beta}_{t-1}$ and the current feature $X_t$, and it will influence the reward $Y_t$ through the underlying true model $\mu(A_t, X_t; \beta_0)$. Then all the observed data $O_t = (X_t, A_t, Y_t)$ together with $\bar{\beta}_{t-1}$ determine the next estimate $\bar{\beta}_{t}$ through the SGD update. The inherent data dependence problem of online decision making makes its statistical inference more challenging. 
	
	First, the definition of the expected loss $L(\beta)$ now involves the distribution of $A_t$, which is not the same for different decision rules. Since the $\varepsilon$-greedy decision $d_t(X)$ changes with $\bar{\beta}_t$, we cannot use its distribution as the distribution of $A$ to define $L(\beta)$. Instead, a fixed decision rule should be used to determine the joint distribution of $O=(X, A, Y)$. For simplicity, we consider the random decision rule which selects either action with equal probability. Then the expected loss is 
	\begin{equation}\label{eq:loss}
		L(\beta) = \iiint \ell(\beta; x, a, y)d\p_{Y|X, A}(y|x, a)d\p_A^r(a)d\p_X(x),
	\end{equation}
	where $\p_X$ is the distribution of $X$, $\p_A^r$ is $\mathrm{Bernoulli}(1/2)$ and $\p_{Y|X, A}$ is the conditional distribution of $Y$ given $X$ and $A$. To spare notation, we denote $\p_{O}^r$ as the joint distribution of $O$ when $A$ follows $\p_A^r$ and write $\E_{\p_O}$ to note that the expectation is taken with respect to $O$ following some distribution $\p_O$. Then $L(\beta)$ can also be expressed as $\E_{\p_{O}^r} \ell(\beta; O)$.
	
	Similar to (\ref{eq:partition}), we can rewrite the Robbins-Monro updating rule (\ref{eq:sgd}) as 
	$$
		\hat{\beta}_t = \hat{\beta}_{t-1} - \alpha_t \nabla L(\hat{\beta}_{t-1}) + \alpha_t \{\nabla L(\hat{\beta}_{t-1}) - \nabla \ell(\hat{\beta}_{t-1}; O_t)\}.
	$$
	The second problem is now the series of random disturbances $\nabla L(\hat{\beta}_{t-1}) - \nabla\ell(\hat{\beta}_{t-1}; O_t)$ is no longer a martingale difference process under the definition (\ref{eq:loss}). Define $\F_t$ as the $\sigma$-field generated by $\bar{O}_t = \{O_1, \cdots, O_t\}$. We have $\E\{\ell(\beta; O_t)|\F_{t-1}\}\ne L(\beta)$ because no matter what fixed distribution of $A$ is used in the definition of $L(\beta)$, it is almost always different from the true distribution of $A_t$ following the $\varepsilon$-greedy policy. 
	
	Since the action distribution $\p_A^r$ in (\ref{eq:loss}) is fixed to $\mathrm{Bernoulli}(1/2)$, we can decompose the expected loss as
	$$
    	L(\beta) = \frac{1}{2}\E\;\ell_0(\beta_{[1:p]}; X, Y) + \frac{1}{2}\E\;\ell_1(\beta_{[1+p:2p]}; X, Y) =: L_0(\beta_{[1:p]}) + L_1(\beta_{[1+p:2p]}).
	$$
	Therefore minimizing $L$ in $\beta$ is equivalent to minimizing $L_0$ in $\beta_{[1:p]}$ and minimizing $L_1$ in $\beta_{[p+1:2p]}$ separately. Note that $\nabla\ell(\beta; O)$ can also be divided into two parts,
	$$
	    (1 - A)\nabla\ell_0(\beta_{[1:p]}; X, Y) + A\nabla\ell_1(\beta_{[p+1:2p]}; X, Y),
	$$
	which means we can update $\hat{\beta}_{t,[1:p]}$ using $\nabla\ell(\hat{\beta}_{t-1}; O_t)$ when $A_t = 0$ and update $\hat{\beta}_{t,[p+1:2p]}$ when $A_t = 1$. The second problem can then be solved by correcting the sampling distribution of $A_t$ towards $\p_A^r$ for $A_t = 0$ and $1$ separately. Inspired by the importance sampling ratio method used in off-policy reinforcement learning \citep{sutton2016emphatic, sutton2018reinforcement}, we propose to replace $\nabla\ell(\hat{\beta}_{t-1}; O_t)$ in (\ref{eq:sgd}) with the IPW gradient
	\begin{equation}\label{eq:grad}
		g(\hat{\beta}_{t-1}; O_t) = \frac{\nabla\ell(\hat{\beta}_{t-1}; O_t)I\{A_t = 1\}}{2\pi_{t-1}(X_t)} + \frac{\nabla\ell(\hat{\beta}_{t-1}; O_t)I\{A_t = 0\}}{2\{1 - \pi_{t-1}(X_t)\}}.
	\end{equation}
	This gradient is named IPW because it can be seen as the average of two inverse probability weighted derivatives. From another perspective, it actually corrects the sampling distribution of $A_t$ towards $\p_A^r$ by importance sampling. Notice that the propensity score is $1/2$ when $A$ follows $\p_A^r$, therefore the importance sampling ratios are $(1/2)/\pi_{t-1}(X_t)$ for $A_t=1$ and $(1/2)/\{1-\pi_{t-1}(X_t)\}$ for $A_t=0$. Since $\pi_t$ is on the denominator, our algorithm requires the exploration rate $\varepsilon_t$ be strictly bigger than zero.
	
	\begin{algorithm}[!htbp]
		\DontPrintSemicolon
		\SetKwInOut{Input}{Initialize}
		\caption{Online decision Making via SGD}\label{alg:1}
		
		\KwIn{$\hat{\beta}_0=\bar{\beta}_0=0$, $\pi_0=1/2$, $\alpha_t$, $\varepsilon_t$}
		\BlankLine
		\For{$t = 1$ \KwTo $T$}{
			Observe $X_t$\;
			Sample $A_t$ from Bernoulli$(\pi_{t-1}(X_t))$\;
			Observe $Y_t$, form $O_t = (X_t, A_t, Y_t)$\;
			Calculate the IPW gradient $g(\hat{\beta}_{t-1}; O_t)$ according to (\ref{eq:grad})\;
			Update $\hat{\beta}_t = \hat{\beta}_{t-1} - \alpha_t g(\hat{\beta}_{t-1}; O_t)$\;
			Update $\bar{\beta}_t =  \{\hat{\beta}_{t} + (t-1) \bar{\beta}_{t-1}\}/t$\;
			Update $\pi_t(X)$ according to (\ref{eq:pi})\;
		}
	\end{algorithm}
	
	Putting the $\varepsilon$-greedy decision rule and the IPW gradient together, Algorithm \ref{alg:1} presents the whole process of online decision making via SGD. Denote $\p_{O}^\pi$ as the joint distribution of $O_t$ under the proposed decision policy in Algorithm \ref{alg:1}. We can check that if  expectation and derivative are interchangeable, 
	\begin{equation}\label{eq:cond2}
		\E_{\p_{O}^\pi}\{ g(\hat{\beta}_{t-1}; O_t) | \F_{t-1}\} = \E_{\p_{O}^r} \{\nabla\ell(\hat{\beta}_{t-1}; O) | \F_{t-1} \} = \nabla L(\hat{\beta}_{t-1}).
	\end{equation}
	Therefore the martingale structure can be recovered from the online decision making setting and Theorem 2 in \citet{polyak1992acceleration} can be applied to show the asymptotic normality of our parameter estimators.
	
	\section{Parameter Inference}\label{sec:para}
	In this section, we provide the asymptotic distribution of the parameter estimator from Algorithm \ref{alg:1} and give a consistent online estimator of its asymptotic variance. For the discussions below, we use $\lVert\cdot\rVert$ to represent the Euclidean norm of vectors and $\langle\cdot, \cdot\rangle$ for the inner product of two vectors.
	
	Rewrite the update of $\hat{\beta}_t$ as
	\begin{equation*}
		\hat{\beta}_t = \hat{\beta}_{t-1} - \alpha_t g(\hat{\beta}_{t-1}; O_t) = \hat{\beta}_{t-1} - \alpha_t \{R(\hat{\beta}_{t-1}) - \xi_t\},
	\end{equation*}
	where $R(\beta) = \nabla L(\beta)$ and $\xi_t = R(\hat{\beta}_{t-1}) - g(\hat{\beta}_{t-1}; O_t)$. It follows from (\ref{eq:cond2}) that $\{\xi_t\}_{t\ge 1}$ is a martingale difference process wrt $\{\F_t\}_{t\ge 1}$ but the following assumption is needed to ensure the interchangeability of expectation and derivative.
	
	\begin{assumpA}\label{as:ell}
		The loss function $\ell(\beta; O)$ is integrable for any $\beta$ and continuously differentiable in $\beta$ for any $O$. The collection of functions $\{\lVert\nabla\ell(\beta; O)\rVert: \beta\in\mathcal{B}\}$ is uniformly integrable so that $\E_{\p_{O}^r} \{\nabla\ell(\beta; O) \} = \nabla L(\beta)$.
	\end{assumpA}
	
	We further assume the following conditions are satisfied.
	
	\begin{assumpA}\label{as:L}
		The expected loss $L(\beta)$ as defined in (\ref{eq:loss}) satisfies
		\begin{enumerate}
			\item $L(\beta)$ is continuously differentiable and convex, and it has a unique minimizer $\beta^* = \arg\min_{\beta}L(\beta)$.
			\item $\nabla L(\beta)$ is $L_1$-Lipschitz continuous, that is, for any $\beta_1, \beta_2\in \mathcal{B}$, $\lVert\nabla L(\beta_1) - \nabla L(\beta_2)\rVert \le L_1 \lVert\beta_2 - \beta_1\rVert$.
			\item The covariance matrix of $\nabla\ell(\beta; O)$, $\Sigma(\beta) = \E_{\p_O^r}[\nabla\ell(\beta; O)\{\nabla\ell(\beta; O)\}^T]$ exists. The Hessian matrix $H(\beta) = \nabla^2 L(\beta)$ exists and is $L_2$-Lipschitz continuous at $\beta^*$, and $H = H(\beta^*)$ is positive definite.
		\end{enumerate}
	\end{assumpA}
	
	\begin{assumpA}\label{as:diff}
		There exists $L_3 > 0$ such that for all $\beta \in \mathcal{B}$,
		$$
		\E_{\p_{O}^r}[\lVert \nabla\ell(\beta; O) - \nabla\ell(\beta^*; O)\rVert^2]\le L_3\lVert\beta - \beta^*\rVert^2.
		$$
	\end{assumpA}
	
	Assumption \ref{as:L} is adopted by \citet{fang2018online} and it is weaker than the original assumptions made in \citet{polyak1992acceleration}. Note that the assumptions and inference can only be made on $\beta^*$ but we are interested in the inference of the true model parameter $\beta_0$, so the loss function should be chosen such that its minimizer $\beta^* = \beta_0$. Assuming the true parametric form of the reward model is known, the negative log-likelihood function is often a good choice to build the connection between $\beta^*$ and $\beta_0$. However, the procedure is valid as long as $\beta^* = \beta_0$ even if the likelihood is misspecified. For example, if we know $Y|A,X$ is normally distributed with constant variance $\sigma^2$, then its negative log-likelihood function $\{Y - \mu(A,X;\beta)\}^2/2\sigma^2 + constant$ suggests the quadratic loss $\{Y - \mu(A,X;\beta)\}^2/2$. However, the quadratic loss, which we refer to as a ``working negative log-likelihood", is still valid even if the normality and homoscedasticity assumptions are violated. As we will show in two examples that come after the main theorem, Assumptions \ref{as:ell} to \ref{as:diff} are satisfied and $\beta^*=\beta_0$ with the loss functions we choose. With these assumptions in place, we obtain the main theorem for the proposed estimators.
	
	\begin{theorem}\label{thm:paradist}
		If Assumptions \ref{as:ell} to \ref{as:diff} are met, the learning rate $\alpha_t = \alpha t^{-\gamma}$ with $\alpha > 0$ and $\gamma\in(0.5, 1)$ and the exploration rate $\varepsilon_t\to \varepsilon_\infty > 0$, then
		$$
		\sqrt{t}(\bar{\beta}_t - \beta^*)\overset{d}{\to}\mathcal{N}(0, H^{-1}S(H^{-1})^T),
		$$
		where
		$$
		S = \frac{1}{4}\int\left\{ \frac{\Sigma_1(\beta^*; X)}{\pi^*(X)} + \frac{\Sigma_0(\beta^*; X)}{1 - \pi^*(X)}\right\}d\p_X,
		$$
		$\Sigma_i(\beta; X) = \E[\nabla\ell(\beta; X, i, Y)\{\nabla\ell(\beta; X, i, Y)\}^T| X]$ for $i=0,1$ and $\pi^*(X) = (1 - \varepsilon_\infty) I\{\mu(1, X, \beta^*) > \mu(0, X, \beta^*)\} + \varepsilon_\infty/2$.
	\end{theorem}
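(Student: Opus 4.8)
\section*{Proof proposal}

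The plan is to cast the recursion into the form required by Theorem 2 of \citet{polyak1992acceleration} and reduce the whole argument to verifying that theorem's hypotheses in our decision-making setting. The update $\hat{\beta}_t = \hat{\beta}_{t-1} - \alpha_t\{R(\hat{\beta}_{t-1}) - \xi_t\}$ with $R = \nabla L$ is already a Robbins--Monro recursion, and the essential martingale-difference property $\E[\xi_t\mid\F_{t-1}] = 0$ follows immediately from (\ref{eq:cond2}) under Assumption \ref{as:ell}. What remains is to check four things: (i) $\hat{\beta}_t\to\beta^*$ almost surely; (ii) $R$ is locally linear at $\beta^*$ with slope $H$; (iii) the conditional noise covariance $\E[\xi_t\xi_t^T\mid\F_{t-1}]$ converges to $S$; and (iv) a Lindeberg-type tail condition on $\{\xi_t\}$ holds.

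For (i) I would run a Robbins--Siegmund supermartingale argument for stochastic approximation: convexity with a unique minimizer (Assumption \ref{as:L}.1), $L_1$-Lipschitzness of $\nabla L$ (Assumption \ref{as:L}.2), and the summability $\sum_t\alpha_t = \infty$, $\sum_t\alpha_t^2 < \infty$ (which holds precisely because $\gamma\in(0.5,1)$) give almost sure convergence once the conditional second moment of the IPW gradient is controlled. That control is available because the exploration rate keeps the propensity bounded away from $0$ and $1$: since $\varepsilon_t\to\varepsilon_\infty>0$, both $\pi_{t-1}(X)$ and $1-\pi_{t-1}(X)$ are bounded below, so the weights in (\ref{eq:grad}) are uniformly bounded and $\E[\lVert g(\hat{\beta}_{t-1};O_t)\rVert^2\mid\F_{t-1}]$ is dominated by a multiple of $\E_{\p_O^r}\lVert\nabla\ell\rVert^2$, which Assumption \ref{as:diff} bounds by a quadratic in $\lVert\hat{\beta}_{t-1}-\beta^*\rVert$. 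As $\bar{\beta}_t$ is the Ces\`aro average of the $\hat{\beta}_s$, it converges to $\beta^*$ as well, which forces $\pi_{t-1}(X)\to\pi^*(X)$ at every $X$ off the decision boundary $\{X:\mu(1,X,\beta^*)=\mu(0,X,\beta^*)\}$. Condition (ii) is immediate from Assumption \ref{as:L}.3: since $\nabla L(\beta^*)=0$ and $H(\beta)$ is $L_2$-Lipschitz at $\beta^*$, a Taylor expansion gives $R(\beta) = H(\beta-\beta^*) + O(\lVert\beta-\beta^*\rVert^2)$. For (iv), the uniform boundedness of the IPW weights together with the existence of $\Sigma(\beta)$ (Assumption \ref{as:L}.3) and the quadratic bound of Assumption \ref{as:diff} yields uniform integrability of $\{\lVert\xi_t\rVert^2\}$, supplying the conditional Lindeberg condition needed by the martingale central limit theorem.

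The computation of $S$ in (iii) is where the IPW construction and the block structure pay off, and it is the step I expect to demand the most care. Because $\nabla\ell(\beta;O)$ splits as $(1-A)\nabla\ell_0 + A\,\nabla\ell_1$, exactly one term of (\ref{eq:grad}) is active for each realized $A_t$, so conditioning on $X$ and averaging over $A\in\{0,1\}$ drawn under the policy distribution $\p_O^\pi$ gives
\begin{equation*}
\E[g g^T\mid X] = \frac{\Sigma_1(\hat{\beta}_{t-1};X)}{4\,\pi_{t-1}(X)} + \frac{\Sigma_0(\hat{\beta}_{t-1};X)}{4\,\{1-\pi_{t-1}(X)\}},
\end{equation*}
while the outer product $\nabla L(\hat{\beta}_{t-1})\nabla L(\hat{\beta}_{t-1})^T$ subtracted off in $\E[\xi_t\xi_t^T\mid\F_{t-1}]$ vanishes in the limit because $\nabla L(\beta^*)=0$. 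The obstacle is to promote this finite-$t$ identity to the convergence $\E[\xi_t\xi_t^T\mid\F_{t-1}]\to S$: I would combine $\hat{\beta}_{t-1}\to\beta^*$ with continuity of $\Sigma_i(\cdot;X)$ and the almost sure convergence $\pi_{t-1}(X)\to\pi^*(X)$ off the boundary, then invoke dominated convergence using the uniform boundedness of the weights and Assumption \ref{as:diff}. The only genuinely delicate point is the discontinuity of $\pi^*$ through the indicator $I\{\mu(1,X,\beta^*)>\mu(0,X,\beta^*)\}$: convergence of the integrand fails on the decision boundary, so I would argue that this set is $\p_X$-null (generic under a continuous feature distribution and smooth $\mu$) and hence contributes nothing to the integral. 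With (i)--(iv) established, Theorem 2 of \citet{polyak1992acceleration} delivers $\sqrt{t}(\bar{\beta}_t-\beta^*)\overset{d}{\to}\mathcal{N}(0, H^{-1}S(H^{-1})^T)$, completing the proof.
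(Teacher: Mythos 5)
Your skeleton---recasting the update as a Robbins--Monro recursion with noise $\xi_t = R(\hat{\beta}_{t-1}) - g(\hat{\beta}_{t-1};O_t)$ and reducing everything to the hypotheses of Theorem 2 of \citet{polyak1992acceleration}---is exactly the paper's strategy, and your conditional covariance identity $\E[g g^T\mid \F_{t-1}, X] = \Sigma_1(\hat{\beta}_{t-1};X)/\{4\pi_{t-1}(X)\} + \Sigma_0(\hat{\beta}_{t-1};X)/[4\{1-\pi_{t-1}(X)\}]$ is the same computation at the heart of the paper's proof. The genuine gap is in how you verify the noise conditions (iii) and (iv). The Polyak--Juditsky noise assumption (the paper's \ref{as:p4}) does not ask for covariance convergence and a Lindeberg condition for the \emph{full} noise $\xi_t$: it permits a decomposition $\xi_t = \xi_t^* + \zeta_t(\hat{\beta}_{t-1})$ in which only $\xi_t^*$ must satisfy those conditions, while $\zeta_t$ needs only the second-moment bound $\E(\lVert\zeta_t\rVert^2\mid\F_{t-1}) \le K_3\lVert\hat{\beta}_{t-1}-\beta^*\rVert^2$. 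The paper takes $\xi_t^* = -g(\beta^*;O_t)$, the noise frozen at the optimum: its conditional law depends on $\F_{t-1}$ only through the bounded IPW weights, so the covariance limit involves only $\Sigma_i(\beta^*;X)$, and the Lindeberg condition follows from dominated convergence against the single integrable function $\lVert\nabla\ell(\beta^*;O)\rVert^2$ under $\p_O^r$. You instead implicitly set $\zeta_t\equiv 0$, and that is where your step (iv) fails as stated: uniformly bounded conditional second moments do \emph{not} imply the required tail condition $\sup_{t}\E\{\lVert\xi_t\rVert^2 I(\lVert\xi_t\rVert>C)\mid\F_{t-1}\}\overset{p}{\to}0$ as $C\to\infty$. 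The only handle Assumption \ref{as:diff} gives on the $\hat{\beta}_{t-1}$-dependent part of $\xi_t$ is a bound on $\E\{\lVert\nabla\ell(\hat{\beta}_{t-1};O)-\nabla\ell(\beta^*;O)\rVert^2\}$ by $L_3\lVert\hat{\beta}_{t-1}-\beta^*\rVert^2$, which is independent of the truncation level $C$ and therefore cannot produce decay in $C$ uniformly over $t$; because the conditional laws of $\xi_t$ move with $\hat{\beta}_{t-1}$, there is no single dominating integrable function, and second-moment control is not tail control.

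The repair is precisely the decomposition you skipped: write $\xi_t = -g(\beta^*;O_t) + \zeta_t(\hat{\beta}_{t-1})$ with $\zeta_t = R(\hat{\beta}_{t-1}) - \{g(\hat{\beta}_{t-1};O_t)-g(\beta^*;O_t)\}$; then the Lipschitz condition in Assumption \ref{as:L} and Assumption \ref{as:diff} (with the weights bounded by $\varepsilon_\infty^{-1}$) give exactly the quadratic bound that \ref{as:p4} demands of $\zeta_t$, and no tail condition on $\zeta_t$ is needed at all. This also removes your other patch: once the covariance work is done at $\beta^*$, the limit requires only $\pi_{t-1}(X)\to\pi^*(X)$, not continuity of $\Sigma_i(\cdot;X)$ in $\beta$---a property that is not among the assumptions and would itself have to be derived from Assumption \ref{as:diff} via Cauchy--Schwarz. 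Two minor remarks: your Robbins--Siegmund route to almost sure convergence is a legitimate alternative to the paper's verification of \ref{as:p1}--\ref{as:p3} (which it borrows from \citet{fang2018online}), and your concern about the decision boundary $\{X:\mu(1,X,\beta^*)=\mu(0,X,\beta^*)\}$ is well placed---the paper's appeal to the continuous mapping theorem quietly requires that set to be $\p_X$-null, so flagging it explicitly is a point in your favor rather than a defect.
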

	
	The proof of Theorem \ref{thm:paradist} is provided in Appendix \ref{sec:proofparadist}. Basically, we first establish the almost sure convergence of $\bar{\beta}_t$. The conditional covariance matrix of $g(\beta_{t-1}, O_t)$ will then converge to $S$, which can be expressed as a weighted average of the covariance matrices of $\nabla\ell$ for different actions. The asymptotic normality then follows from Theorem 2 of \citet{polyak1992acceleration}.
	
	The learning rate $\alpha_t$ has two tuning parameters $\alpha$ and $\gamma$. Although $\gamma$ does not appear in the asymptotic results, it may still play an important role in the non-asymptotic analysis as shown by \cite{moulines2011non} in the i.i.d. cases. They proved that if the loss function $\tilde{L}$ is strongly-convex with minimizer $b_0$, then the average SGD estimator $\bar{b}_t = t^{-1}\sum_{s=1}^t\hat{b}_s$ satisfies $(\E \lVert\bar{b}_t - b_0\rVert^2)^{1/2} = O(t^{-1})$; and if $\tilde{L}$ is not strongly-convex but $\tilde{l}$ has bounded gradients $\nabla \tilde{l}(b)$ such as in the logistic regression case, then the bound on $\E \{\tilde{L}(\bar{b}_t) - \tilde{L}(b_0)\}$ is $O(t^{\gamma -1})$ when $\gamma\in(1/2, 1)$, suggesting setting $\gamma$ as close to $1/2$ as possible. Similar non-asymptotic bounds may also be derived in the online decision making setting to guide the choice of $\gamma$. For the numerical study below, we will set $\gamma = 0.501$ and then $\alpha$ can be tuned according to the loss.
	
	In order to provide statistical inference for the model parameter, we have to estimate the variance of $\bar{\beta}_t$, and the variance estimator should also be updated online without storing all historical data. One simple choice is the online plugin estimator considered by \citet{chen2016statistical} and \citet{fang2018online}. In our setting, the online plugin estimators for $S$ and $H$ are given by
	$$
	\hat{S}_t = \frac{1}{t}\sum_{s=1}^t g(\bar{\beta}_{s-1}; O_s)[g(\bar{\beta}_{s-1}; O_s)]^T
	$$
	and
	$$
	\hat{H}_t = \frac{1}{t}\sum_{s=1}^t \nabla^2\ell(\bar{\beta}_{s-1}; O_s)\left[\frac{I\{A_s = 1\}}{2\hat{\pi}_{s-1}(X_s)} + \frac{I\{A_s = 0\}}{2(1 - \hat{\pi}_{s-1}(X_s))}\right].
	$$
	It can be seen that we only need $\mathcal{O}(p^2)$ storage to calculate the Hessian and estimate the variance of $\bar{\beta}_t$. The consistency of the online plugin estimator can be established under the following additional assumption.
	
	\begin{assumpA}\label{as:var}
		Denote $f_H(\beta) = \nabla^2\ell(\beta; x, a, Y)$ and $f_S(\beta) = \nabla\ell(\beta; x, a, Y)\{\nabla\ell(\beta; x, a, Y)\}^T$. Then for any $x \in \R^p$, $a\in \mathcal{A}$, $\beta\in\mathcal{B}$, $v\in\R^{2p}$ and $\varkappa > 0$, there exist constants $C_H, C_S > 0$ such that
		\begin{gather*}
			P\{v^T f_H(\beta)v > \varkappa\} \le C_H P\{v^T f_H(\beta^*)v > \varkappa\},\\
			P\{v^T f_S(\beta)v > \varkappa\} \le C_S P\{v^T f_S(\beta^*)v > \varkappa\}.
		\end{gather*}
	\end{assumpA}
	
	Assumption \ref{as:var} ensures the summands of $\hat{H}_t$ and $\hat{S}_t$ are sufficiently close to stationary so that the martingale convergence result \citep[Theorem 2.19]{hall1980martingale} can apply. It holds trivially for cases where $f_H(\beta)$ and $f_S(\beta)$ are bounded for any $\beta$ or they do not involve $\beta$ at all. In general, we would have \ref{as:var} if $\mathcal{B}$ is a compact set. 
	
	\begin{theorem}\label{thm:paravar}
		If the conditions of Theorem \ref{thm:paradist} and Assumption \ref{as:var} are satisfied, then $\hat{S}_t \overset{p}{\to} S$ and $\hat{H}_t \overset{p}{\to} H$.
	\end{theorem}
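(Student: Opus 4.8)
The plan is to prove both limits by the same two-step scheme, treating $\hat S_t$ and $\hat H_t$ in parallel. Writing $D_s^S = g(\bar\beta_{s-1};O_s)\{g(\bar\beta_{s-1};O_s)\}^T$ and $D_s^H = \nabla^2\ell(\bar\beta_{s-1};O_s)\{I\{A_s=1\}/(2\hat{\pi}_{s-1}(X_s)) + I\{A_s=0\}/(2(1-\hat{\pi}_{s-1}(X_s)))\}$, so that $\hat S_t = t^{-1}\sum_{s=1}^t D_s^S$ and $\hat H_t = t^{-1}\sum_{s=1}^t D_s^H$, I would decompose each normalized sum into a predictable part plus a martingale-difference part, e.g.\ $\hat S_t = t^{-1}\sum_{s=1}^t(D_s^S - \E[D_s^S\mid\F_{s-1}]) + t^{-1}\sum_{s=1}^t \E[D_s^S\mid\F_{s-1}]$ and analogously for $\hat H_t$. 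Here $\bar\beta_{s-1}$ and $\hat{\pi}_{s-1}$ are $\F_{s-1}$-measurable, so the conditional expectations are well defined, and $\bullet\in\{S,H\}$ will denote either case. Since the argument below yields almost sure convergence, which implies the claimed convergence in probability, this suffices.

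First I would identify the predictable parts. Because $A_s$ is drawn $\mathrm{Bernoulli}(\pi_{s-1}(X_s))$ and the IPW weights are built to cancel this sampling law, a direct computation of $\E[\cdot\mid\F_{s-1},X_s]$ gives $\E[D_s^S\mid\F_{s-1}] = \int\{\Sigma_1(\bar\beta_{s-1};x)/(4\pi_{s-1}(x)) + \Sigma_0(\bar\beta_{s-1};x)/(4(1-\pi_{s-1}(x)))\}d\p_X(x)$ and, after the propensity factors cancel against the sampling probabilities, $\E[D_s^H\mid\F_{s-1}] = H(\bar\beta_{s-1})$. For $\hat H_t$ no propensity survives, so $H(\bar\beta_{s-1})\to H(\beta^*)=H$ a.s.\ follows at once from the a.s.\ convergence $\bar\beta_{s-1}\to\beta^*$ established in the proof of Theorem~\ref{thm:paradist} together with the continuity of $H(\cdot)$ at $\beta^*$ in Assumption~\ref{as:L}. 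For $\hat S_t$, since $\varepsilon_{s-1}\to\varepsilon_\infty>0$ keeps $\pi_{s-1}$ bounded away from $0$ and $1$, the integrand converges pointwise to $\Sigma_1(\beta^*;x)/(4\pi^*(x)) + \Sigma_0(\beta^*;x)/(4(1-\pi^*(x)))$ at every $x$ off the decision boundary, and Assumption~\ref{as:var} provides the $\p_X$-integrable dominating function $\Sigma_i(\beta^*;x)$ (through $\Sigma_i(\bar\beta_{s-1};x)\preceq C_S\Sigma_i(\beta^*;x)$) needed to pass the limit inside the $X$-integral by dominated convergence, giving $\E[D_s^S\mid\F_{s-1}]\to S$ a.s. In both cases the Toeplitz/Ces\`aro lemma then yields $t^{-1}\sum_{s=1}^t\E[D_s^\bullet\mid\F_{s-1}]\to H$ (resp.\ $S$) a.s.

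Next I would dispatch the martingale parts. The centered terms $M_s^\bullet = D_s^\bullet - \E[D_s^\bullet\mid\F_{s-1}]$ are matrix-valued martingale differences with respect to $\{\F_s\}$, so I would apply the martingale strong law (Theorem~2.19 of \citet{hall1980martingale}) entrywise; it is enough to verify $\sum_{s\ge1} s^{-2}\,\E[\lVert M_s^\bullet\rVert^2 \mid \F_{s-1}] < \infty$ a.s. This is exactly where Assumption~\ref{as:var} is indispensable: because the summands are evaluated at the moving random point $\bar\beta_{s-1}$ rather than at $\beta^*$, its stochastic-domination bounds let me control the conditional second moments of $f_S(\bar\beta_{s-1})$ and $f_H(\bar\beta_{s-1})$ uniformly in $s$ by the corresponding finite quantities at $\beta^*$. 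Combined with the boundedness of the IPW weights (again from $\varepsilon_{s-1}\to\varepsilon_\infty>0$), this makes $\E[\lVert M_s^\bullet\rVert^2\mid\F_{s-1}]$ uniformly bounded, so $\sum_s s^{-2}(\text{bounded})<\infty$ and the martingale averages converge to $0$ a.s. Adding the two parts gives $\hat S_t\to S$ and $\hat H_t\to H$ a.s., hence in probability.

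The main obstacle is the non-stationarity created by evaluating every summand at the running estimate $\bar\beta_{s-1}$, which blocks any direct appeal to an i.i.d.\ or stationary law of large numbers; it is resolved by pairing the a.s.\ convergence $\bar\beta_{s-1}\to\beta^*$ from Theorem~\ref{thm:paradist} with the stochastic domination of Assumption~\ref{as:var}, the latter doing double duty as the dominating function in the predictable part and as the uniform second-moment bound in the martingale part. A secondary subtlety, present only for $\hat S_t$ since the propensity does not cancel there, is the discontinuity of $\pi^*$ at the decision boundary $\{\mu(1,X,\beta^*)=\mu(0,X,\beta^*)\}$: the dominated-convergence step requires this boundary to carry $\p_X$-measure zero, which is implicitly the condition that makes $S$ in Theorem~\ref{thm:paradist} well defined.
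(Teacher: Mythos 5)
Your overall architecture matches the paper's: the same decomposition of $\hat{S}_t$ and $\hat{H}_t$ into a predictable average plus a martingale-difference average, the same use of $\bar{\beta}_{s-1}\to\beta^*$ a.s.\ together with Assumption \ref{as:var} to handle the predictable part, and an appeal to Hall--Heyde for the martingale part. The gap is in how you dispatch the martingale part. You invoke a second-moment strong law, checking $\sum_s s^{-2}\E[\lVert M_s^\bullet\rVert^2\mid\F_{s-1}]<\infty$, and claim that Assumption \ref{as:var} ``controls the conditional second moments of $f_S(\bar\beta_{s-1})$ and $f_H(\bar\beta_{s-1})$ \ldots by the corresponding finite quantities at $\beta^*$.'' But those quantities at $\beta^*$ are not finite under the paper's assumptions: $\E[(v^Tf_S(\beta^*)v)^2]$ is a \emph{fourth} moment of $\nabla\ell(\beta^*;O)$, and $\E[(v^Tf_H(\beta^*)v)^2]$ is a \emph{second} moment of $\nabla^2\ell(\beta^*;O)$, whereas Assumption \ref{as:L} only guarantees existence of $\Sigma(\beta^*)=\E[\nabla\ell\,\nabla\ell^T]$ (second moments of the gradient) and of $H(\beta^*)=\E[\nabla^2\ell]$ (first moments of the Hessian). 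Assumption \ref{as:var} can only \emph{transfer} moments from $\bar\beta_{s-1}$ to $\beta^*$; it cannot create the higher-order moments your Chow-type SLLN needs. So the step ``$\E[\lVert M_s^\bullet\rVert^2\mid\F_{s-1}]$ is uniformly bounded'' is unsupported, and the martingale part of your argument does not go through as written (nor, consequently, does the claimed almost-sure convergence).

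The paper avoids exactly this by using Hall--Heyde's Theorem 2.19 for what it actually is: a \emph{weak} law of large numbers for martingale-difference arrays under stochastic domination by an \emph{integrable} random variable. One verifies, via Assumption \ref{as:var} and the bound $0<W_s\le\varepsilon_\infty^{-1}$, tail bounds of the form $P\{W_s\, v^T\nabla^2\ell(\bar\beta_{s-1};O_s)v>\varkappa\}\le 2C_H P(|R_H|>\varkappa)$ with $R_H=\varepsilon_\infty^{-1}v^T\nabla^2\ell(\beta^*;O)v$, and similarly for $f_S$ with $R_S=\varepsilon_\infty^{-2}v^T\nabla\ell(\beta^*;O)\{\nabla\ell(\beta^*;O)\}^Tv$; the only moment requirement is $\E|R_H|<\infty$ and $\E|R_S|<\infty$, which is precisely what Assumption \ref{as:L} delivers. (The extra factor $2$ comes from comparing the $\varepsilon$-greedy sampling probability of $A_s$ with the reference probability $1/2$ appearing in Assumption \ref{as:var}.) This yields convergence in probability of the martingale average to zero with no fourth-moment (or Hessian second-moment) hypotheses. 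Your treatment of the predictable part, including the dominated-convergence argument with $\Sigma_i(\bar\beta_{s-1};x)\preceq C_S\Sigma_i(\beta^*;x)$ and the remark about the decision boundary carrying $\p_X$-measure zero, is sound and in fact spells out details the paper leaves implicit; if you replace your SLLN step by the domination-based weak law (conceding convergence in probability rather than a.s.), the proof is repaired and coincides with the paper's.
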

	
	We consider the following two motivating examples to illustrate our model setting.
	
	\begin{example}[Linear reward model]\label{ex:linear} 
		Assume the true conditional mean reward function $\mu(A, X; \beta_0)$ takes the following linear form
		\begin{equation}\label{eq:u}
			u(A, X; \beta_0) = (1-A)X^T\beta_{0,[1:p]} + AX^T\beta_{0,[p+1:2p]}.
		\end{equation}
		The true reward $Y$ is generated by $\mu(A, X, \beta_0) + E$, where $E$ is a random error with mean zero and variance $\sigma^2$, and it is independent of $A$ and $X$. We 
		consider the quadratic loss
		\begin{equation*}
			\ell(\beta; O) = \frac{1}{2}\{Y - u(A, X; \beta)\}^2.
		\end{equation*}
		The expected loss is then
		\begin{align*}
			L(\beta) = &\frac{1}{4}(\beta - \beta_0)^T\{I_2\otimes \E(XX^T)\}(\beta - \beta_0) + \frac{\sigma^2}{2}, 
		\end{align*}
		where $I_2$ is the $2\times 2$ identity matrix and $\otimes$ is the Kronecker product. It is obvious that $L(\beta)$ is convex and its minimizer is $\beta^*=\beta_0$. The loss function is twice differentiable in $\beta$ with the first derivative $\nabla\ell(\beta; O) = \{u(A, X; \beta)- Y\}\nabla u(A, X, \beta)$ and the second derivative $\nabla^2\ell(\beta; O) = \nabla u(A, X, \beta)\{\nabla u(A, X, \beta)\}^T$, where $\nabla u(A, X, \beta) = ((1-A)X^T, AX^T)^T$. Given $\nabla\ell(\beta; O)$, the update rule can be easily derived from (\ref{eq:grad}). Note that $\nabla L(\beta) = 4^{-1}I_2\otimes \E(XX^T)(\beta - \beta_0)$ and $\nabla^2 L(\beta) = 4^{-1}I_2\otimes \E(XX^T)$, so Assumptions \ref{as:ell} to \ref{as:diff} can be easily verified.
	\end{example}
	
	\begin{example}[Logistic reward model]
		When the outcomes are binary, such as clicking or not in the news article recommendation example, the most simple and popular statistical model we would fit is the logistic model. Assume the true conditional mean reward function is given by 
		\begin{equation}\label{eq:logistic}
			\mu(A, X; \beta_0) = P(Y = 1 | A, X; \beta_0) = \frac{1}{1+e^{-u(A, X; \beta_0)}},
		\end{equation}
		where $u$ is the same linear function in (\ref{eq:u}). 
		We consider the cross entropy loss function, which is also the negative log-likelihood function
		\begin{equation*}
			\ell(\beta; O) = -Y\log\mu(A, X; \beta) - (1 - Y)\log\{1-\mu(A, X; \beta)\}.
		\end{equation*}
		Its first derivative is $\nabla\ell(\beta; O) = \{\mu(A, X; \beta) - Y\}\nabla u(A, X; \beta)$ and second derivative is $\nabla^2 \ell(\beta; O) = \{\mu(A, X; \beta) - Y\}^2\nabla u(A, X; \beta)\{\nabla u(A, X; \beta)\}^T$. The expected loss is 
		$$
		L(\beta) = -\frac{1}{2}\sum_{i\in\{0,1\}}\E[\log\{1-\mu(i, X; \beta)\} + \mu(i, X; \beta_0)u(i, X; \beta)].
		$$ 
		Therefore $\nabla L(\beta) = -2^{-1}\sum_{i\in\{0,1\}}\E[\{\mu(i, X; \beta) - \mu(i, X; \beta_0)\}\nabla u(i, X; \beta)] = \E_{\p_O^r}\nabla\ell(\beta; O)$ and $\nabla^2 L(\beta) = \E_{\p_O^r}\nabla^2\ell(\beta; O)$ is positive definite. The unique minimizer of $L(\beta)$ is $\beta^* = \beta_0$. Other assumptions can be easily verified.
	\end{example}
	
	
	\section{Value Inference}\label{sec:value}
	The value of a given decision rule $d$ is defined as 
	\begin{equation}
		V(d) = \E[\E\{Y|d(X), X\}],
	\end{equation}
	which is $\int \mu(d(X), X; \beta_0) d\p_X$ if the model is correctly specified. We are often interested in knowing the value of the optimal decision rule $V(d^{opt})$ and its estimation in the offline setting has been extensively studied \citep[see][for an introduction to some of the most popular estimation methods]{tsiatis2019introduction}. However, if we have a large amount of streaming data, estimating $V(d^{opt})$ using all the historical data at each observance of new data is inefficient in general and impossible in our algorithm since we do not store all the data. Therefore we must find a way to recursively update the value estimator at each step. Here we consider the inverse probability weighted estimator proposed by \cite{zhang2012robust} and extend it to an online version. Recall the estimated optimal decision rule is  $\hat{d}^{opt}_t(X) = I\{\mu(1, X; \bar{\beta}_t) > \mu(0, X; \bar{\beta}_t)\}$. Define the decision consistency indicator $C_t = I\{A_t = \hat{d}^{opt}_{t-1}(X_t)\}$ and the propensity for decision consistency $\pi_{C, t} = P(C_t = 1 | \F_{t-1}, X_t)$. The online inverse probability weighted estimator for $V(d^{opt})$ is then
	\begin{equation}\label{eq:valueest}
		\hat{V}_t(d^{opt}) = \frac{1}{t}\sum_{s=1}^t \frac{C_s Y_s}{\pi_{C, s}} = \frac{1}{t}\frac{C_t Y_t}{\pi_{C, t}} + \frac{t-1}{t}\hat{V}_{t-1}(d^{opt}).
	\end{equation}
	In our setting, the propensity for decision consistency $\pi_{C, t}$ is known to be $1-\varepsilon_t/2$ so we do not have to estimate it. By construction, the value estimator can be updated online without the storage of previous data. Intuitively, inverse probability weighting corrects the bias from the random exploration and hence $C_tY_t/\pi_{C,t}$ is an unbiased estimator of $V(\hat{d}^{opt}_t)$. After each update, the value estimator in (\ref{eq:valueest}) becomes closer to the true value of $\hat{d}^{opt}_t$, which goes to $V(d^{opt})$ since the estimated decision rule converges to $d^{opt}$. Therefore the online IPW estimator for $V(d^{opt})$ is consistent. Furthermore, its asymptotic normality can be shown under two extra assumptions.
	
	\begin{assumpA}\label{as:data}
		The features vector $X$ satisfies $\E \lVert X \rVert < \infty$. The second moment of reward exists for any given covariates and action, that is, 
		$$
		\theta^2(A, X) = \E(Y^2|A, X) <\infty.
		$$
	\end{assumpA}
	
	\begin{assumpA}\label{as:margin}
		There exists $C > 0$ and $\tau > 0$, such that for $X\sim\mathcal{P}_X$,
		$$
		P(0 < |\mu(1, X, \beta_0) - \mu(0, X, \beta_0)| \le \rho) \le C \rho^\tau, \; \forall \rho > 0.
		$$
	\end{assumpA}
	
	Assumption \ref{as:data} is a mild condition on the boundedness of the observed data. Assumption \ref{as:margin} is a margin condition originating from the classification literature \citep{audibert2007fast} and its stronger version where $\tau = 1$ is adopted by \cite{goldenshluger2013linear} and \cite{bastani2015online} to control the complexity of the contextual bandit problem. We should note that the margin assumption is not needed to establish the consistency of the IPW value estimator but it can lead to a clear and estimable asymptotic variance of the value estimator. Intuitively, when the feature $X$ lies near the decision boundary $\mu(1, X, \beta_0) = \mu(0, X, \beta_0)$, we are often unable to detect the difference in expected rewards and tend to make wrong decisions which bring more variation to the value estimator. Assumption \ref{as:margin} makes value inference easier by restricting the probability of observing such features, and the following results are obtained with its help. 
	
	\begin{theorem}\label{thm:valdist}
		If Assumptions \ref{as:ell}, \ref{as:L}, \ref{as:diff}, \ref{as:data} and \ref{as:margin} are met, the learning rate $\alpha_t = \alpha t^{-\gamma}$ with $\alpha > 0$ and $\gamma\in(1/2, 1)$ and the exploration rate $\varepsilon_t\to \varepsilon_\infty > 0$, then
		\begin{equation*}
			\sqrt{t}\{\hat{V}_t(d^{opt})-V(d^{opt})\}\overset{d}{\to} \mathcal{N}(0,\eta^2),
		\end{equation*}
		where 
		\begin{equation*}
			\eta^2 = \frac{2}{2-\varepsilon_\infty}\int\theta^2(d^{opt}(X), X)d\p_X-\{V(d^{opt})\}^2.
		\end{equation*}
	\end{theorem}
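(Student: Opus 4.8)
The plan is to decompose $\sqrt t\{\hat V_t(d^{opt})-V(d^{opt})\}$ into a martingale fluctuation term, to which a martingale central limit theorem applies, and a bias term controlled through the margin assumption. Write $Z_s=C_sY_s/\pi_{C,s}$. Conditioning on $\F_{s-1}$ and $X_s$, the rule $\hat d^{opt}_{s-1}$ is measurable and $A_s$ equals $\hat d^{opt}_{s-1}(X_s)$ with probability $\pi_{C,s}=1-\varepsilon_s/2$; since $C_s^2=C_s$ and $\E(Y\mid A,X)=\mu(A,X;\beta_0)$, $\E(Y^2\mid A,X)=\theta^2(A,X)$, one gets
$$
\E[Z_s\mid\F_{s-1}]=V(\hat d^{opt}_{s-1}),\qquad \E[Z_s^2\mid\F_{s-1}]=\frac{1}{\pi_{C,s}}\int\theta^2(\hat d^{opt}_{s-1}(X),X)\,d\p_X.
$$
Setting $M_s=Z_s-V(\hat d^{opt}_{s-1})$, so that $\{M_s\}$ is a martingale difference sequence with respect to $\{\F_s\}$, this yields
$$
\sqrt t\{\hat V_t(d^{opt})-V(d^{opt})\}=\frac{1}{\sqrt t}\sum_{s=1}^t M_s+\frac{1}{\sqrt t}\sum_{s=1}^t\{V(\hat d^{opt}_{s-1})-V(d^{opt})\}.
$$

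For the fluctuation term I would invoke the martingale central limit theorem \citep{hall1980martingale}, which requires the normalized conditional variances to converge and a conditional Lindeberg condition. Here $\E[M_s^2\mid\F_{s-1}]=\pi_{C,s}^{-1}\int\theta^2(\hat d^{opt}_{s-1}(X),X)\,d\p_X-\{V(\hat d^{opt}_{s-1})\}^2$. Because $\bar\beta_{s-1}\to\beta^*=\beta_0$ almost surely (the a.s.\ convergence established en route to Theorem~\ref{thm:paradist}), $\hat d^{opt}_{s-1}(X)$ agrees with $d^{opt}(X)$ off the decision boundary, so that Assumption~\ref{as:margin} and dominated convergence give $\int\theta^2(\hat d^{opt}_{s-1}(X),X)d\p_X\to\int\theta^2(d^{opt}(X),X)d\p_X$ and $V(\hat d^{opt}_{s-1})\to V(d^{opt})$. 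Since $\pi_{C,s}\to 1-\varepsilon_\infty/2$, every conditional variance converges to $\eta^2$, and a Toeplitz (Cesàro) argument yields $t^{-1}\sum_{s=1}^t\E[M_s^2\mid\F_{s-1}]\overset{p}{\to}\eta^2$. The weight $\pi_{C,s}\in(1/2,1)$ is bounded away from zero, so $|Z_s|\le 2|Y_s|$ and the Lindeberg condition reduces to uniform integrability of $\{Y_s^2\}$, which follows from the stationarity of $\p_{Y\mid X,A}$ and the integrability of $\theta^2$ granted by Assumption~\ref{as:data}. Hence $t^{-1/2}\sum_{s=1}^t M_s\overset{d}{\to}\mathcal N(0,\eta^2)$.

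The main obstacle is showing the bias term vanishes, that is, $t^{-1/2}\sum_{s=1}^t\{V(\hat d^{opt}_{s-1})-V(d^{opt})\}\overset{p}{\to}0$. The integrand $\mu(\hat d^{opt}_{s-1}(X),X;\beta_0)-\mu(d^{opt}(X),X;\beta_0)$ is supported on the disagreement set $\{\hat d^{opt}_{s-1}(X)\ne d^{opt}(X)\}$, where its magnitude equals the margin $|\mu(1,X;\beta_0)-\mu(0,X;\beta_0)|$; a sign flip forces this margin to be dominated by the estimation perturbation, which by Lipschitz continuity of $\mu$ in $\beta$ is of order $\|X\|\,\|\bar\beta_{s-1}-\beta_0\|$. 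Bounding the $\p_X$-measure of small margins through Assumption~\ref{as:margin} and using $\E\|X\|<\infty$ from Assumption~\ref{as:data} gives a pointwise estimate $\E|V(\hat d^{opt}_{s-1})-V(d^{opt})|\lesssim(\E\|\bar\beta_{s-1}-\beta_0\|^2)^{(1+\tau)/2}$. Feeding in the mean-square rate $\E\|\bar\beta_{s-1}-\beta_0\|^2=O(s^{-1})$ produces a summand of order $s^{-(1+\tau)/2}$, and since $t^{-1/2}\sum_{s=1}^t s^{-(1+\tau)/2}\to 0$ for every $\tau>0$, Markov's inequality makes the bias term negligible. Slutsky's theorem then combines the two pieces into the stated limit $\mathcal N(0,\eta^2)$.

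The two steps I expect to demand the most care are, first, converting the sign-flip event into an integrable margin bound when the Lipschitz factor $\|X\|$ is unbounded, which requires splitting on the size of $\|X\|$ and balancing against Assumption~\ref{as:margin}; and second, upgrading the $O_p(s^{-1/2})$ control implicit in Theorem~\ref{thm:paradist} to the mean-square bound $\E\|\bar\beta_s-\beta_0\|^2=O(s^{-1})$, which must come from the non-asymptotic SGD analysis rather than from the asymptotic normality statement itself.
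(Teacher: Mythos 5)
Your decomposition, the computation of the conditional mean and variance of $C_sY_s/\pi_{C,s}$, and the martingale CLT step all match the paper's proof. The genuine gap is in your treatment of the bias term $t^{-1/2}\sum_{s=1}^t\{V(\hat{d}^{opt}_{s-1})-V(d^{opt})\}$: your argument hinges on the non-asymptotic mean-square bound $\E\lVert\bar{\beta}_{s}-\beta_0\rVert^2=O(s^{-1})$, which you correctly flag as unproven but then use anyway. This bound is not available here. Theorem \ref{thm:paradist} gives only convergence in distribution of $\sqrt{t}(\bar{\beta}_t-\beta^*)$, hence $\lVert\bar{\beta}_{s}-\beta_0\rVert=O_P(s^{-1/2})$; convergence in distribution controls no moments at all, and a sequence can be $O_P(s^{-1/2})$ while $\E\lVert\bar{\beta}_s-\beta_0\rVert^2$ is infinite for every $s$. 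Nor can the bound be imported from the non-asymptotic SGD literature: under Assumption \ref{as:L} the expected loss is only convex with a positive-definite Hessian \emph{at} $\beta^*$, and the $O(t^{-1})$ mean-square rate for averaged SGD (e.g., Moulines and Bach) requires global strong convexity, which is exactly what the paper's assumptions avoid. So the chain ``pointwise bound on $\E|V(\hat{d}^{opt}_{s-1})-V(d^{opt})|$, sum, Markov'' cannot be completed as stated.

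The paper circumvents precisely this obstacle by keeping everything in probability rather than in expectation. It first rewrites the bias term as an integral of $\{\hat{d}^{opt}_{s-1}(X)-d^{opt}(X)\}\delta(X;\beta_0)$ against $\p_X$, bounds it by a quantity involving $\mathcal{I}_{s-1}(X)\Delta_{s-1}(X)$, and then splits the feature space at the data-driven margin threshold $|\delta(X;\beta_0)|\le t^{-1/4}$ versus $|\delta(X;\beta_0)|> t^{-1/4}$ (your split is at a per-step optimized $\rho$, which is what forces you into moment bounds). On the small-margin piece, Assumption \ref{as:margin} gives a factor $t^{-\tau/4}$ and only a \emph{first} power of $\lVert\bar{\beta}_{s-1}-\beta_0\rVert$ remains; on the large-margin piece, the sign-flip inequality $\mathcal{I}_{s-1}(X)\le I\{|\Delta_{s-1}(X)|>|\delta(X;\beta_0)|\}$ yields a factor $t^{1/4}$ against $\lVert\bar{\beta}_{s-1}-\beta_0\rVert^2$. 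Both pieces are then handled with the in-probability rates $\lVert\bar{\beta}_{s-1}-\beta_0\rVert=o_P(s^{\tau/4-1/2})$ and $\lVert\bar{\beta}_{s-1}-\beta_0\rVert^2=o_P(s^{-3/4})$, and the Ces\`{a}ro averaging of these $o_P$ sequences is justified by Lemma 6 of \citet{luedtke2016statistical} --- this lemma is the device that replaces your missing moment bound. If you want to salvage your own route, you would need either to add an assumption guaranteeing the MSE rate (e.g., global strong convexity plus noise moment conditions), or to restructure the bias control along the paper's in-probability lines.
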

	The variance of $\hat{V}_t(d^{opt})$ can also be estimated using the online plugin estimator,
	\begin{equation*}
		\hat{\eta}_t^2 = \frac{2}{2-\varepsilon_t}\frac{1}{t}\sum_{s=1}^t \frac{C_s Y_s^2}{\pi_{C, s}} -\{\hat{V}_t(d^{opt})\}^2,
	\end{equation*}
	and the variance estimator is consistent.
	\begin{theorem}\label{thm:valvar}
		Under the same conditions of Theorem \ref{thm:valdist}, $\hat{\eta}_t^2 \overset{p}{\to} \eta^2$.
	\end{theorem}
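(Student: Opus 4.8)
The plan is to reduce the claim to a law of large numbers for the weighted squared rewards and then assemble the pieces by Slutsky's theorem. Write $M_s = C_s Y_s^2/\pi_{C,s}$, so that the first term of $\hat\eta_t^2$ equals $\frac{2}{2-\varepsilon_t}\cdot\frac1t\sum_{s=1}^t M_s$. Since $\varepsilon_t\to\varepsilon_\infty$ we have $\frac{2}{2-\varepsilon_t}\to\frac{2}{2-\varepsilon_\infty}$, and Theorem \ref{thm:valdist} already yields $\hat V_t(d^{opt})\overset{p}{\to} V(d^{opt})$ (convergence at rate $\sqrt t$ implies consistency), hence $\{\hat V_t(d^{opt})\}^2\overset{p}{\to}\{V(d^{opt})\}^2$ by the continuous mapping theorem. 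Therefore it suffices to prove $\frac1t\sum_{s=1}^t M_s\overset{p}{\to}\bar\theta^2_\infty$, where $\bar\theta^2_\infty:=\int\theta^2(d^{opt}(x),x)d\p_X$; the conclusion $\hat\eta_t^2\overset{p}{\to}\eta^2$ then follows because $\frac{2}{2-\varepsilon_\infty}\bar\theta^2_\infty-\{V(d^{opt})\}^2=\eta^2$.

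First I would identify the conditional mean of $M_s$. Conditioning on $\F_{s-1}$ and $X_s$ fixes the greedy action $\hat d^{opt}_{s-1}(X_s)$, and the $\varepsilon$-greedy policy takes this action with probability $\pi_{C,s}$ while the conditional law of $Y_s$ given $(A_s,X_s)$ is time-homogeneous; hence $\E[M_s\mid\F_{s-1},X_s]=\theta^2(\hat d^{opt}_{s-1}(X_s),X_s)$ and, integrating the independent $X_s$, $g_s:=\E[M_s\mid\F_{s-1}]=\int\theta^2(\hat d^{opt}_{s-1}(x),x)d\p_X$. I then split $\frac1t\sum_{s=1}^t M_s=\frac1t\sum_{s=1}^t g_s+\frac1t\sum_{s=1}^t(M_s-g_s)$ and treat the two pieces separately.

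For the drift term $\frac1t\sum_{s=1}^t g_s$, I would show $g_s\to\bar\theta^2_\infty$ almost surely and conclude by Cesàro averaging. The difference $g_s-\bar\theta^2_\infty$ is supported on the disagreement set $\{x:\hat d^{opt}_{s-1}(x)\ne d^{opt}(x)\}$, where the integrand is dominated by the $\p_X$-integrable envelope $\theta^2(0,x)+\theta^2(1,x)$. Because $\bar\beta_{s-1}\to\beta_0$ almost surely (established in the proof of Theorem \ref{thm:paradist}) and $\mu$ is continuous in $\beta$, the disagreement indicator vanishes pointwise off the boundary $\{\mu(1,x,\beta_0)=\mu(0,x,\beta_0)\}$, whose near-neighborhood is controlled by the margin condition in Assumption \ref{as:margin}; dominated convergence then gives $g_s\to\bar\theta^2_\infty$. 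This is exactly the convergence already carried out (up to the factor $\pi_{C,s}^{-1}$) when identifying the limiting conditional variance in the proof of Theorem \ref{thm:valdist}, so it can be reused.

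The martingale remainder $\frac1t\sum_{s=1}^t(M_s-g_s)$ is where the main difficulty lies, and it is a moment issue: Assumption \ref{as:data} supplies only the second moment of $Y$, so $M_s$ is merely integrable and need not lie in $L^2$, ruling out the usual $L^2$ martingale law of large numbers. I would therefore argue by truncation. Fix $K>0$ and write $M_s=M_s I\{M_s\le K\}+M_s I\{M_s>K\}$. The centred bounded part is a martingale-difference average with summands bounded by $2K$, so its second moment is $O(K/t)\to0$ and it vanishes in probability for each fixed $K$. The centred tail part is bounded in $L^1$ by $2\sup_s\E[M_s I\{M_s>K\}]$, and this supremum tends to $0$ as $K\to\infty$ by uniform integrability of $\{M_s\}$, which holds since $\pi_{C,s}$ is bounded away from $0$, the conditional law of $Y$ is fixed, and $\int\{\theta^2(0,x)+\theta^2(1,x)\}d\p_X<\infty$. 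Letting $t\to\infty$ and then $K\to\infty$ shows $\frac1t\sum_{s=1}^t(M_s-g_s)\overset{p}{\to}0$. Combining the two pieces gives $\frac1t\sum_{s=1}^t M_s\overset{p}{\to}\bar\theta^2_\infty$, completing the proof. The only genuine obstacle is this last step: establishing the law of large numbers for the heavy-tailed weighted squares $M_s$ from second-moment information alone, for which the truncation-plus-uniform-integrability argument, rather than an $L^2$ bound, is essential.
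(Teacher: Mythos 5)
Your proposal is correct and its skeleton coincides with the paper's proof: the same Slutsky-type reduction (consistency of $\hat V_t(d^{opt})$ follows from Theorem \ref{thm:valdist}, and $2/(2-\varepsilon_t)\to 2/(2-\varepsilon_\infty)$), the same identification $\E(C_sY_s^2/\pi_{C,s}\mid\F_{s-1})=\int\theta^2(\hat d^{opt}_{s-1}(X),X)\,d\p_X$, and the same split of $t^{-1}\sum_s C_sY_s^2/\pi_{C,s}$ into a drift term, handled by continuity in $\bar\beta_{s-1}$ plus Ces\`aro averaging (the paper's Lemma \ref{lem:trivial}), and a centred martingale remainder. The one place you genuinely depart is the remainder: you prove the needed first-moment martingale weak law by hand, via truncation at level $K$ plus uniform integrability of $M_s=C_sY_s^2/\pi_{C,s}$ (valid since $\pi_{C,s}=1-\varepsilon_s/2\ge 1/2$ gives $M_s\le 2Y_s^2$), whereas the paper outsources exactly this step to Theorem 2.19 of \citet{hall1980martingale} --- the stochastic-domination weak law already invoked for Theorem \ref{thm:paravar} --- with the domination hypothesis verified by the same bound $M_s\le 2Y_s^2$. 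Since that cited theorem is itself proved by truncation, your argument is essentially an inlined, self-contained version of the paper's citation: it costs length but buys independence from the reference, and it makes explicit the key difficulty you correctly isolate, namely that $M_s$ has only first moments so an $L^2$ martingale law is unavailable. Two minor remarks: the second moment of the truncated centred average is $O(K^2/t)$, not $O(K/t)$ (immaterial, as it still vanishes for fixed $K$); and your dominated-convergence step for the drift implicitly treats the exact boundary $\{x:\mu(1,x;\beta_0)=\mu(0,x;\beta_0)\}$ as $\p_X$-null --- on that set $\theta^2(1,\cdot)$ and $\theta^2(0,\cdot)$ need not agree even though the means do, and Assumption \ref{as:margin} only controls the punctured neighborhood $0<|\delta|\le\rho$ --- but this is a gap shared with, not created by, your proof, since the paper's appeal to continuity of the drift in $\bar\beta_{s-1}$ rests on the same tacit assumption.
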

	
	The proofs of Theorem \ref{thm:valdist} and \ref{thm:valvar} are provided in Appendix \ref{sec:proofvaldist}.
	
	\section{Numerical Studies}\label{sec:numerical}
	In this section, we carry out finite sample experiments to test the SGD online decision making algorithm, investigate the performance of the parameter and value estimators and their variance estimators, and compare it with other estimation methods\footnote{Code for the numerical studies is at \url{https://github.com/ideechy/Online-Decision-Making}.}. We consider the linear and logistic reward model settings that are discussed in Section \ref{sec:para}. For both models, the feature vector $X_t$ is set to $(1, X_{t,2}, X_{t,3})$, where $X_{t,2}$ and $X_{t,3}$ are generated independently from the standard normal distribution, so $p=3$ in our experiments. The true parameter is $\beta_0=(0.3, -0.1, 0.7, 0.8, 0.5, -0.4)^T$. The random error $E$ in the linear reward model is generated from the normal distribution with mean zero and variance $\sigma^2 = 0.01$. We also consider a linear reward model with the same configuration but a bigger noise ($\sigma^2 = 0.25$). The results are similar to those of the linear model with $\sigma^2 = 0.01$ and are shown in Appendix \ref{sec:extension}.
	
	We experiment with three different exploration rates. The first two are fixed rates with $\varepsilon_t = 0.1$ and $0.2$, and the other one is a decreasing rate $\varepsilon_t = t^{-0.3}\vee 0.1$. We use the first $50$ samples for pure exploration as a burn-in period, so $\varepsilon_t = 1$ for $t\le 50$ for all exploration rates. The learning rate is set as $\alpha_t = \alpha t^{-0.501}$ and $\alpha$ is tuned by comparing the loss $\ell(\bar{\beta}_{t-1}; O_t)$ for each setting.
	
	\subsection{Parameter and value estimation}\label{sec:simulation}
	
	\begin{figure}[!htbp]
	    \centering
	    \begin{subfigure}{0.75\textwidth}
	        \centering
    	    \includegraphics[width=\linewidth]{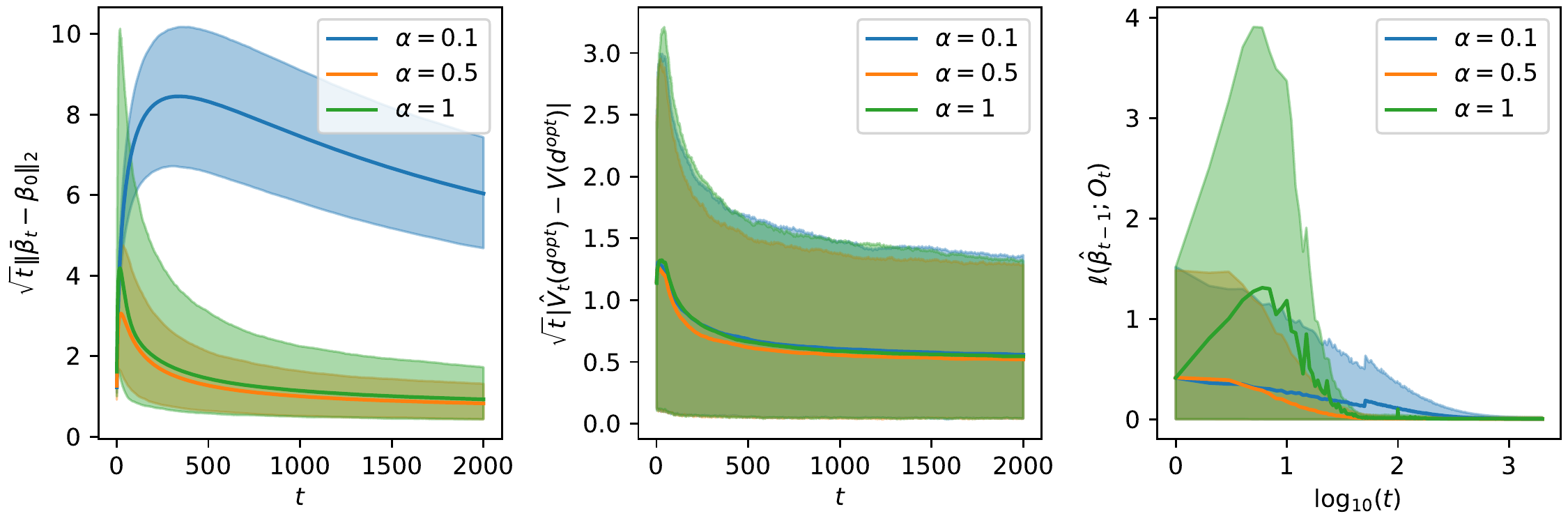}
    	    \caption{Linear reward model}
    	\end{subfigure}%
    	\\
    	\begin{subfigure}{0.75\textwidth}
	        \centering
    	    \includegraphics[width=\linewidth]{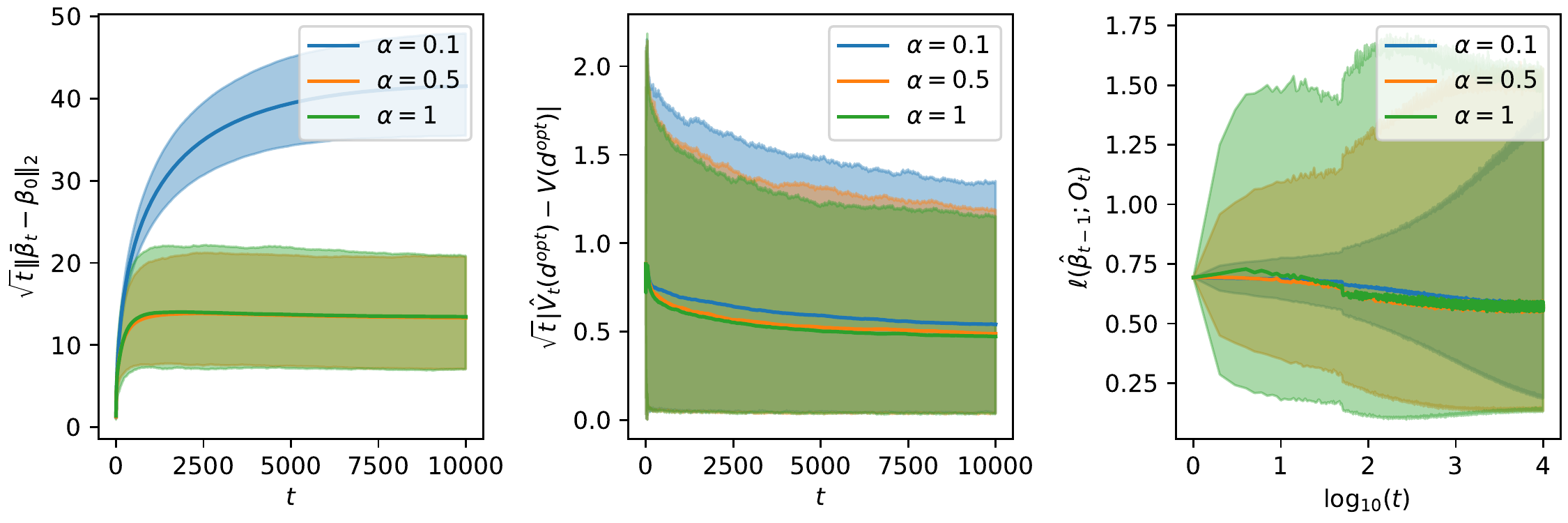}
    	    \caption{Logistic reward model}
    	\end{subfigure}
	    \caption{Performance of the online decision making algorithm with different learning rates. The exploration rate is $\varepsilon_t = 0.2$. All experiments are repeated 5000 times. The solid lines are mean outcomes and the shaded regions are bounded by 5\% and 95\% percentiles of the outcomes.}
	    \label{fig:tuning_e2}
	\end{figure}
	
	Figure \ref{fig:tuning_e2} compares the estimation results and the loss for $\alpha = 0.1, 0.5$ and $1$ in finite $t$ when $\varepsilon_t = 0.2$. The results for the other exploration rates are similar and shown in the Appendix. The first two columns show the effect of $\alpha$ on the bias of the parameter and value estimators. The true value under the optimal decision rule $V(d^{opt})$ is calculated as the mean reward of $10^6$ i.i.d. users following the oracle decision (\ref{eq:oracle}). It can be seen that the value estimation is not sensitive to the choice of $\alpha$ but the non-asymptotic performance of the parameter estimator does depend on it. A bigger $\alpha$ will lead to higher variance since the step size is bigger. However, when $\alpha$ is too small, such as $0.1$ in the example, the parameter estimator will take longer to converge. Therefore a good choice of $\alpha$ should achieve balance between the bias and the variance. In practice, we can choose $\alpha$ by running several experiments with different values of $\alpha$ and comparing the loss. As shown in the last column of Figure \ref{fig:tuning_e2}, $\alpha = 0.5$ achieves the lowest average loss for both reward model settings when $\varepsilon_t = 0.2$ and it is the same for the other exploration rates. Therefore we will use $\alpha = 0.5$ for the experiments below.

	\begin{figure}[!htbp]
		\centering
		\begin{subfigure}{0.5\textwidth}
			\centering
			\includegraphics[width=\linewidth]{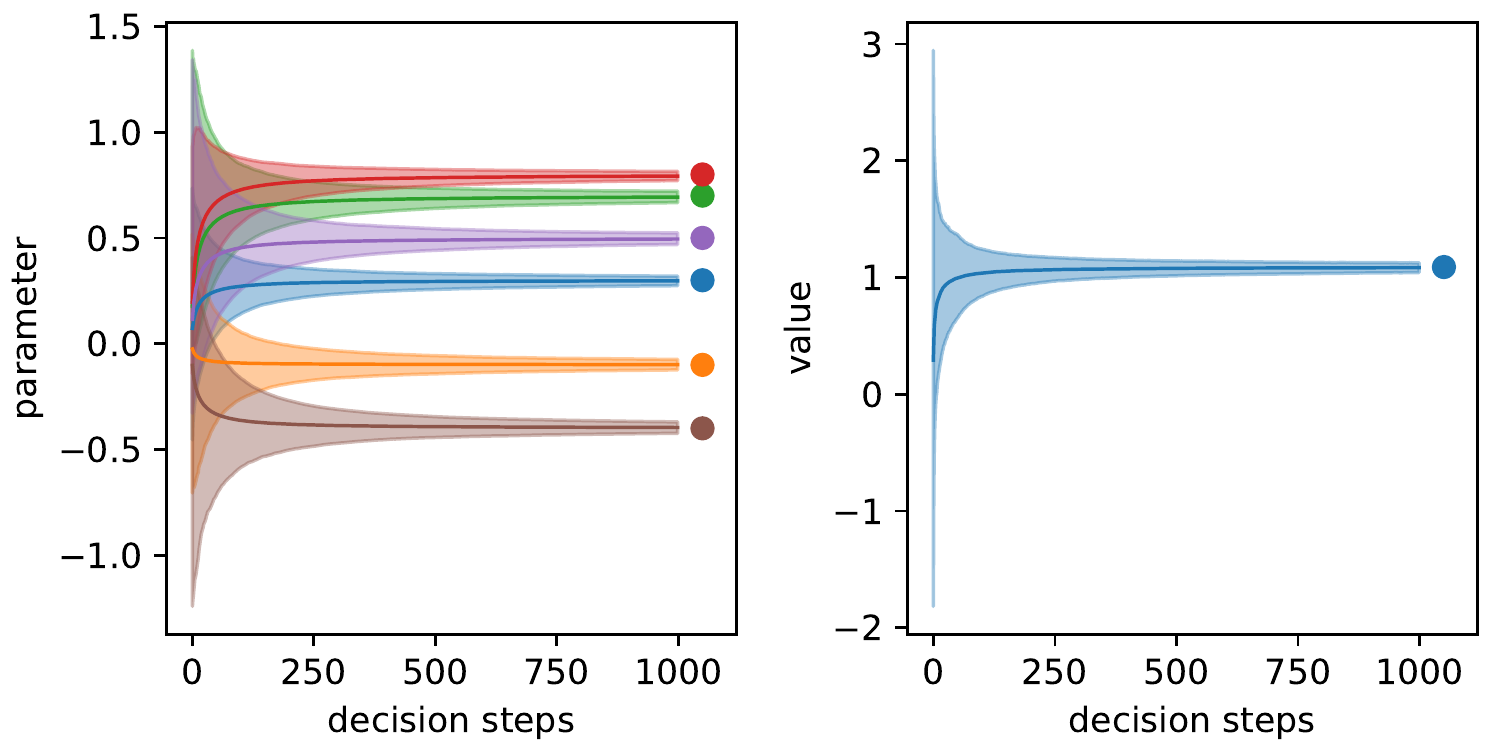}
			\caption{Linear reward model}
		\end{subfigure}%
		\begin{subfigure}{0.5\textwidth}
			\centering
			\includegraphics[width=\linewidth]{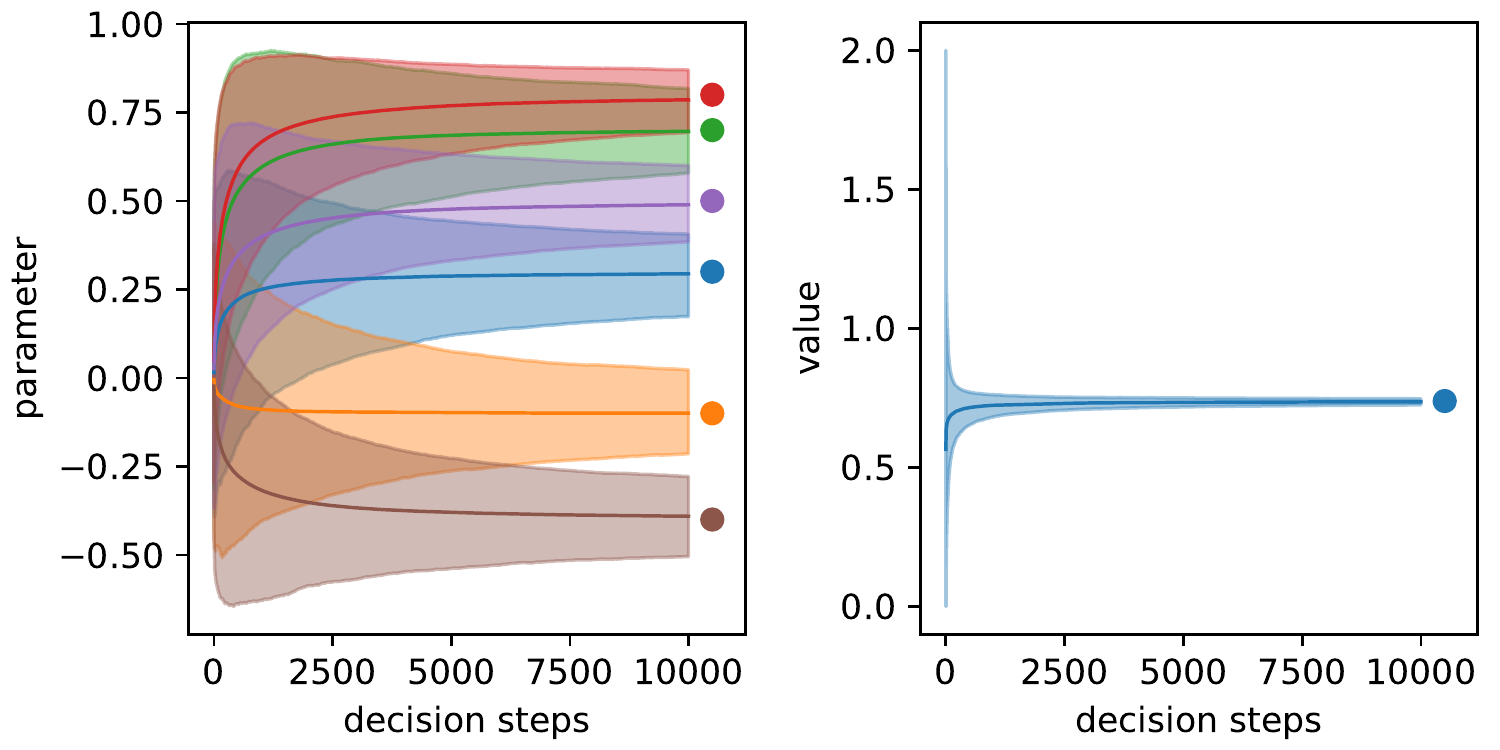}
			\caption{Logistic reward model}
		\end{subfigure}
		\caption{{Parameter and optimal value estimation from 5000 repeated experiments following the proposed SGD method with IPW gradients. The learning rate is $\alpha_t= 0.5t^{-0.501}$ and the exploration rate is $\varepsilon_t = 0.2$.} The solid lines are mean estimates and the shaded regions are bounded by 2.5\% and 97.5\% percentiles of the estimates. The points at the end of the lines mark the true value.}
		\label{fig:consist}
	\end{figure}
	
	As shown in Figure \ref{fig:tuning_e2}, the parameter and value estimators both converge to the truth for different choices of $\alpha$'s. Figure \ref{fig:consist} further illustrates the convergence of the parameter and value estimators when choosing $\alpha_t = 0.5t^{-0.501}$ and $\varepsilon_t = 0.2$. The results are similar for the other exploration rates. The parameter estimator of the linear reward model converges faster than that of the logistic reward model because the noise scale for the linear model is relatively small ($\sigma^2 = 0.01$). In the logistic setting, the value estimator converges much faster than the parameter estimator. This means the model parameter has a limited effect on the decision rule in our setting. Even though the parameter estimate is far from the truth in the early stage, the decision based on it is already the same as that based on the true parameter. In fact, if the reward model is linear in $\beta$, then $\beta$ and $k\beta$ ($k>0$) will result in the same decision. This simple fact also holds for generalized linear models, e.g., the logistic model we used here. When all parameter estimators converge at the same rate, they are very close to $k\beta_0$ for some $k > 0$ if the initial estimator $\hat{\beta}_0$ is set to zero, then chances are good that the decision based on them will coincide with the optimal decision. 
	
	\begin{figure}[!htbp]
		\centering
		\begin{subfigure}{0.5\textwidth}
			\centering
			\includegraphics[width=\linewidth]{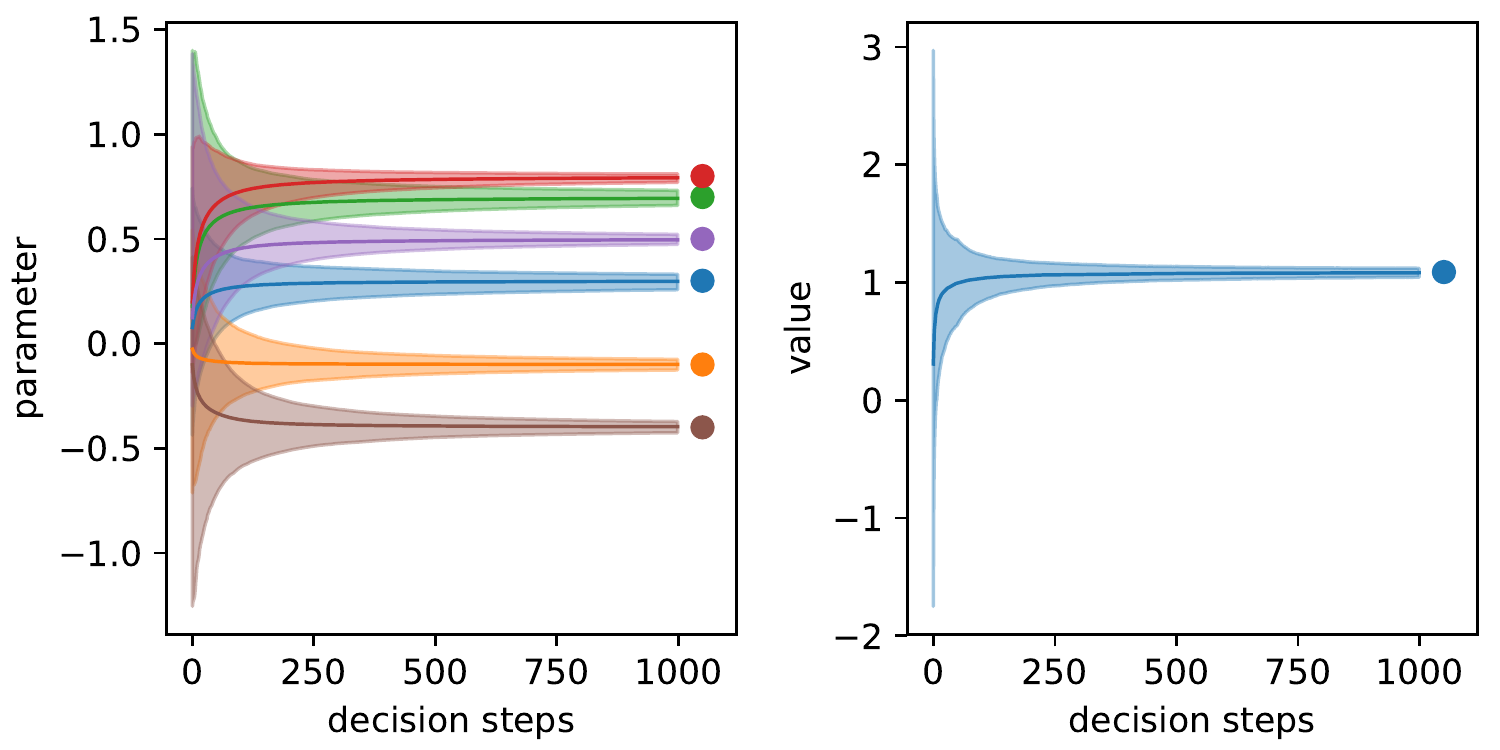}
			\caption{Linear reward model}
		\end{subfigure}%
		\begin{subfigure}{0.5\textwidth}
			\centering
			\includegraphics[width=\linewidth]{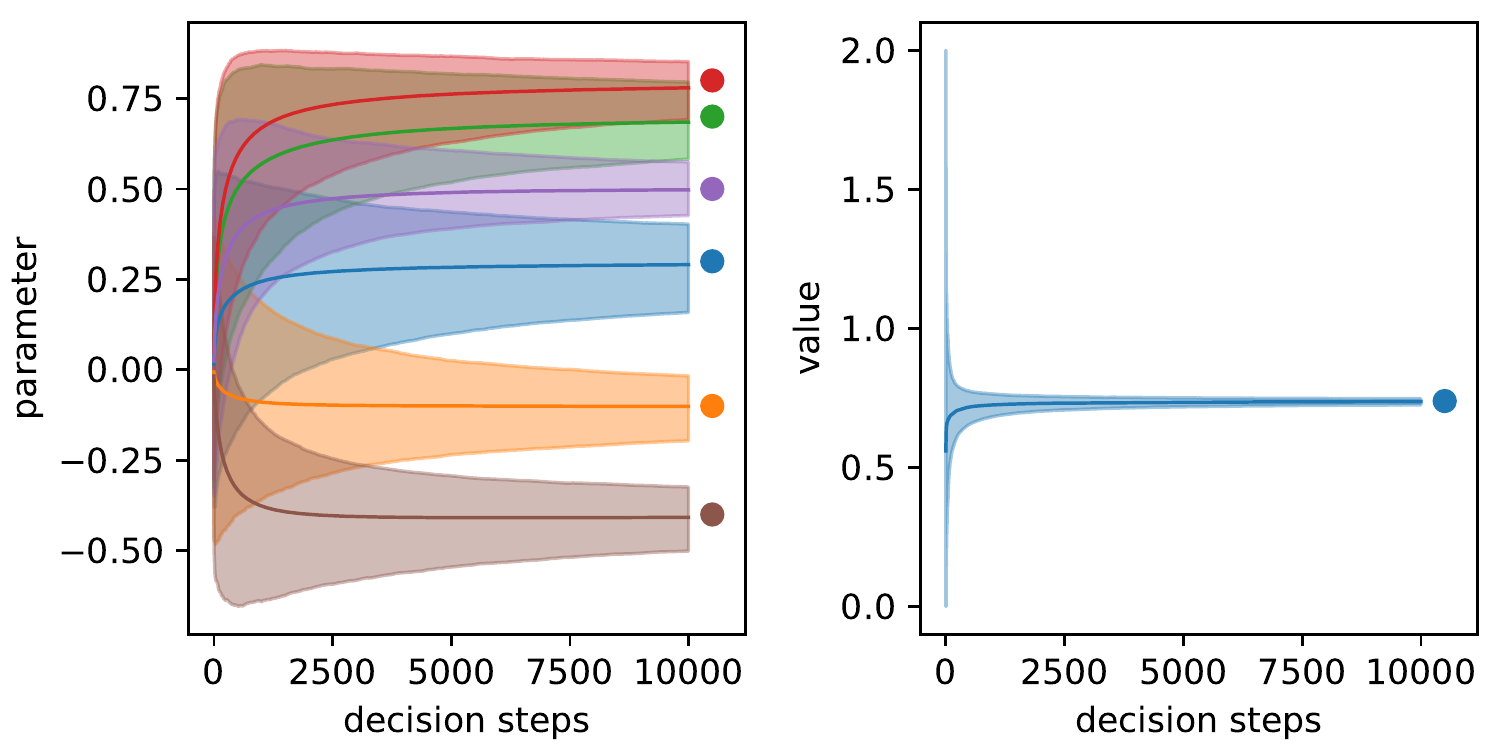}
			\caption{Logistic reward model}
		\end{subfigure}
		\caption{Parameter and optimal value estimation from 5000 repeated experiments following the conventional SGD method. The learning rate is $\alpha_t= 0.5t^{-0.501}$ and the exploration rate is $\varepsilon_t = 0.2$. The solid lines are mean estimates and the shaded regions are bounded by 2.5\% and 97.5\% percentiles of the estimates. The points at the end of the lines mark the true value.}
		\label{fig:consist_nowt}
	\end{figure}
	
	In practice, the conventional SGD method might also be used to estimate the reward model parameters. That is, $\hat{\beta}_t$ is updated using the Robbins-Monro rule in (\ref{eq:sgd}) without the IPW adjustment of the gradient. Intuitively, the conventional SGD parameter estimator will have a lower variance than the proposed estimator because our updating rule may occasionally take big steps due to the inverse probability weighting. 
	However, it is hard to derive the asymptotic distribution of the conventional SGD parameter estimator as discussed in Section \ref{sec:algo2}, and thus we can only study its performance empirically and compare it with our proposed estimator. To this end, we also conduct experiments for the conventional method under the same settings and the results for $\varepsilon_t = 0.2$ are shown in Figure \ref{fig:consist_nowt}. The results for other exploration rates are similar and are shown in the Appendix. It can be seen that both the parameter and value estimators from the conventional method converge to the truth and the quantiles of the value estimators are very similar to the results shown in Figure \ref{fig:consist}. For the linear reward model setting, the quantiles of the parameter estimators from the two methods are almost the same. For the logistic reward model setting, the parameter estimators from the conventional method have smaller standard deviations than those from our method, which agrees with our intuition, but they are still of the same scale. Therefore, our modification of the parameter estimator updating rule only has a minor effect on the efficiency of the parameter estimator.
	
    \subsection{Variance estimation}\label{sec:variance}
	The finite sample properties of the online plugin variance estimator are shown in Figure \ref{fig:exploration}. We repeat the experiment 5000 times for each reward model setting and each exploration rate type and plot the results for sample size $t = 10^3$, $10^4$ and $10^5$. For each setting, the first plot shows the ratio of the averaged estimated standard errors, which are calculated from the plugin variance estimators given in the main theorems, to the standard deviation of the 5000 estimates. The ratio should be close to one if the plugin estimate approximates the true variance well. The second plot shows the coverage probability of the 95\% Wald confidence interval and it should be around 0.95 if the distributions of the parameter and value estimators are approximately normal. The Monte Carlo standard error for a coverage probability around $0.9$ is about $0.004$ for 5000 repetitions. The third plot shows the average length of the 95\% Wald confidence interval of the parameter or value. All results for the parameter estimation are averaged across the six parameters for a clear presentation and the original results are available in Appendix \ref{sec:table}.
	
	\begin{figure}[!htbp]
	    \centering
	    \begin{subfigure}{0.75\textwidth}
	        \centering
    	    \includegraphics[width=\linewidth]{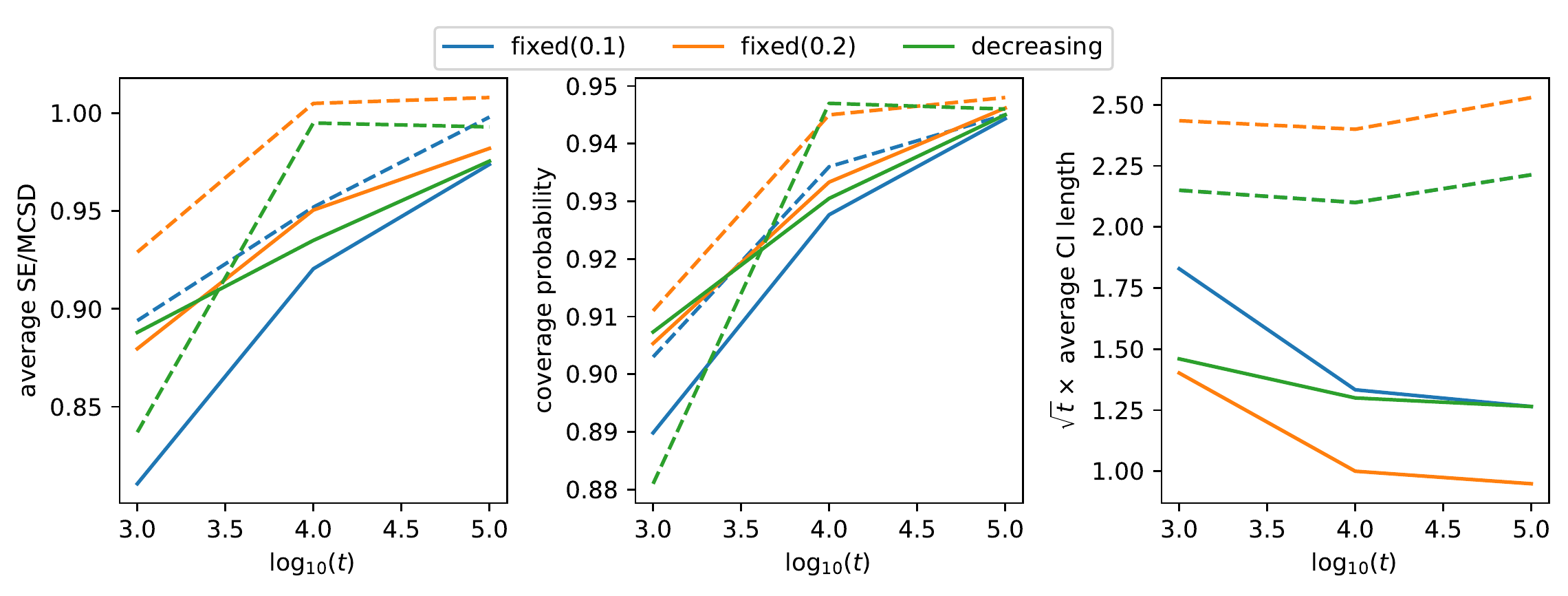}
    	    \caption{Linear reward model}
    	\end{subfigure}%
    	\\
    	\begin{subfigure}{0.75\textwidth}
	        \centering
    	    \includegraphics[width=\linewidth]{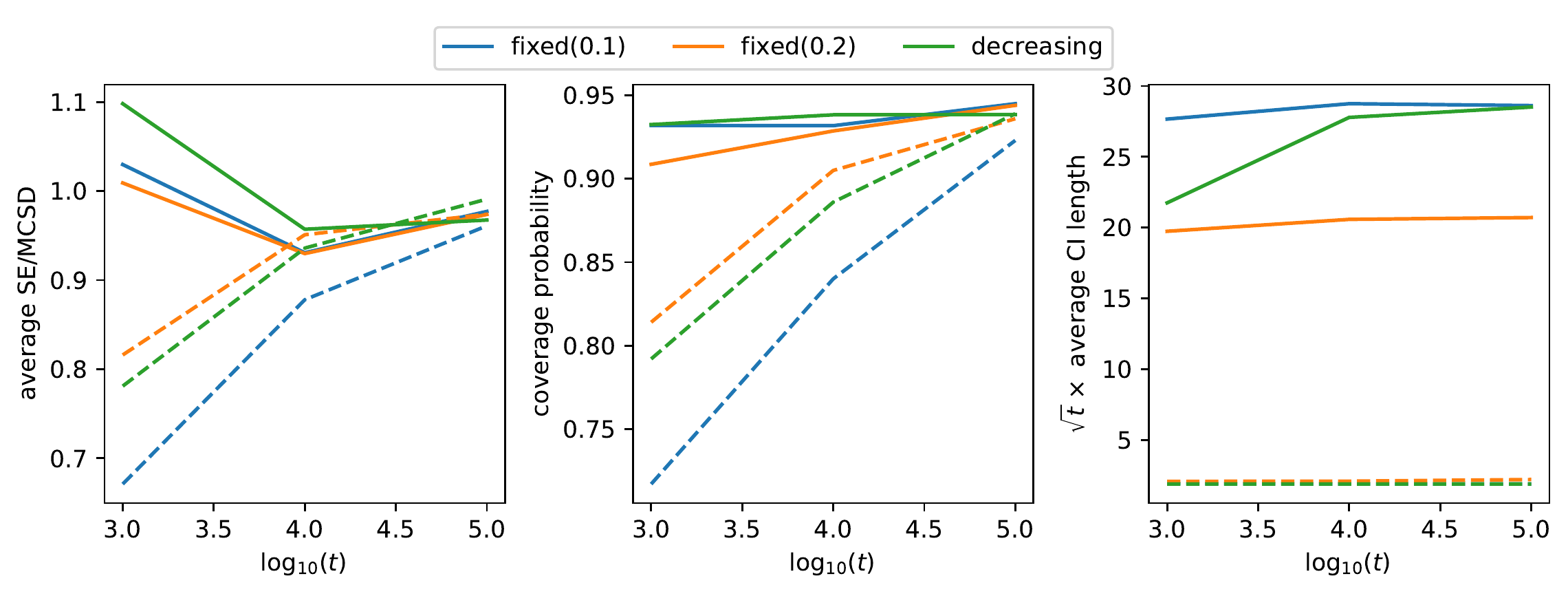}
    	    \caption{Logistic reward model}
    	\end{subfigure}
	    \caption{Online plugin variance estimation for the parameter and value estimators with different exploration rates. The learning rate is $\alpha_t = 0.5t^{-0.501}$. The solid lines are the average results of the six parameters and the dashed lines are the results for the value estimation.}
	    \label{fig:exploration}
	\end{figure}
	
	In the linear setting, the finite sample distributions of the parameter and value estimators are close to the asymptotic distributions after $10^4$ decision steps for all three types of exploration rates. The advantage of having more exploration ($\varepsilon_t = 0.2$ versus $\varepsilon_t = 0.1$) is more clear when the sample size is small because the linear model estimators converge fast. The decreasing exploration rate with limit $\varepsilon_\infty = 0.1$ can outperform the fixed rate with the same limit in terms of parameter estimation. But there is not much gain from exploring more for the value estimation since the estimated decision rule is very close to the optimal rule. In the logistic setting where estimators converge more slowly, using $\varepsilon_t = 0.2$ is always better than $\varepsilon_t = 0.1$ for the sample size we consider. The decreasing exploration rate can accelerate the parameter estimator convergence in the early stage and hence benefit the estimation of the optimal value and its variance.
		
	According to Theorem \ref{thm:paradist}, the asymptotic variance of $\bar{\beta}_t$ is roughly of the order of $(t\varepsilon_\infty)^{-1/2}$. This is supported by the plots in the third column of Figure \ref{fig:exploration} where the average length of the 95\% confidence interval for $\beta_0$ is of the order of $(t\varepsilon_t)^{-1/2}$. Given the sample size and a desired length of the confidence interval of $\beta_0$, we can apply this result to choose $\varepsilon_\infty$ in real applications. If the sample size is big enough, we should choose $\varepsilon_\infty$ as small as possible to achieve higher cumulative rewards. It can also be seen that the choice of exploration rate has little effect on the variance of the value estimator, which agrees with Theorem \ref{thm:valdist}.
	
	These simulation results validate our asymptotic properties provided in the main theorems. In practice, however, many other factors besides the learning and exploration rates can affect the actual convergence speed although the parameter and value estimators both converge at $t^{1/2}$ rate. In the linear reward setting, for example, more samples are required for the estimators to achieve the same performance if we increase the noise $\sigma^2$ or the dimension $p$ and keep other factors fixed. 
	
	We also compare the plugin variance estimation method with the batch-means method and resampling method proposed by \citet{chen2016statistical} and \citet{fang2018online} respectively. As shown in Figure \ref{fig:valueest}, both methods cannot outperform the simple online plugin estimator under our settings. The underestimation problem of the batch-means method is more serious in online decision making because it neglects the correlation brought by the data dependence structure. The resampling method tends to overestimate the variance in finite samples. Theoretical justifications for transferring these methods to online decision making are not studied and it will be an interesting future work to improve these methods for online decision making.
	
	\begin{figure}[!htbp]
		\centering
		\begin{subfigure}{0.5\textwidth}
			\centering
			\includegraphics[width=\linewidth]{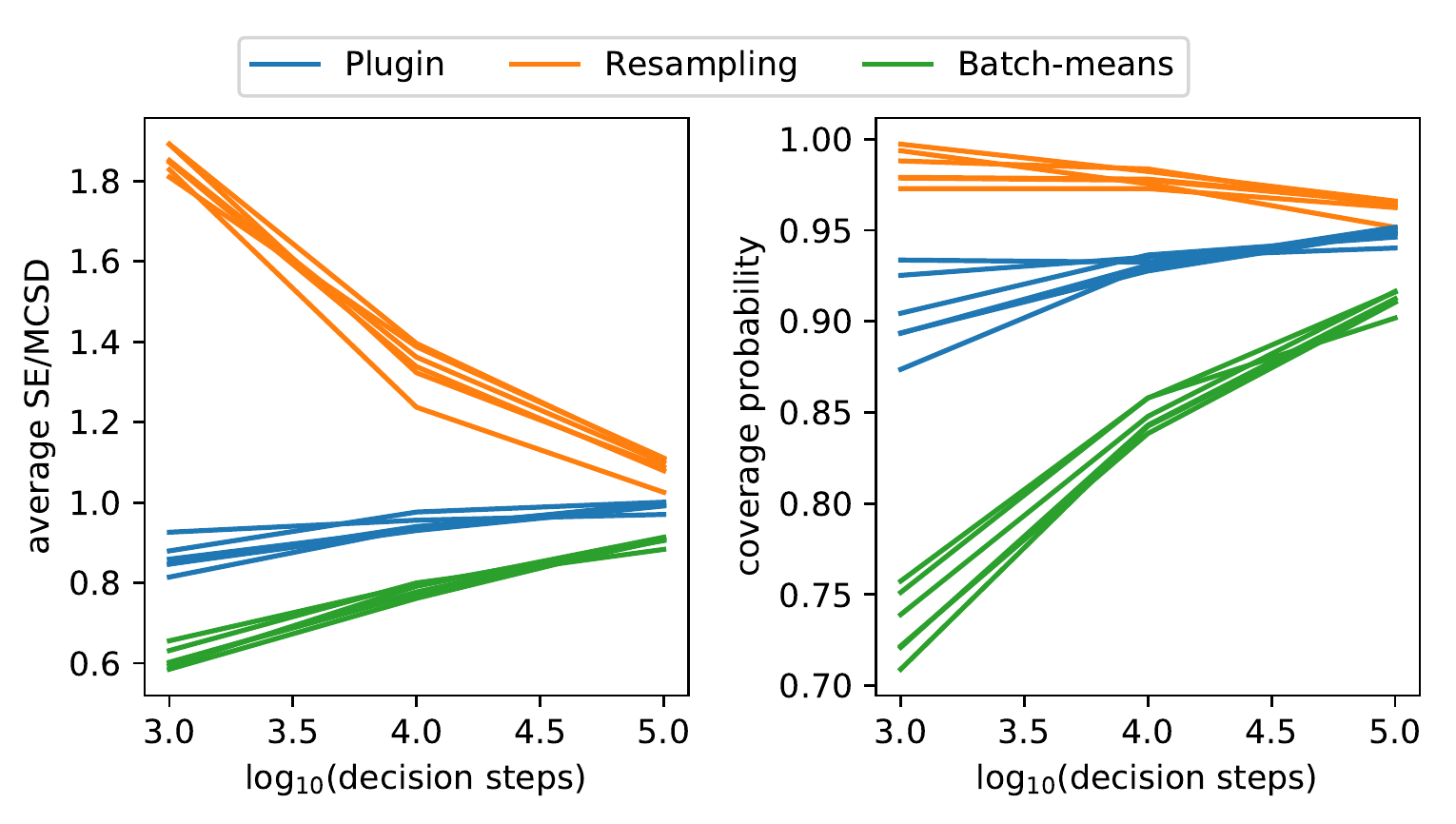}
			\caption{Linear reward model}
		\end{subfigure}%
		\begin{subfigure}{0.5\textwidth}
			\centering
			\includegraphics[width=\linewidth]{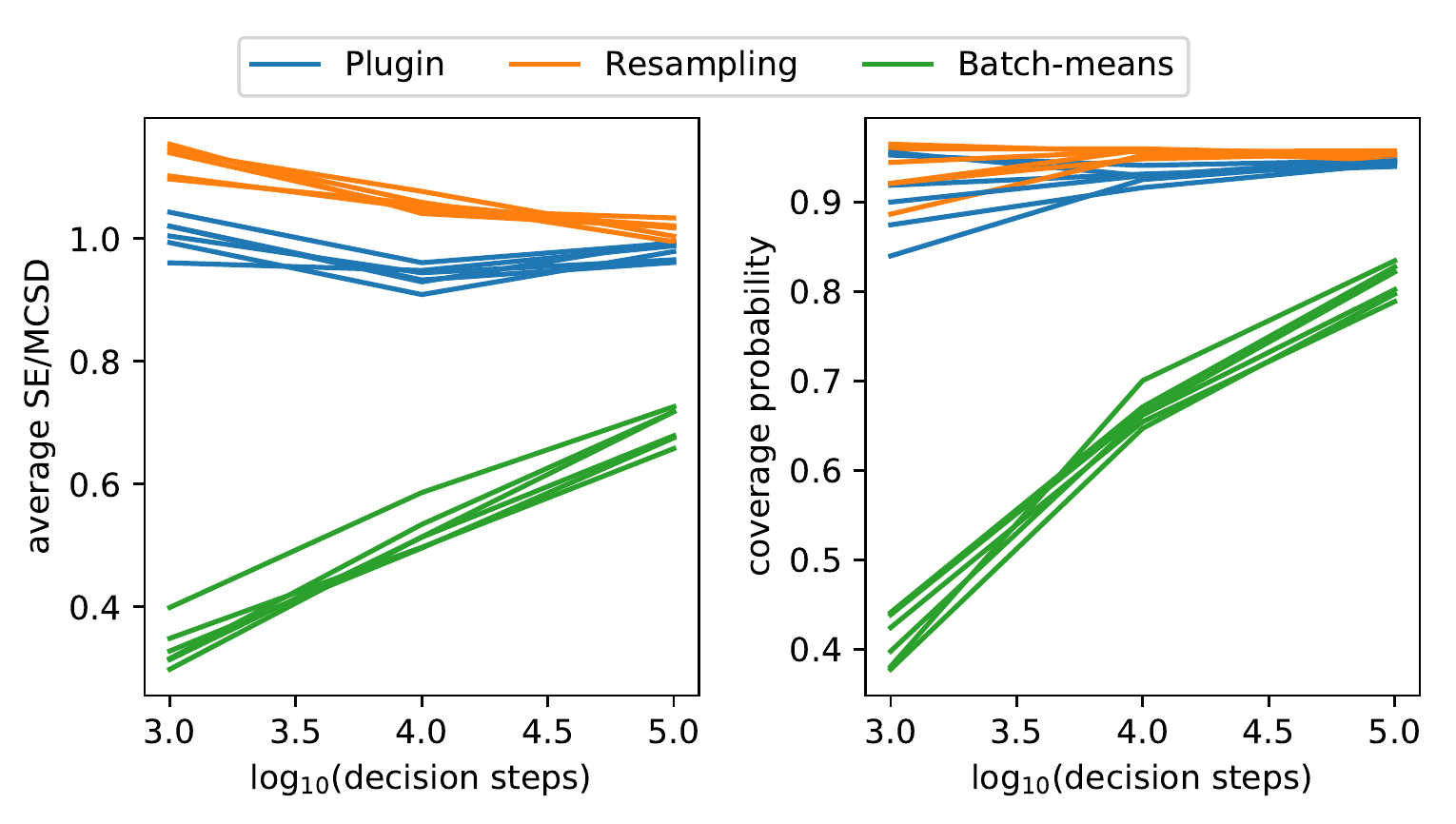}
			\caption{Logistic reward model}
		\end{subfigure}
		\caption{Comparison of variance estimation methods. The average standard error to Monte Carlo standard deviation and coverage probability are calculated from 5000 repeated experiments following the proposed SGD method with IPW gradients. The learning rate is $\alpha_t= 0.5t^{-0.501}$ and the exploration rate is $\varepsilon_t = 0.2$.}
		\label{fig:valueest}
	\end{figure}
	
	\section{Real Data Analysis}\label{sec:real}
	In this section, we apply our algorithm to the Yahoo! Today module user click log data and provide statistical inference for the reward model parameters and the optimal value. The dataset records the news article recommendation and user response from May 1st to May 10th, 2009. We choose the two most recommended article on May 1st, No.109510 and No.109520 for analysis so that the action space is binary. We code $A_t = 1$ for recommending No.109510 and $0$ for the other article. The articles were recommended randomly in the original experiment and the two together were recommended 405888 times on May 1st. The reward is coded as $Y_t = 1$ for clicking on the article link and $0$ for not clicking. The raw user features such as demographic, geographic and behavioral information are processed using a dimension reduction procedure described in \cite{li2010contextual} and the final user features have six continuous covariates between zero and one. The first five sums to one and the sixth one is always one. So we keep the second to the fifth covariates as $X_{t,2}$ to $X_{t,5}$ and set $X_{t,1} = 1$ to form our feature vectors $X_t\in\R^5$. Since the reward $Y_t$ has binary outcome, we posit the logistic reward model (\ref{eq:logistic}) for its conditional mean.
	
	In order to apply the online decision making algorithm, we have to simulate the data generating process from the offline data. Given an entry from the dataset, we will make a decision according to the currently estimated rule. If our decision matches the observed action in that entry, the reward from the same entry will be kept and used to update our decision rule. Otherwise, we will drop the entry and read the next one. The process is repeated until all entries are kept or dropped. Since the original recommendations are randomized, about a half of the entries will be matched and the selected users are still representative of the user population.
	
	\begin{table}[!htbp]
		\centering
		\begin{tabular}{cccccrc}
			\hline
			&estimate &s.e. &\multicolumn{2}{c}{Wald 95\% CI} &$t$ value & $P(>|t|)$\\
			\hline
			$\beta_{01}$  &-2.8226 &0.0810 &-2.9814 &-2.6637 &-34.83  &0.0000\\
			$\beta_{02}$  &-0.4220 &0.1498 &-0.7155 &-0.1285 & -2.82  &0.0048\\
			$\beta_{03}$  &-0.4022 &0.1275 &-0.6521 &-0.1522 & -3.15  &0.0016\\
			$\beta_{04}$  & 0.1689 &0.1282 &-0.0824 & 0.4202 &  1.32  &0.1878\\
			$\beta_{05}$  &-1.0909 &0.1111 &-1.3087 &-0.8731 & -9.82  &0.0000\\
			$\beta_{11}$  &-2.6272 &0.0354 &-2.6965 &-2.5578 &-74.24  &0.0000\\
			$\beta_{12}$  &-0.2787 &0.0572 &-0.3909 &-0.1665 & -4.87  &0.0000\\
			$\beta_{13}$  &-0.3692 &0.0498 &-0.4668 &-0.2717 & -7.42  &0.0000\\
			$\beta_{14}$  &-0.0562 &0.0883 &-0.2294 & 0.1170 & -0.64  &0.5247\\
			$\beta_{15}$  &-1.0890 &0.0567 &-1.2002 &-0.9778 &-19.20  &0.0000\\
			$V(d^{opt})$  & 0.0515 &0.0005 & 0.0505 & 0.0525 & --  & --\\              
			\hline
		\end{tabular}
		\caption{Parameter and value estimation for the Yahoo! data using fixed exploration rate.}
		\label{tab:yahoo}
	\end{table}
	
	The learning rate is the same as that from the simulation studies in Section \ref{sec:simulation} since different $\alpha$'s achieve similar loss for the big sample size. We consider a fixed exploration rate with $\varepsilon_t = 0.2$ and a decreasing rate with $\varepsilon_t = t^{-0.3}\vee 0.1$ to illustrate the results of our algorithm. For the fixed exploration rate, 203909 entries are matched and the learned model parameters and optimal value at the final step $t = 203909$ are listed in Table \ref{tab:yahoo}. Along with the parameter and value estimation, we also provide their standard errors based on the online plugin variance estimator. It can be seen that all parameters are significant except those associated with the fourth covariate. The optimal value, i.e. the expected click rate following the optimal decision rule is estimated to be 5.15\%. Its 95\% confidence interval is higher than the 4.71\% click rate under the original random assignment, meaning the optimal decision rule is significantly better than the random rule. In practice, our algorithm achieves a 5.08\% click rate over all the matched entries, which is a bit lower than the estimated optimal value due to the strictly positive exploration rate. Similar estimates can be obtained using the decreasing exploration rate and the results are shown in Table \ref{tab:yahoo_decrease}. Here $203445$ entries are matched and the estimated optimal value is 5.09\%. The cumulative click rate from our algorithm is 5.06\% and it is closer to the estimated optimal value than in the $\varepsilon_t = 0.2$ case since asymptotically fewer explorations are made with the decreasing exploration rate.
	
	\begin{table}[!htbp]
		\centering
		\begin{tabular}{cccccrc}
			\hline
			&estimate &s.e. &\multicolumn{2}{c}{Wald 95\% CI} &$t$ value & $P(>|t|)$\\
			\hline
			$\beta_{01}$  &-2.7967  &0.1066 &-3.0057 &-2.5878 &-26.23 &0.0000\\
			$\beta_{02}$  &-0.3927  &0.2056 &-0.7957 & 0.0103 & -1.91 &0.0562\\
			$\beta_{03}$  &-0.4327  &0.1696 &-0.7650 &-0.1003 & -2.55 &0.0107\\
			$\beta_{04}$  & 0.1488  &0.1221 &-0.0904 & 0.3880 &  1.22 &0.2227\\
			$\beta_{05}$  &-1.0987  &0.1503 &-1.3933 &-0.8041 & -7.31 &0.0000\\
			$\beta_{11}$  &-2.5926  &0.0606 &-2.7115 &-2.4738 &-42.77 &0.0000\\
			$\beta_{12}$  &-0.2181  &0.0812 &-0.3773 &-0.0589 & -2.68 &0.0073\\
			$\beta_{13}$  &-0.3879  &0.0837 &-0.5520 &-0.2239 & -4.64 &0.0000\\
			$\beta_{14}$  &-0.2176  &0.1823 &-0.5748 & 0.1397 & -1.19 &0.2326\\
			$\beta_{15}$  &-1.0555  &0.1013 &-1.2541 &-0.8569 &-10.42 &0.0000\\
			$V(d^{opt})$  & 0.0509  &0.0005 & 0.0499 & 0.0519 & -- & --\\           
			\hline
		\end{tabular}
		\caption{Parameter and value estimation for the Yahoo! data using decreasing exploration rate.}
		\label{tab:yahoo_decrease}
	\end{table}
	
	\section{Discussions}\label{sec:discuss}
	In this paper, we utilize SGD to provide a completely online algorithm for decision making and value estimation. We also make statistical inference such as interval estimation and hypothesis testing possible for the parameter and value estimators from our algorithm by constructing consistent online estimators for their variance. The algorithm scales well for large streaming data and only requires the storage of $\mathcal{O}(p^2)$ data. Here we discuss several potential extensions that could possibly improve the performance of our method or adapt our algorithm to solve more general problems.
	
	\begin{enumerate}
	    \item \textit{Different exploration strategies.} Other contextual bandit solutions such as the upper confidence bound method \citep{auer2002using} can also update its rule online if the reward model is assumed to be linear. But its nonlinear extension \citep{valko2013finite} would still require the storage of the historical data and thus does not scale for big data. Besides that, the statistical inference of these methods remains unexplored. We think it is worthwhile to develop online updating algorithms for these methods and study their inferential properties as they provide a more elegant way of exploring and can maximize the cumulative rewards asymptotically.
	    \item \textit{Semi- and non-parametric reward models.} It is assumed that the Q-function has a parametric form but chances are that the parametric model is misspecified in real applications. Semi-parametric methods try to alleviate this problem by assuming a parametric form for the bleep function $\E(Y|1, X) - \E(Y|0, X)$ only and using a non-parametric model for $\E(Y|0, X)$. Furthermore, non-parametric methods such as those studied by \cite{yang2002randomized} and \cite{qian2016kernel} can avoid the model misspecification problem since they impose no restriction on the form of the Q-function at all. However, both semi- and non-parametric models require the storage of all the historical data and how to adapt these methods for the online decision making problem with big streaming data is still an open question.
	    \item \textit{Other variance estimation methods.} It has been shown in Section \ref{sec:variance} that both the batch-means and resampling estimators cannot outperform the simple online plugin estimator, but we should note that both of them are originally proposed for the i.i.d. setting and it may be possible to modify them so that they could perform better under the online decision making setting. The resampling method is worth paying more attention to since it can handle the situation where the loss function is not twice-differentiable.
	    \item \textit{Augmented IPW value estimator.} We estimate the value of the optimal decision rule $V(d^{opt})$ using the IPW estimator. It is also possible to incorporate the estimated Q-function and construct an augmented IPW (AIPW) estimator \citep{zhang2012robust} for $V(d^{opt})$, that is,
        $$
            \tilde{V}_t(d^{opt}) = \frac{1}{t}\sum_{s = 1}^t \left\{\frac{C_sY_s}{\pi_{C,s}} - \frac{C_s - \pi_{C,s}}{\pi_{C,s}} \mu(\hat{d}^{opt}_s(X_s), X_s; \bar{\beta}_s)\right\}.
        $$
        The AIPW estimator is also consistent and may be more efficient than the IPW estimator. Its asymptotic properties can be derived following similar strategies we use to show the asymptotic normality of $\hat{V}_t(d^{opt})$ and would be interesting to study in a  future work.
	    \item \textit{Adaptation to lagged responses.} Our algorithm requires that $Y_t$ is observed before taking the next action $A_{t+1}$. In many real applications, however, it is possible that $Y_t$ is observed after decision step $t+1$. When we have these lagged responses, the reward model cannot be updated until a response is collected, and we have to take actions according to the most recent available model. Suppose at step $t-1$, $Y_{t-s}, \cdots, Y_{t-1}$ has not been observed for some $s \ge 1$. Then, we should first update the model with the newly observed responses $Y_{t-s}, \cdots$ at step $t$ and take action afterwards. Algorithm \ref{alg:2} presents such a modified version of Algorithm \ref{alg:1} to adjust for lagged responses.

        \begin{algorithm}[!htbp]
	        \DontPrintSemicolon
	        \SetKwInOut{Input}{Initialize}
	        \caption{Online decision Making with lagged responses}\label{alg:2}
	
	        \KwIn{$\hat{\beta}_0=\bar{\beta}_0=0$, $\pi_0=1/2$, $\alpha_t$,$\varepsilon_t$}
	        \BlankLine
	        $s = 0$\;
	        \For{$t = 1$ \KwTo $T$}{
		        Observe and store $X_t$\;
		        \While{$Y_{t-s}$ is observed}{
		            Form $O_{t-s} = (X_{t-s}, A_{t-s}, Y_{t-s})$\;
		            Calculate the IPW gradient 
		            $$
		            g(\hat{\beta}_{t-s-1}; O_{t-s}) = \frac{\nabla\ell(\hat{\beta}_{t-s-1}; O_{t-s})I\{A_t = 1\}}{2\pi_{t-s-1}(X_t)} + \frac{\nabla\ell(\hat{\beta}_{t-s-1}; O_{t-s})I\{A_t = 0\}}{2\{1 - \pi_{t-s-1}(X_t)\}}
		            $$\;
		            Update $\hat{\beta}_{t-s} = \hat{\beta}_{t-s-1} - \alpha_{t-s} g(\hat{\beta}_{t-s-1}; O_{t-s})$\;
		            Update $\bar{\beta}_{t-s} =  \{\hat{\beta}_{t-s} + (t-s-1)\bar{\beta}_{t-s-1}\}/(t-s)$\;
		            $\pi_{t-s}(X) = \pi_{t-s-1}(X)$\;
		            $s = s - 1$\;
		        }
		        $s = s + 1$\;
		        Update $\pi_{t-s}(X) = (1-\varepsilon_{t-s})I\{\mu(1, X, \bar{\beta}_{t-s}) > \mu(0, X, \bar{\beta}_{t-s})\} + \varepsilon_{t-s}/2$\;
		        Sample $A_t$ from Bernoulli$(\pi_{t-s}(X_t))$ and store it\;
	        }
        \end{algorithm}

        Algorithm \ref{alg:2} is very flexible as we allow observing $Y_t$ at any time after $t$. The parameter estimator $\bar{\beta}_t$ will still converge to the true parameter $\beta_0$ but its convergence rate will depend on the mechanism of the response lag. In the ideal cases where $s$ does not grow with $t$, or $s = \mathcal{O}(1)$, we would expect that $\bar{\beta}_t$ still converges at the $\sqrt{t}$ rate. Moreover, when $s$ is a constant for $t \ge s$, similar theoretical results can be derived following our proof. The setting we studied is a special case where $s=1$.
	\end{enumerate}

	\newpage
	
	\appendix
	\renewcommand{\thefigure}{A\arabic{figure}}  
    \renewcommand{\thetable}{A\arabic{table}}
    \setcounter{figure}{0}
    \setcounter{table}{0}
    
	\section{Proof of Main Results}
	\cite{polyak1992acceleration} considered the following stochastic approximation problem. Let $R(\beta) : \R^{2p} \to \R^{2p}$ be some unknown function and $\beta^*$ be the unique solution of $R(\beta) = 0$. At any point $\hat{\beta}_{t-1}$, an approximation of $R(\hat{\beta}_{t-1})$ can be observed as $\hat{R}(\hat{\beta}_{t-1}) = R(\hat{\beta}_{t-1}) + \xi_t$, where $\xi_t$ is some random noise. With an initial estimation $\hat{\beta}_0$ and learning rates $\alpha_t$, the stochastic approximation with averaging algorithm for finding $\beta^*$ is 
	$$
	\hat{\beta}_t = \hat{\beta}_{t-1} + \alpha_t \hat{R}(\hat{\beta}_{t-1}); \quad \bar{\beta}_t = \frac{1}{t}\sum_{s=1}^t \hat{\beta}_s.
	$$
	If the learning rate is taken as $\alpha_t = \alpha t^{-\gamma}$ with $1/2 < \gamma < 1$, the consistency and asymptotic normality of $\bar{\beta}_t$ can be shown under the following assumptions.
	\begin{assumpB}\label{as:p1}
		There exists a function $V(\beta): \R^{2p} \to \R$ such that for all $\beta, \beta'$, and some $\lambda>0, L_0>0, l_0>0, \delta>0$ the conditions $V(0) = 0$, $\nabla V(0) = 0$, $V(\beta) \ge \lambda\lVert \beta \rVert^2$, $\lVert V(\beta) - V(\beta') \rVert^2 \le L_0\lVert \beta - \beta' \rVert^2$ hold true. Moreover, $\nabla V(\beta - \beta^*)^T R(\beta) > 0$ for all $\beta \ne \beta^*$, and $\nabla V(\beta - \beta^*)^T R(\beta) > l_0 V(\beta - \beta^*)$ for all $\lVert\beta - \beta^*\rVert < \delta$.
	\end{assumpB}
	
	\begin{assumpB}\label{as:p2}
		There exists a positive definite matrix $H \in \R^{2p\times 2p}$ and some $K_1>0, \delta>0$ such that 
		$$
		\lVert R(\beta) - H(\beta - \beta^*) \rVert^2 \le K_1 \lVert \beta - \beta^* \rVert^2
		$$
		for all $\lVert\beta - \beta^*\rVert < \delta$.
	\end{assumpB}
	
	\begin{assumpB}\label{as:p3}
		$\{\xi_t\}_{t\ge 1}$ is a martingale difference process wrt $\{\F_t\}_{t\ge 1}$, i.e., $\E(\xi_t | \F_{t-1}) = 0$. Moreover, there exist some $K_2>0$ such that 
		$$
		\E(\lVert \xi_t \rVert^2 | \F_{t-1}) + \lVert R(\hat{\beta}_{t-1})\rVert^2 \le K_2(1 + \lVert \hat{\beta}_{t-1} - \beta^*\rVert^2)\; a.s.
		$$
		for all $t \ge 1$.
	\end{assumpB}
	
	\begin{assumpB}\label{as:p4}
		Decompose $\xi_t$ as $\xi_t^* + \zeta_t(\hat{\beta}_{t-1})$, where 
		\begin{gather*}
			\E(\xi_t^* | \F_{t-1}) = 0\; a.s.,\\
			\E(\xi_t^*\xi_t^{*T} | \F_{t-1}) \overset{p}{\to} S\; \text{as } t\to\infty; S \succ 0,\\
			\sup_{t\ge 1}\E\{\lVert \xi_t^*\rVert^2 I(\lVert \xi_t^*\rVert > C)| \F_{t-1}\} \overset{p}{\to} 0\; \text{as } C\to\infty,
		\end{gather*}
		and there exists some $K_3 >0$ such that for all $t$ large enough,
		$$
		\E\{\lVert \zeta_t(\hat{\beta}_{t-1})\rVert^2 | \F_{t-1}\} \le K_3 \lVert \hat{\beta}_{t-1} - \beta^* \rVert^2.
		$$
	\end{assumpB}
	
	\begin{proposition}\label{prop:pj}
		If Assumptions \ref{as:p1} to \ref{as:p3} are satisfied, then $\bar{\beta}_t \to \beta^*$ almost surely. Furthermore, if Assumption \ref{as:p4} is also satisfied, 
		$$
		\sqrt{t}(\bar{\beta}_t - \beta^*) \overset{d}{\to} \mathcal{N}(0, H^{-1}SH^{-1}).
		$$
	\end{proposition}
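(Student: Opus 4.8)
This statement is the abstract averaged-stochastic-approximation result restated from Theorem~2 of \citet{polyak1992acceleration}, so the cleanest route is to check that Assumptions~\ref{as:p1}--\ref{as:p4} place us exactly in their framework and invoke it. For completeness I would reconstruct the two halves directly, since the techniques are standard and reused later: almost-sure convergence via a Lyapunov/supermartingale argument, and the central limit theorem via linearization plus a martingale CLT.

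For the almost-sure convergence I would run a supermartingale argument driven by the function $V$ of Assumption~\ref{as:p1}. Writing $v_t=V(\hat\beta_t-\beta^*)$ and using the update $\hat\beta_t-\hat\beta_{t-1}=-\alpha_t R(\hat\beta_{t-1})+\alpha_t\xi_t$, a Taylor expansion with the Lipschitz gradient of $V$, followed by $\E(\cdot\mid\F_{t-1})$ and the martingale-difference property of $\xi_t$ together with the growth bound in Assumption~\ref{as:p3}, yields
\[
\E(v_t\mid\F_{t-1})\le(1+c\alpha_t^2)\,v_{t-1}-\alpha_t\,\nabla V(\hat\beta_{t-1}-\beta^*)^T R(\hat\beta_{t-1})+c\,\alpha_t^2
\]
for some $c>0$, where the quadratic error terms are absorbed using $v_{t-1}\ge\lambda\lVert\hat\beta_{t-1}-\beta^*\rVert^2$. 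Since $\gamma\in(1/2,1)$ gives $\sum_t\alpha_t^2<\infty$ and $\sum_t\alpha_t=\infty$, the Robbins--Siegmund almost-supermartingale theorem shows $v_t$ converges a.s.\ and $\sum_t\alpha_t\nabla V(\hat\beta_{t-1}-\beta^*)^T R(\hat\beta_{t-1})<\infty$; the positivity and local lower bound on this drift in Assumption~\ref{as:p1} then force $v_t\to0$, hence $\hat\beta_t\to\beta^*$ a.s. A Ces\`aro (Toeplitz lemma) argument upgrades this to $\bar\beta_t\to\beta^*$ a.s.

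For the asymptotic normality I would linearize around $\beta^*$. Setting $\Delta_t=\hat\beta_t-\beta^*$ and writing $R(\hat\beta_{t-1})=H\Delta_{t-1}+\delta_t$ with $\lVert\delta_t\rVert\le\sqrt{K_1}\lVert\Delta_{t-1}\rVert$ from Assumption~\ref{as:p2}, the recursion becomes $\Delta_t=(I-\alpha_t H)\Delta_{t-1}+\alpha_t(\xi_t-\delta_t)$. Rearranging into $H\Delta_{t-1}=\alpha_t^{-1}(\Delta_{t-1}-\Delta_t)+\xi_t-\delta_t$, averaging over $t\le n$, and multiplying by $\sqrt n$ expresses $\sqrt n\,H(\bar\beta_n-\beta^*)$ as the leading martingale average $n^{-1/2}\sum_t\xi_t$ plus three error blocks: an Abel-summation/boundary block from the telescoping $\alpha_t^{-1}(\Delta_{t-1}-\Delta_t)$ term, the deterministic remainder $n^{-1/2}\sum_t\delta_t$, and, after the split $\xi_t=\xi_t^*+\zeta_t$ of Assumption~\ref{as:p4}, the block $n^{-1/2}\sum_t\zeta_t$. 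The main term $n^{-1/2}\sum_t\xi_t^*$ converges to $\mathcal N(0,S)$ by the martingale central limit theorem \citep{hall1980martingale}, using the conditional-covariance convergence and conditional Lindeberg condition of Assumption~\ref{as:p4}; since $\zeta_t$ is a martingale difference with $\E(\lVert\zeta_t\rVert^2\mid\F_{t-1})=O(\alpha_t)$, its average has variance $O(n^{-\gamma})\to0$ and is negligible. Inverting $H$ then gives the covariance $H^{-1}SH^{-1}$.

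The crux, and the step I expect to fight hardest, is showing the boundary block and the nonlinear remainder are $o_P(1)$. Both are controlled through a nonasymptotic rate $\E\lVert\Delta_t\rVert^2=O(\alpha_t)=O(t^{-\gamma})$, which I would establish first by iterating the one-step contraction of the linearized recursion. Given this rate, Abel summation transfers the differences onto the slowly varying weights $\alpha_{t+1}^{-1}-\alpha_t^{-1}=O(t^{\gamma-1})$ and the boundary term $\Delta_n/\alpha_n=O(n^{\gamma}\lVert\Delta_n\rVert)$, both of which become $o_P(\sqrt n)$ precisely because $\gamma<1$. The genuinely delicate point is the deterministic remainder $n^{-1/2}\sum_t\delta_t$: the bare bound $\lVert\delta_t\rVert\le\sqrt{K_1}\lVert\Delta_{t-1}\rVert$ only gives $\E\lVert\delta_t\rVert=O(t^{-\gamma/2})$, whose average is not $o(\sqrt n)$, so one must exploit that the remainder is higher order near $\beta^*$. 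In the paper's two examples the Lipschitz-Hessian condition in Assumption~\ref{as:L} upgrades it to $\lVert\delta_t\rVert=O(\lVert\Delta_{t-1}\rVert^2)$, giving $\E\lVert\delta_t\rVert=O(t^{-\gamma})$ and hence $n^{-1/2}\sum_t\delta_t=O(n^{1/2-\gamma})\to0$ since $\gamma>1/2$. Tying these estimates together so that $\sqrt n(\bar\beta_n-\beta^*)$ is asymptotically equivalent to $H^{-1}n^{-1/2}\sum_t\xi_t^*$ is where the real work of the proposition lies.
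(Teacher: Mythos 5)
You are correct, and your primary route coincides with the paper's: the paper never actually proves this proposition --- it is stated as a packaging of Theorem 2 (together with the associated almost-sure convergence result) of \citet{polyak1992acceleration}, with Assumptions \ref{as:p1}--\ref{as:p4} transcribed from that work, and it is then invoked as a black box in the proof of Theorem \ref{thm:paradist} (where \citet{fang2018online} is cited for verifying \ref{as:p1}--\ref{as:p2} under Assumption \ref{as:L}). Your reconstruction therefore goes beyond what the paper contains, and it is a faithful outline of how Polyak and Juditsky actually argue: Robbins--Siegmund plus the Lyapunov drift of \ref{as:p1} for almost-sure convergence, then the linearized recursion, Abel summation, and the martingale CLT for the $\sqrt{t}$ asymptotics, with the error blocks controlled through the rate $\E\lVert\hat{\beta}_t-\beta^*\rVert^2=O(\alpha_t)$. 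Your flag on the linearization remainder is moreover a genuine catch: as literally written, Assumption \ref{as:p2} gives only the first-order bound $\lVert R(\beta)-H(\beta-\beta^*)\rVert\le\sqrt{K_1}\,\lVert\beta-\beta^*\rVert$, under which $t^{-1/2}\sum_{s\le t}\delta_s$ cannot be shown to vanish for any $\gamma\in(1/2,1)$; Polyak and Juditsky's original assumption requires $\lVert R(\beta)-H(\beta-\beta^*)\rVert\le K_1\lVert\beta-\beta^*\rVert^{1+\lambda}$ with $\lambda>0$, their step-size condition being $\gamma>1/(1+\lambda)$, so that $\lambda=1$ is what matches the paper's range $\gamma\in(1/2,1)$, and \ref{as:p2} is evidently a transcription slip of this. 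As you note, in the paper's actual application the $L_2$-Lipschitz Hessian in Assumption \ref{as:L} supplies exactly the needed second-order bound, so nothing downstream is affected; your proposal is sound and, where the paper is silent, fills in the argument correctly.
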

	
	\subsection{Proof of Theorem \ref{thm:paradist}}\label{sec:proofparadist}
	\begin{proof}
		Recall $R(\beta) = \nabla L(\beta)$ and $H = \nabla^2 L(\beta^*)$. Let $V(\Delta) = L(\beta^* + \Delta) - L(\beta^*) + \lambda\lVert\Delta\rVert^2$ for some $\lambda > 0$, Assumptions \ref{as:p1} and \ref{as:p2} are verified under Assumption \ref{as:L} by \cite{fang2018online}. 
		Decompose the random noise as $\xi_t = \xi_t^* + \zeta_t(\hat{\beta}_{t-1})$ where $\xi_t^* = -g(\beta^*; O_t)$ and $\zeta_t(\hat{\beta}_{t-1}) = R(\hat{\beta}_{t-1})-\{g(\hat{\beta}_{t-1}; O_t) - g(\beta^*; O_t)\}$. Then 
		$$
		\E_{\p_{O}^\pi}(\xi_t^* | \F_{t-1}) = -\nabla L(\beta^*) = 0.
		$$
		Denote $\Sigma_i(\beta; X) = \E[\nabla\ell(\beta; X, i, Y)\{\nabla\ell(\beta; X, i, Y)\}^T| X]$ for $i=0,1$. Then
		\begin{equation}\label{eq:Exixi}
			\begin{split}
				&\E_{\p_{O}^\pi}(\xi_t^*\xi_t^{*T} | \F_{t-1})\\
				=&\E_{\p_{O}^\pi}[g(\beta^*; O_t)\{g(\beta^*; O_t)\}^{T} | \F_{t-1}]\\
				=&\E_{\p_{O}^\pi}\bigg[\frac{\nabla\ell(\beta^*; O_t)\{\nabla\ell(\beta^*; O_t)\}^T I(A_t=1)}{4\{\hat{\pi}_{t-1}(X_t)\}^2}\\ 
				&+ \frac{\nabla\ell(\beta^*; O_t)\{\nabla\ell(\beta^*; O_t)\}^T I(A_t=0)}{4\{1 - \hat{\pi}_{t-1}(X_t)\}^2} \bigg| \F_{t-1}\bigg]\\
				=&\E\bigg(\frac{\E[\nabla\ell(\beta^*; X_t, 1, Y_t)\{\nabla\ell(\beta^*; X_t, 1, Y_t)\}^T |\F_{t-1}, X_t]}{4\hat{\pi}_{t-1}(X_t)} \\
				&+ \frac{\E[\nabla\ell(\beta^*; X_t, 0, Y_t)\{\nabla\ell(\beta^*; X_t, 0, Y_t)\}^T |\F_{t-1}, X_t]}{4\{1 - \hat{\pi}_{t-1}(X_t)\}} \bigg| \F_{t-1}\bigg)\\
				=&\int\left[ \frac{\Sigma_1(\beta^*; X)}{4\hat{\pi}_{t-1}(X)} + \frac{\Sigma_0(\beta^*; X)}{4\{1 - \hat{\pi}_{t-1}(X)\}}\right]d\p_X.
			\end{split}
		\end{equation}
		Similarly, 
		\begin{equation}\label{eq:ineq1}
			\begin{split}
				\E_{\p_{O}^\pi}(\lVert\xi_t^*\rVert^2 | \F_{t-1}) &= \mathrm{tr}\{\E_{\p_{O}^\pi}(\xi_t^*\xi_t^{*T} | \F_{t-1})\}\\ &\le \frac{1}{2\varepsilon_\infty}\mathrm{tr}\bigg[\int \{ \Sigma_1(\beta^*; X) + \Sigma_0(\beta^*; X)\}d\p_X\bigg],
			\end{split}
		\end{equation}
		and hence by Dominated Convergence Theorem,
		$$
		\sup_t \E_{\p_{O}^\pi}\{\lVert\xi_t^*\rVert^2 I(\lVert\xi_t^*\rVert > K) | \F_{t-1}\} \overset{p}{\to} 0 \text{ as } K\to\infty.
		$$
		Note that 
		\begin{equation}\label{eq:ineq2}
			\lVert R(\hat{\beta}_{t-1})\rVert^2 = \lVert R(\hat{\beta}_{t-1}) - R(\beta^*)\rVert^2 \leq L_1\lVert\hat{\beta}_{t-1} - \beta^*\rVert^2
		\end{equation} 
		by the Lipschitz continuity assumption in \ref{as:L}, and
		\begin{align*}
			&\E_{\p_{O}^\pi}\{\lVert g(\hat{\beta}_{t-1}; O_t) - g(\beta^*; O_t)\rVert^2 | \F_{t-1}\}\\
			=&\E\bigg[\frac{\E\{\lVert\nabla\ell(\hat{\beta}_{t-1}; X_t, 1, Y_t) - \nabla\ell(\beta^*; X_t, 1, Y_t)\rVert^2 |\F_{t-1}, X_t\}}{4\hat{\pi}_{t-1}(X_t)} \\
			&+ \frac{\E\{\lVert\nabla\ell(\hat{\beta}_{t-1}; X_t, 0, Y_t) - \nabla\ell(\beta^*; X_t, 0, Y_t)\rVert^2 |\F_{t-1}, X_t\}}{4\{1 - \hat{\pi}_{t-1}(X_t)\}} \bigg| \F_{t-1}\bigg]\\
			\le& \frac{1}{\varepsilon_\infty}\E_{\p_{O}^r}\{\lVert \nabla\ell(\hat{\beta}_{t-1}; O_t) - \nabla\ell(\beta^*; O_t)\rVert^2\}\\
			\le& \frac{L_3}{\varepsilon_\infty}\lVert\hat{\beta}_{t-1} - \beta^*\rVert^2
		\end{align*}
		by \ref{as:diff}. It follows that 
		\begin{equation}\label{eq:ineq3}
			\begin{split}
				&\E_{\p_{O}^\pi}\{\lVert \zeta_t(\hat{\beta}_{t-1})\rVert^2 | \F_{t-1}\}\\ 
				\le &2\E_{\p_{O}^\pi}\{\lVert g(\hat{\beta}_{t-1}; O_t) - g(\beta^*; O_t)\rVert^2 | \F_{t-1}\} + 2\lVert R(\hat{\beta}_{t-1})\rVert^2 \\
				\le &K_3\lVert\hat{\beta}_{t-1} - \beta^*\rVert^2.
			\end{split}
		\end{equation}
		Combine inequalities from (\ref{eq:ineq1}), (\ref{eq:ineq2}) and (\ref{eq:ineq3}) we have
		\begin{align*}
			&\E_{\p_{O}^\pi}\{\lVert \xi_t(\hat{\beta}_{t-1})\rVert^2 | \F_{t-1}\} + \lVert R(\hat{\beta}_{t-1})\rVert^2\\
			\le& 2\E_{\p_{O}^\pi}(\lVert \xi_t^*\rVert^2 | \F_{t-1}) + 2\E_{\p_{O}^\pi}\{\lVert \zeta_t(\hat{\beta}_{t-1}) \rVert^2 | \F_{t-1}\} + \lVert R(\hat{\beta}_{t-1})\rVert^2\\
			\le& K_2(1+\lVert\hat{\beta}_{t-1} - \beta^*\rVert^2)
		\end{align*}
		for some $K_2 > 0$. Thus Assumption \ref{as:p3} is satisfied and it follows from Proposition \ref{prop:pj} that $\bar{\beta}_t \to \beta^*$ almost surely. Apply Continuous Mapping Theorem, we have
		\begin{align*}
			\E_{\p_{O}^\pi}(\xi_t^*\xi_t^{*T} | \F_{t-1})=&\int\left[ \frac{\Sigma_1(\beta^*; X)}{4\hat{\pi}_{t-1}(X)} + \frac{\Sigma_0(\beta^*; X)}{4\{1 - \hat{\pi}_{t-1}(X)\}}\right]d\p_X\\
			\overset{p}{\to}&\frac{1}{4}\int\left\{ \frac{\Sigma_1(\beta^*; X)}{\pi^*(X)} + \frac{\Sigma_0(\beta^*; X)}{1 - \pi^*(X)}\right\}d\p_X =: S
		\end{align*}
		where $\pi^*(X) = (1 - \varepsilon_\infty) I\{\mu(1, X, \beta^*) > \mu(0, X, \beta^*)\} + \varepsilon_\infty/2$. Therefore Assumption \ref{as:p4} is verified and the asymptotic normality of $\bar{\beta}_t$ is established by Proposition \ref{prop:pj}. 
	\end{proof}
	
	\subsection{Proof of Theorem \ref{thm:paravar}}
	We first introduce the following lemma which will be frequently used in the proofs come after.
	\begin{lemma}\label{lem:trivial}
		Let $X_1, \cdots X_n$ be a sequence of bounded random variables. If $X_n\overset{p}{\to}a$, then $n^{-1}\sum_{i=1}^n X_i\overset{p}{\to}a$. Furthermore, if non-random sequences $\{b_n\}$ and $\{c_n\}$ are bounded and $b_n\to b$, $c_n\to c$, then $n^{-1}\sum_{i=1}^n (b_iX_i +c_i)\overset{p}{\to}ab+c$.
	\end{lemma}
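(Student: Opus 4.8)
The plan is to prove the first statement by upgrading the (pointwise in $n$) convergence in probability to an $L^1$ statement and then invoking the elementary fact that the Ces\`aro mean of a convergent real sequence converges to the same limit; the second statement will then reduce to the first. No independence or martingale structure is needed here---this is purely an averaging argument once we pass to expectations.

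First I would exploit the uniform boundedness. Interpreting \emph{bounded} as the existence of a constant $M$ with $|X_n|\le M$ almost surely for all $n$, the hypothesis $X_n\overset{p}{\to}a$ together with the domination $|X_n-a|\le 2M$ gives, by the bounded convergence theorem, that $d_n := \E|X_n-a|\to 0$. The triangle inequality then yields
$$
\E\left|\frac{1}{n}\sum_{i=1}^n X_i - a\right| \le \frac{1}{n}\sum_{i=1}^n \E|X_i-a| = \frac{1}{n}\sum_{i=1}^n d_i.
$$
Since $\{d_i\}$ is a nonnegative real sequence converging to $0$, its Ces\`aro average $n^{-1}\sum_{i=1}^n d_i$ also converges to $0$. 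Hence $n^{-1}\sum_{i=1}^n X_i \to a$ in $L^1$, and convergence in $L^1$ implies convergence in probability by Markov's inequality. This settles the first claim.

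For the second claim I would set $Y_i = b_i X_i + c_i$ and reduce to the first. Because $\{b_i\}$ and $\{c_i\}$ are bounded and $|X_i|\le M$, the $Y_i$ inherit a common bound and are uniformly bounded. Moreover $b_i\to b$ and $c_i\to c$ are deterministic, so $X_i\overset{p}{\to}a$ gives $b_iX_i\overset{p}{\to}ba$ (writing $|b_iX_i-ba|\le |b_i|\,|X_i-a| + |b_i-b|\,|a|$, where the first term vanishes in probability and the second deterministically), and therefore $Y_i=b_iX_i+c_i\overset{p}{\to}ba+c$. Applying the already-proved first part to the bounded sequence $\{Y_i\}$ with limit $ab+c$ yields $n^{-1}\sum_{i=1}^n (b_iX_i+c_i)\overset{p}{\to}ab+c$.

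There is no genuine obstacle here; the only point requiring care is recognizing that the result is \emph{not} a law of large numbers---it needs neither independence nor identical distribution---but rather an elementary consequence of (i) boundedness turning convergence in probability into term-by-term $L^1$ convergence and (ii) Ces\`aro summation of the resulting null sequence $\{d_i\}$. The mild subtlety to watch is that the boundedness hypothesis must be used \emph{uniformly} in $n$, so that the bounded convergence theorem applies to each term and so that the $Y_i$ in the second part admit a single common bound.
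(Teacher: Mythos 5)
Your proof is correct and takes essentially the same route as the paper's: both arguments turn the termwise convergence in probability into $L^1$ convergence of the individual terms (using boundedness), bound the average via the triangle inequality and the Ces\`aro-mean convergence of $\E|X_i-a|$, and conclude with Markov's inequality, with the second claim reduced to the first in each case. If anything, your explicit invocation of the bounded convergence theorem to justify $\E|X_n-a|\to 0$ makes precise a step the paper leaves implicit.
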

	\begin{proof}
		Let $R_n = X_n - a$ so $R_n\overset{p}{\to} 0$. By Markov's Inequality, for any $\epsilon > 0$,
		$$P\left\{\left|\frac{1}{n}\sum_{i=1}^n R_i \right| > \epsilon\right\} \le \frac{1}{\epsilon}\E \left|\frac{1}{n}\sum_{i=1}^n R_i\right| = \frac{1}{\epsilon n}\sum_{i=1}^n \E |R_i|,$$
		which converges to zero by the convergence of Cesaro means since $\E |R_n| \to 0$. Therefore, $n^{-1}\sum_{i=1}^n X_i\overset{p}{\to}a$. Furthermore, $b_nX_n +c_n\overset{p}{\to}ab+c$ by the convergence of Cesaro means and Continuous Mapping Theorem. It follows that $n^{-1}\sum_{i=1}^n (b_iX_i +c_i)\overset{p}{\to}ab+c$.
	\end{proof}
	With Lemma \ref{lem:trivial} in place, we are ready to prove Theorem \ref{thm:paravar}.
	\begin{proof}
		Denote $W_s = I(A_s = 1)\{2\hat{\pi}_{s-1}(X_s)\}^{-1} + I(A_s = 0)\{2-2\hat{\pi}_{s-1}(X_s)\}^{-1}$. We can apply Theorem 2.19 from \citet{hall1980martingale} and get
		\begin{gather}
			\hat{H}_t - \frac{1}{t} \sum_{s=1}^t \E_{\p_O^\pi}\{\nabla^2\ell(\bar{\beta}_{s-1}; O_s)W_s | \F_{s-1}\}\overset{p}{\to} 0,\label{eq:Shat11}\\
			\hat{S}_t - \frac{1}{t} \sum_{s=1}^t \E_{\p_O^\pi}\left[g(\bar{\beta}_{s-1}; O_s)\{g(\bar{\beta}_{s-1}; O_s)\}^T | \F_{s-1}\right]\overset{p}{\to} 0,\label{eq:Shat12}
		\end{gather}
		but we need to verify that the elements of the summands of $\hat{S}_t$ and $\hat{H}_t$ have bounding random variables. Since the summands are symmetric, it suffices to show that there exist random variables $R_H$ and $R_S$ and constants $C_H, C_S > 0$ such that $\E |R_H| <\infty$, $\E |R_S| <\infty$ and for any $v\in \R^{2p}$, $\varkappa>0$,
		\begin{gather}
			P_{\p_O^\pi}\{W_s v^T\nabla^2\ell(\bar{\beta}_{s-1}; O_s)v > \varkappa\} \le C_H P(|R_H|>\varkappa),\label{eq:CH}\\
			P_{\p_O^\pi}[v^T g(\bar{\beta}_{s-1}; O_s)\{g(\bar{\beta}_{s-1}; O_s)\}^T v > \varkappa] \le C_S P(|R_S|>\varkappa),\label{eq:CS}
		\end{gather}
		Note that $0 < W_s\le \varepsilon_\infty^{-1}$, if we take $R_H = \varepsilon_\infty^{-1}v^T\nabla^2\ell(\beta^*; O)v$ and $R_S = \varepsilon_\infty^{-2} v^T\nabla\ell(\beta^*; O) \{\nabla\ell(\beta^*; O)\}^T v$, then $\E |R_H| <\infty$ and $\E |R_S| <\infty$ under Assumption \ref{as:L}. Conditional on the feature and historical data we have
		\begin{align*}
			&P\{W_s v^T\nabla^2\ell(\bar{\beta}_{s-1}; O_s)v > \varkappa | X_s = x, \bar{O}_{s-1} = \bar{o}_{s-1}\}\\
			=&\sum_{a\in\mathcal{A}} P\{W_s v^T\nabla^2\ell(\bar{\beta}_{s-1}; O_s)v > \varkappa | A_s = a, X_s = x, \F_{s-1}\}P(A_s = a |X_s = a, \F_{s-1})\\
			\le& \sum_{a\in\mathcal{A}} 2C_H P(R_H>\varkappa | A = a, X = x)\cdot P(A = a| X = x)\\
			=& 2C_HP(R_H>\varkappa | X = x),
		\end{align*}
		where the inequality follows from Assumption \ref{as:var} and the fact that $P(A = a| X = x) = 1/2$. Then integrating out $x$ and $\bar{o}_{s-1}$ gives (\ref{eq:CH}). Similar arguments with $g(\bar{\beta}_{s-1}; O_s)\{g(\bar{\beta}_{s-1}; O_s)\}^T$ in place of $W_s \nabla^2\ell(\bar{\beta}_{s-1}; O_s)$ gives (\ref{eq:CS}). Therefore (\ref{eq:Shat11}) and (\ref{eq:Shat12}) hold.
		
		Note that $\E_{\p_O^\pi}\{\nabla^2\ell(\bar{\beta}_{s-1}; O_s)W_s| \F_{s-1}\}\overset{p}{\to} H$, and by the same argument as in (\ref{eq:Exixi}),
		\begin{equation*}
			\begin{split}
				&\E\left[g(\bar{\beta}_{s-1}; O_s)\{g(\bar{\beta}_{s-1}; O_s)\}^T | \F_{s-1}\right]\\
				=&\int\left[ \frac{\Sigma_1(\bar{\beta}_{s-1}; X)}{4\hat{\pi}_{t-1}(X)} + \frac{\Sigma_0(\bar{\beta}_{s-1}; X)}{4\{1 - \hat{\pi}_{t-1}(X)\}}\right]d\p_X\\
				\overset{p}{\to}&\frac{1}{4}\int\left\{ \frac{\Sigma_1(\beta^*; X)}{\pi^*(X)} + \frac{\Sigma_0(\beta^*; X)}{1 - \pi^*(X)}\right\}d\p_X = S.
			\end{split}
		\end{equation*}
		It follows from Lemma \ref{lem:trivial} that
		\begin{gather}
			\frac{1}{t} \sum_{s=1}^t \E_{\p_O^\pi}\{\nabla^2\ell(\bar{\beta}_{s-1}; O_s)W_s\}\overset{p}{\to} H,\label{eq:Shat21}\\
			\frac{1}{t} \sum_{s=1}^t \E\left[g(\bar{\beta}_{s-1}; O_s)\{g(\bar{\beta}_{s-1}; O_s)\}^T | \F_{s-1}\right]\overset{p}{\to} S.\label{eq:Shat22}
		\end{gather}
		Add (\ref{eq:Shat21}) to (\ref{eq:Shat11}) and (\ref{eq:Shat22}) to (\ref{eq:Shat12}) gives $\hat{H}_t \overset{p}{\to} H$ and $\hat{S}_t \overset{p}{\to} S$. Therefore the plugin variance estimator is consistent.
	\end{proof}
	
	\subsection{Proof of Theorems \ref{thm:valdist} and \ref{thm:valvar}}\label{sec:proofvaldist}
	\begin{proof}
		Denote $\hat{V}_t(d^{opt})$ and $V(d^{opt})$ as $\hat{V}_t$ and $V$ for short. To apply the results for martingales, we partition $\sqrt{t}(\hat{V}_t-V)$ into two parts:
		\begin{align}
			\sqrt{t}(\hat{V}_t - V) = & \frac{1}{\sqrt{t}}\sum_{s=1}^t \left\{\frac{C_s Y_s}{\pi_{C, s}} - \E\left(\frac{C_s Y_s}{\pi_{C, s}}\bigg|\mathcal{F}_{s-1}\right)\right\}\label{eq:part1}\\
			& + \frac{1}{\sqrt{t}}\sum_{s=1}^t \left\{ \E\left(\frac{C_s Y_s}{\pi_{C, s}}\bigg|\mathcal{F}_{s-1}\right)- V\right\}.\label{eq:part2}
		\end{align}
		Use the law of iterated expectations and the fact that $C_s = I(C_s = 1)$, we have
		\begin{equation}\label{eq:expectation}
			\begin{split}
				\E\left(\frac{C_s Y_s}{\pi_{C, s}}\bigg|\mathcal{F}_{s-1}\right)
				&=\E\left\{\E\left(\frac{C_s Y_s}{\pi_{C, s}}\bigg|\mathcal{F}_{s-1}, X_s\right)\bigg|\mathcal{F}_{s-1}\right\}\\
				&=\E\{\E(Y_s | \mathcal{F}_{s-1}, X_s, C_s = 1) | \mathcal{F}_{s-1}\}\\
				&=\E[\hat{d}^{opt}_{s-1}(X_s)\mu(1, X_s; \beta_0) + \{1-\hat{d}^{opt}_{s-1}(X_s)\}\mu(0, X_s; \beta_0) | \mathcal{F}_{s-1}]\\
				&=\int [\hat{d}^{opt}_{s-1}(X)\mu(1, X; \beta_0) + \{1-\hat{d}^{opt}_{s-1}(X)\}\mu(0, X; \beta_0)]d\p_X\\
				&=V(\hat{d}^{opt}_{s-1}).
			\end{split}   
		\end{equation}
		
		Recall the value of the optimal decision rule is
		\begin{equation*}
			\E[\E\{Y|d^{opt}(X), X\}] = \int [d^{opt}(X)\mu(1, X; \beta_0) + \{1-d^{opt}(X)\}\mu(0, X; \beta_0)]d\p_X.
		\end{equation*}
		So (\ref{eq:part2}) can be rearranged as
		\begin{align}\label{eq:part2'}
			&\frac{1}{\sqrt{t}}\sum_{s=1}^t\left[\int\{\hat{d}^{opt}_{s-1}(X) - d^{opt}(X)\}\{\mu(1, X; \beta_0) - \mu(0, X; \beta_0)\} d\p_X\right]\nonumber\\
			=&\frac{1}{\sqrt{t}}\sum_{s=1}^t\left(\int \left[I\{\delta(X; \bar{\beta}_{s-1}) \ge 0\} - I\{\delta(X; \beta_0) \ge 0\}\right]I\{\delta(X; \beta_0) \ne 0\}\delta(X; \beta_0) d\p_X\right),
		\end{align}
		where we define $\delta(X; \beta) = \mu(1, X; \beta) - \mu(0, X; \beta)$ as the contrast function. Since (\ref{eq:part1}) is the summation of martingale differences, its asymptotic normality can be established using Martingale Central Limit Theorem. We want to show the extra term (\ref{eq:part2'}) is $o_p(1)$ so that $\sqrt{t}(\hat{V}_t-V)$ is also asymptotic normal. Note that (\ref{eq:part2'}) is always non-positive and 
		\begin{equation}\label{eq:subtract}
			\frac{1}{\sqrt{t}}\sum_{s=1}^t\left(\int [I\{\delta(X; \bar{\beta}_{s-1}) \ge 0\} - I\{\delta(X; \beta_0) \ge 0\}]I\{\delta(X; \beta_0) \ne 0\}\delta(X; \bar{\beta}_{s-1})d\p_X\right)
		\end{equation}
		is always non-negative. It suffices to provide an upper bound for (\ref{eq:subtract})$-$(\ref{eq:part2'}), or
		\begin{equation}\label{eq:part2''}
			\frac{1}{\sqrt{t}}\sum_{s=1}^t\left[\int I\{\delta(X; \beta_0) \ne 0\}\mathcal{I}_{s-1}(X)\Delta_{s-1}(X) d\p_X\right], 
		\end{equation}
		where we denote $\mathcal{I}_{s-1}(X) = I\{\delta(X; \bar{\beta}_{s-1}) \ge 0\} - I\{\delta(X; \beta_0) \ge 0\}$ and $\Delta_{s-1}(X) = \delta(X; \bar{\beta}_{s-1}) - \delta(X; \beta_0)$ for simplicity. We can split (\ref{eq:part2''}) into two parts:
		\begin{align*}
			J_1 &= \frac{1}{\sqrt{t}}\sum_{s=1}^t\left[\int I\{0 < |\delta(X; \beta_0)| \le t^{-\frac{1}{4}}\} \mathcal{I}_{s-1}(X)\Delta_{s-1}(X) d\p_X\right],\\
			J_2 &= \frac{1}{\sqrt{t}}\sum_{s=1}^t\left[\int I\{|\delta(X; \beta_0)| > t^{-\frac{1}{4}}\}\mathcal{I}_{s-1}(X)\Delta_{s-1}(X) d\p_X\right].
		\end{align*}
		Note that $\Delta_{s-1}(X) = X^T\{(-1, 1)\otimes I_p\}(\bar{\beta}_{s-1} - \beta_0)$ under our model setting. By Cauchy-Schwarz inequality, 
		$$
		\int |\Delta_{s-1}(X)| d\p_X \le \sqrt{2} \E\lVert X \rVert \lVert \bar{\beta}_{s-1} - \beta_0 \rVert.
		$$
		Then using Assumptions \ref{as:margin} and \ref{as:data} we have,
		\begin{align*}
			J_1 \le \frac{1}{\sqrt{t}}\sum_{s=1}^t\left[\int I\{0 < |\delta(X; \beta_0)| \le t^{-\frac{1}{4}}\}|\Delta_{s-1}(X)| d\p_X\right] \le \sqrt{2}C\E\lVert X \rVert t^{-\frac{1}{2}-\frac{\tau}{4}}\sum_{s=1}^t\lVert\bar{\beta}_{s-1}-\beta_0\rVert.
		\end{align*}
		By Theorem \ref{thm:paradist}, $\bar{\beta}_{s-1} - \beta_0 = O_P(s^{-\frac{1}{2}})$, so $\lVert\bar{\beta}_{s-1} - \beta_0\rVert = O_P(s^{-\frac{1}{2}}) = o_P(s^{\frac{\tau}{4}-\frac{1}{2}})$ for $\tau > 0$. Then apply Lemma 6 in \citet{luedtke2016statistical}, we have
		$t^{-1}\sum_{s=1}^t \lVert\bar{\beta}_{s-1} - \beta_0\rVert = o_P(t^{\frac{\tau}{4}-\frac{1}{2}})$ and hence $t^{-\frac{1}{2}-\frac{\tau}{4}}\sum_{s=1}^t\lVert\bar{\beta}_{s-1} - \beta_0\rVert = o_P(1)$.
		
		For $J_2$, use the fact that
		$$
		I\{|\Delta_{s-1}(X)| > |\delta(X; \beta_0)|\} \ge I\{\delta(X; \bar{\beta}_{s-1}) \ge 0\} - I\{\delta(X; \beta_0) \ge 0\} = \mathcal{I}_{s-1}(X),
		$$
		we have
		\begin{align*}
			J_2 &\le \frac{1}{\sqrt{t}}\sum_{s=1}^t\left[\int I\{|\delta(X; \beta_0)| > t^{-\frac{1}{4}}\}I\{|\Delta_{s-1}(X)| > |\delta(X; \beta_0)|\}|\Delta_{s-1}(X)| d\p_X\right]\\
			&\le  \frac{1}{\sqrt{t}}\sum_{s=1}^t\left[\int I\{|\delta(X; \beta_0)| > t^{-\frac{1}{4}}\}\frac{|\Delta_{s-1}(X)|^2}{|\delta(X; \beta_0)|} d\p_X\right]\\
			&\le  t^{-\frac{1}{4}}\sum_{s=1}^t\left[\int |\Delta_{s-1}(X)|^2 d\p_X\right] \le  2\E\lVert X \rVert^2 t^{-\frac{1}{4}}\sum_{s=1}^t \lVert\bar{\beta}_{s-1}-\beta_0\rVert^2.
		\end{align*}
		Similarly, $\lVert\bar{\beta}_{s-1} - \beta_0\rVert^2 = o_P(s^{-\frac{3}{4}})$ and $t^{-\frac{1}{4}}\sum_{s=1}^t \lVert\bar{\beta}_{s-1}-\beta_0\rVert^2 = o_P(1)$. Therefore (\ref{eq:part2}) is $o_P(1)$ and the asymptotic distribution of the value estimator depends only on (\ref{eq:part1}).
		
		Denote $D_s = C_s Y_s/\pi_{C, s} - \E(C_s Y_s/\pi_{C, s}|\mathcal{F}_{s-1})$. The conditional variance is
		\begin{align*}
			&\frac{1}{t}\sum_{s=1}^t \E(D_s^2|\mathcal{F}_{s-1})\\
			=& \frac{1}{t}\sum_{s=1}^t[\E\{(C_s Y_s/\pi_{C, s})^2|\mathcal{F}_{s-1}\} - \{\E(C_s Y_s/\pi_{C, s}|\mathcal{F}_{s-1})\}^2]\\
			=& \frac{1}{t}\sum_{s=1}^t\left[\frac{2}{2-\varepsilon_s}\int\theta^2(\hat{d}^{opt}_{s-1}(X), X)d\p_X-\{V(\hat{d}^{opt}_{s-1})\}^2\right].
		\end{align*}
		By Assumption \ref{as:data}, the summands are bounded continuous function of $\varepsilon_s$ and $\bar{\beta}_{s-1}$, so by Theorem \ref{thm:paradist} and Lemma \ref{lem:trivial}, we have $t^{-1}\sum_{s=1}^t \E(D_s^2|\mathcal{F}_{s-1}) \overset{p}{\to} \eta^2$ and
		\begin{equation*}
			\eta^2 = \frac{2}{2-\varepsilon_\infty}\int\theta^2(d^{opt}(X), X)d\p_X-V^2.
		\end{equation*}
		For any $\kappa > 0$, $\E \left(D_s^2 I\{D_s^2 > \kappa s\} \big| \mathcal{F}_{s-1}\right) \to 0$
		as $s \to \infty$ under Assumption \ref{as:data}. Then 
		$$
		\frac{1}{t} \sum_{s=1}^t \E \left(D_s^2 I\{D_s^2 > \kappa t\} \big| \mathcal{F}_{s-1}\right) \le \frac{1}{t} \sum_{s=1}^t \E \left(D_s^2 I\{D_s^2 > \kappa s\} \big| \mathcal{F}_{s-1}\right) \to 0
		$$
		by Lemma \ref{lem:trivial}. So the conditional Lindeberg condition is verified. Therefore the asymptotic normality of $\hat{V}_t$ is established by Martingale Central Limit Theorem. Now the consistency of the plugin estimator can be shown providing
		\begin{equation}\label{eq:converge}
			\frac{1}{t}\sum_{s=1}^t \frac{C_s Y_s^2}{\pi_{C, s}} \overset{p}{\to}\int\theta^2(d^{opt}(X), X)d\p_X.
		\end{equation}
		Follow the same technique used in (\ref{eq:expectation}), we can show that
		$$
		\E\left(\frac{C_s Y_s^2}{\pi_{C, s}}\bigg| \F_{s-1}\right) = \int\theta^2(\hat{d}_{s-1}^{opt}(X), X)d\p_X,
		$$
		which converges to $\int\theta^2(d^{opt}(X), X)d\p_X$ as a continuous function of $\bar{\beta}_{s-1}$. Then (\ref{eq:converge}) follows by repeating the argument used from (\ref{eq:Shat11}) to (\ref{eq:Shat22}).
	\end{proof}
	\newpage
	\section{Extended Simulation Results}\label{sec:extension}
	\subsection{Effect of different learning rates}
	
	\begin{figure}[!htbp]
	    \centering
	    \begin{subfigure}{0.75\textwidth}
	        \centering
    	    \includegraphics[width=\linewidth]{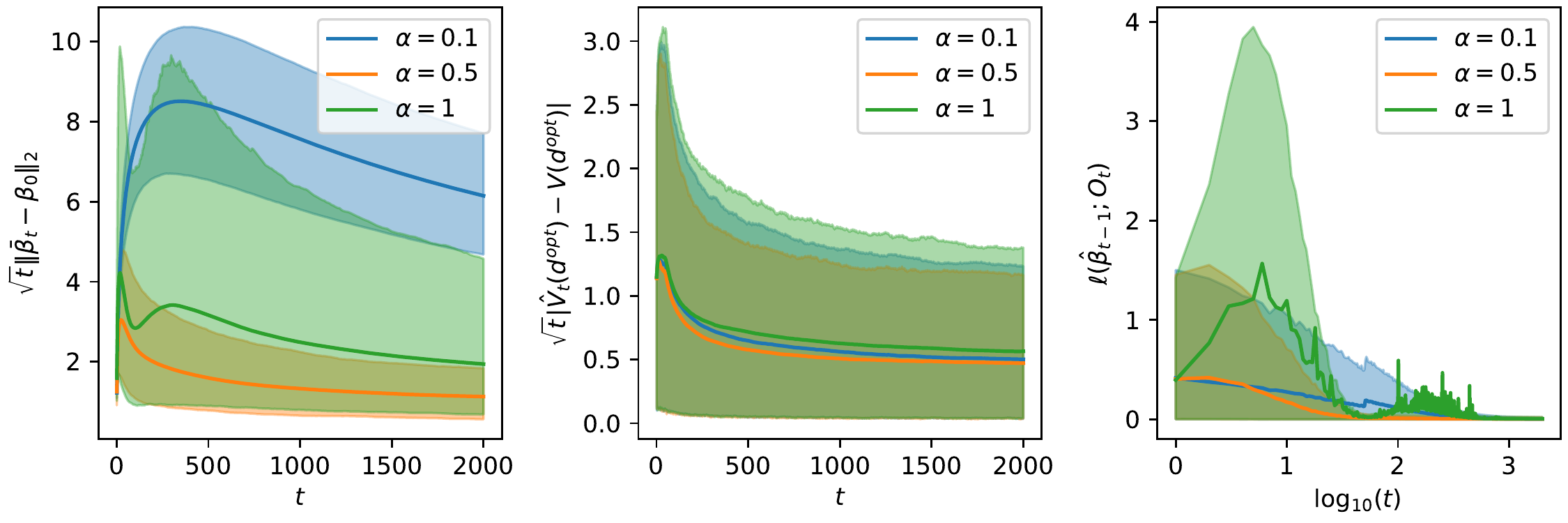}
    	    \caption{Linear reward model}
    	\end{subfigure}%
    	\\
    	\begin{subfigure}{0.75\textwidth}
	        \centering
    	    \includegraphics[width=\linewidth]{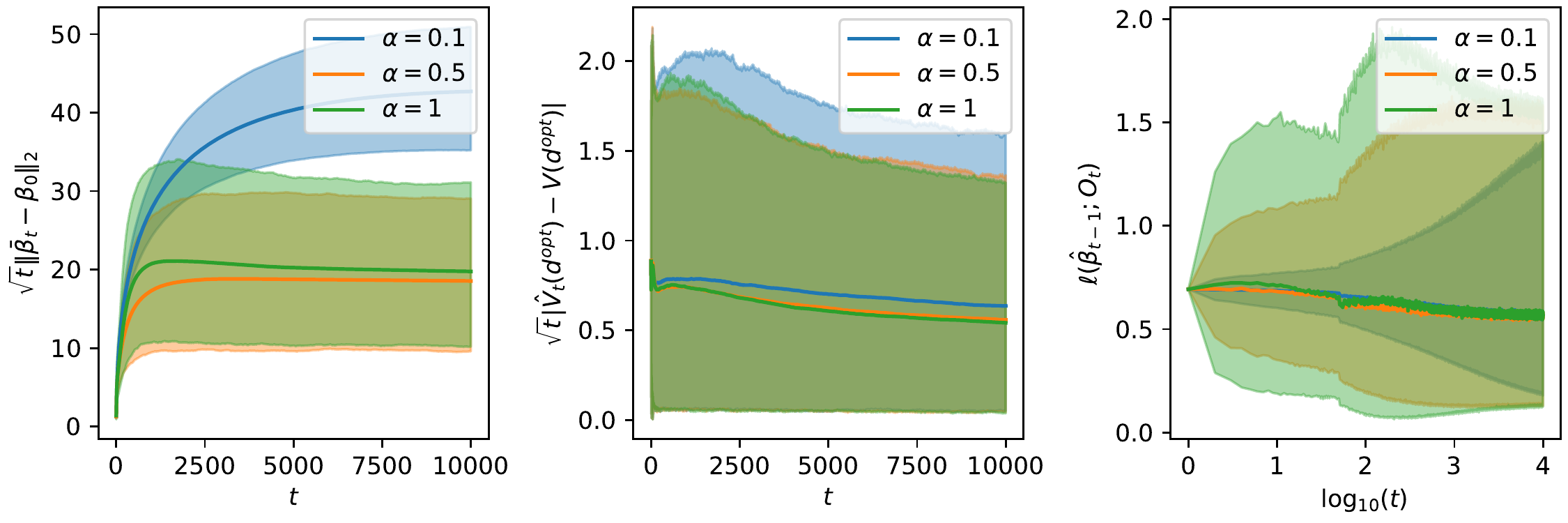}
    	    \caption{Logistic reward model}
    	\end{subfigure}
	    \caption{Performance of the online decision making algorithm with different learning rates. The exploration rate is $\varepsilon_t = 0.1$. All experiments are repeated 5000 times. The solid lines are mean outcomes and the shaded regions are bounded by 5\% and 95\% percentiles of the outcomes.}
	    \label{fig:tuning_e1}
	\end{figure}
	
	\begin{figure}[!htbp]
	    \centering
	    \begin{subfigure}{0.75\textwidth}
	        \centering
    	    \includegraphics[width=\linewidth]{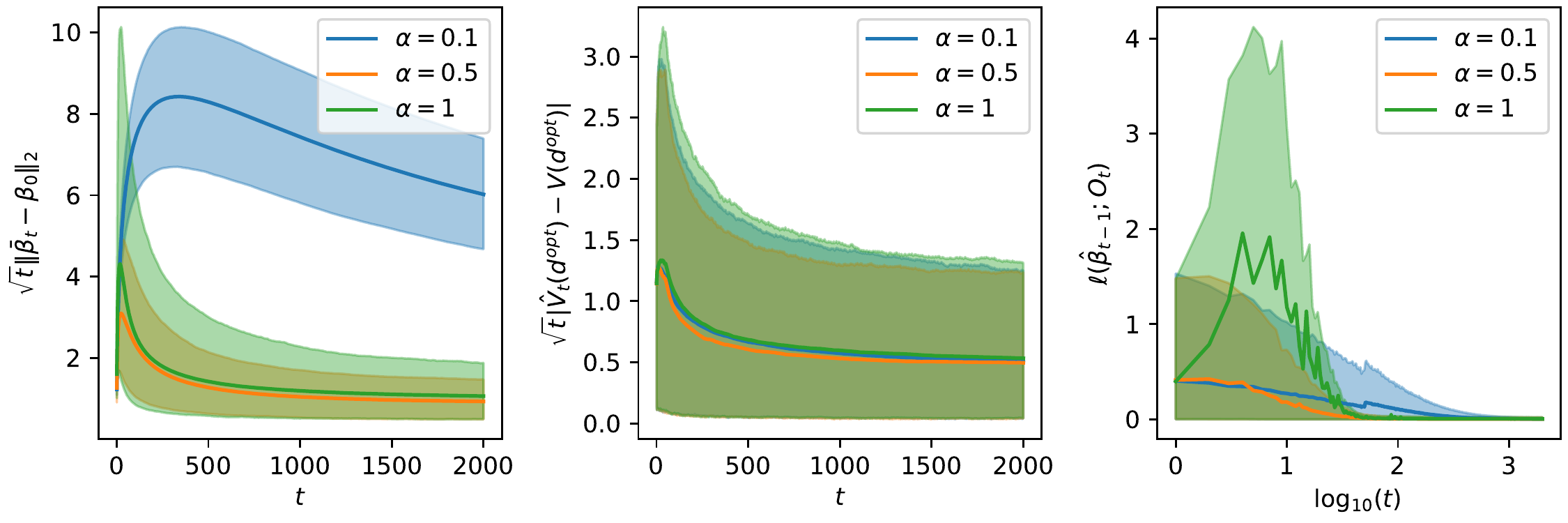}
    	    \caption{Linear reward model}
    	\end{subfigure}%
    	\\
    	\begin{subfigure}{0.75\textwidth}
	        \centering
    	    \includegraphics[width=\linewidth]{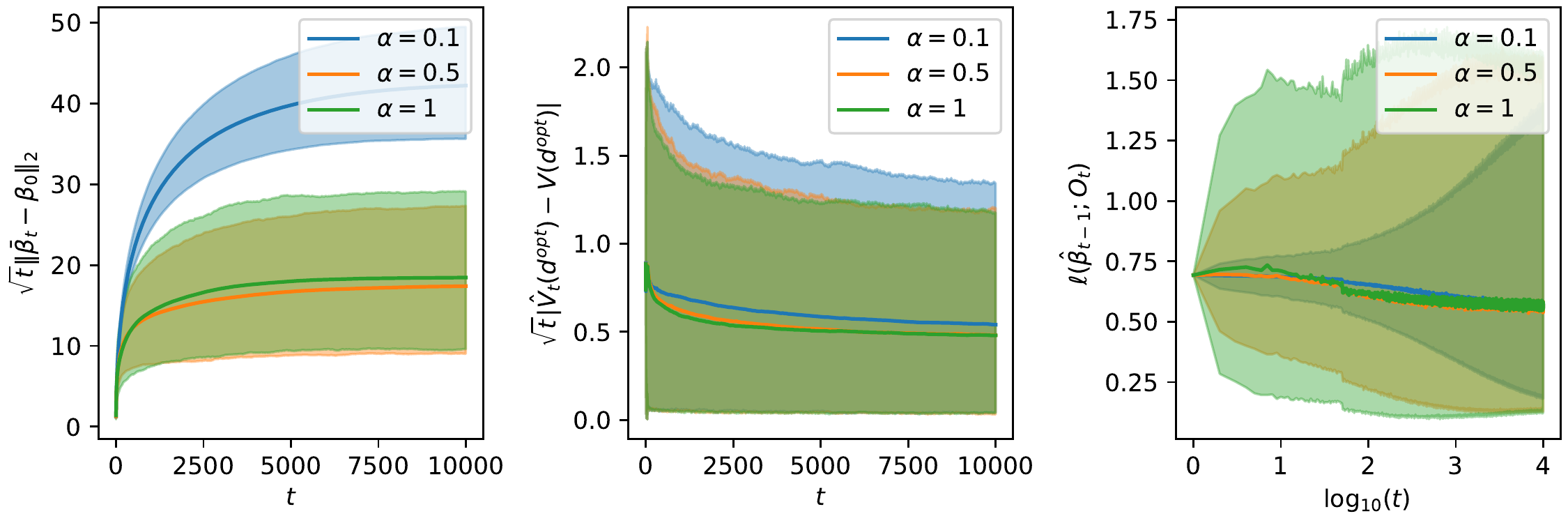}
    	    \caption{Logistic reward model}
    	\end{subfigure}
	    \caption{Performance of the online decision making algorithm with different learning rates. The exploration rate is $\varepsilon_t = t^{-0.3}\vee 0.1$. All experiments are repeated 5000 times. The solid lines are mean outcomes and the shaded regions are bounded by 5\% and 95\% percentiles of the outcomes.}
	    \label{fig:tuning_e3}
	\end{figure}
	
	\newpage
	\subsection{Parameter and value convergence for the other exploration rates}
	
	\begin{figure}[!htbp]
		\centering
		\begin{subfigure}{0.5\textwidth}
			\centering
			\includegraphics[width=\linewidth]{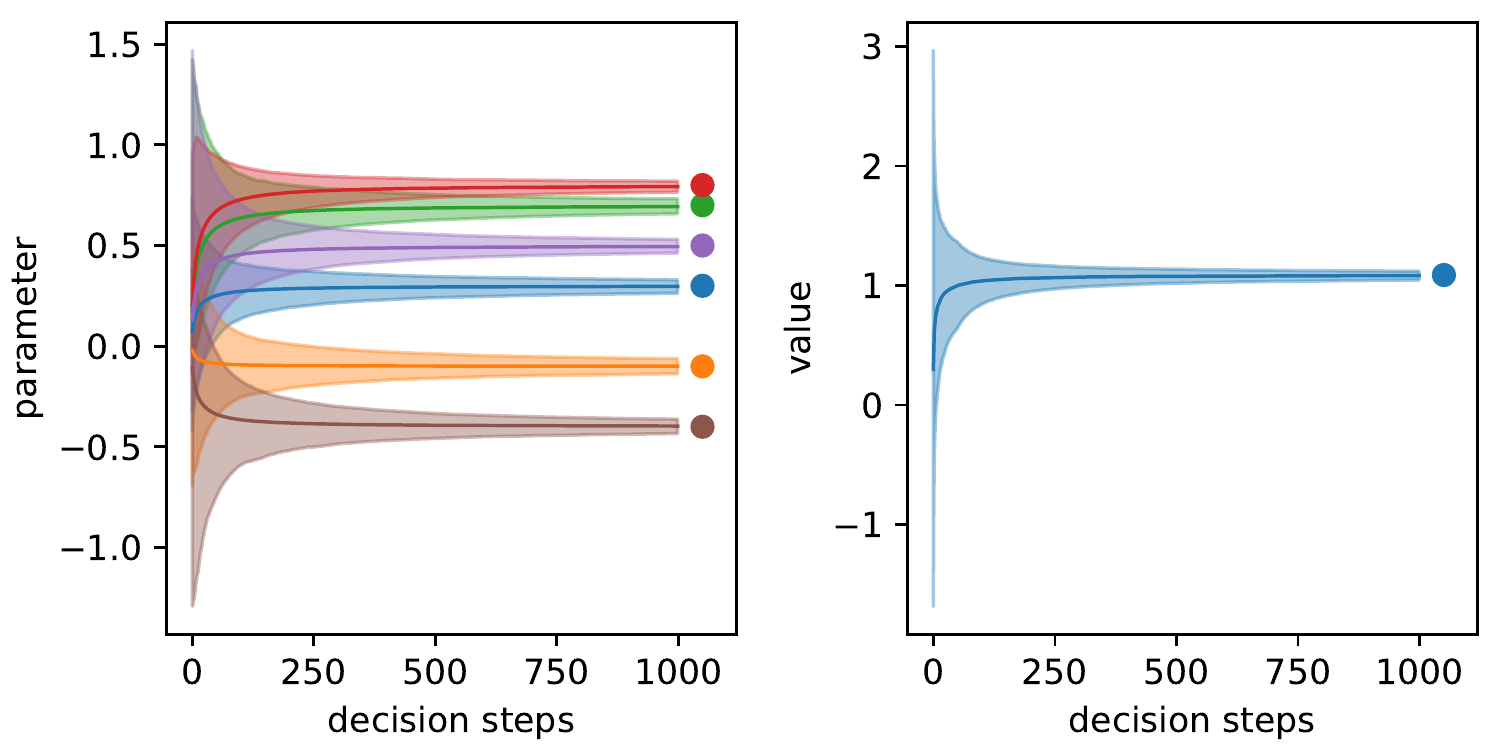}
			\caption{Linear model weighted SGD}
		\end{subfigure}%
		\begin{subfigure}{0.5\textwidth}
			\centering
			\includegraphics[width=\linewidth]{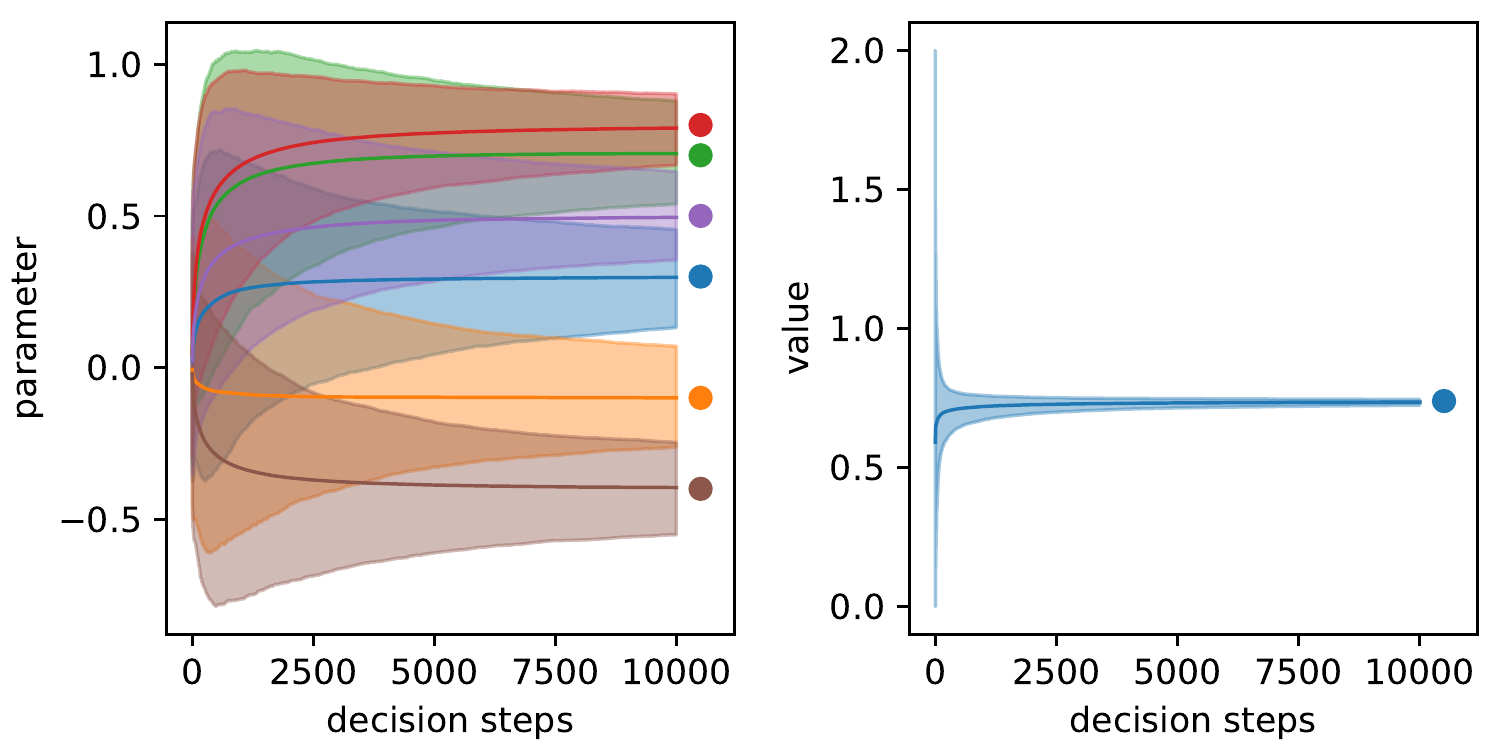}
			\caption{Logistic model weighted SGD}
		\end{subfigure}
		\\
		\begin{subfigure}{0.5\textwidth}
			\centering
			\includegraphics[width=\linewidth]{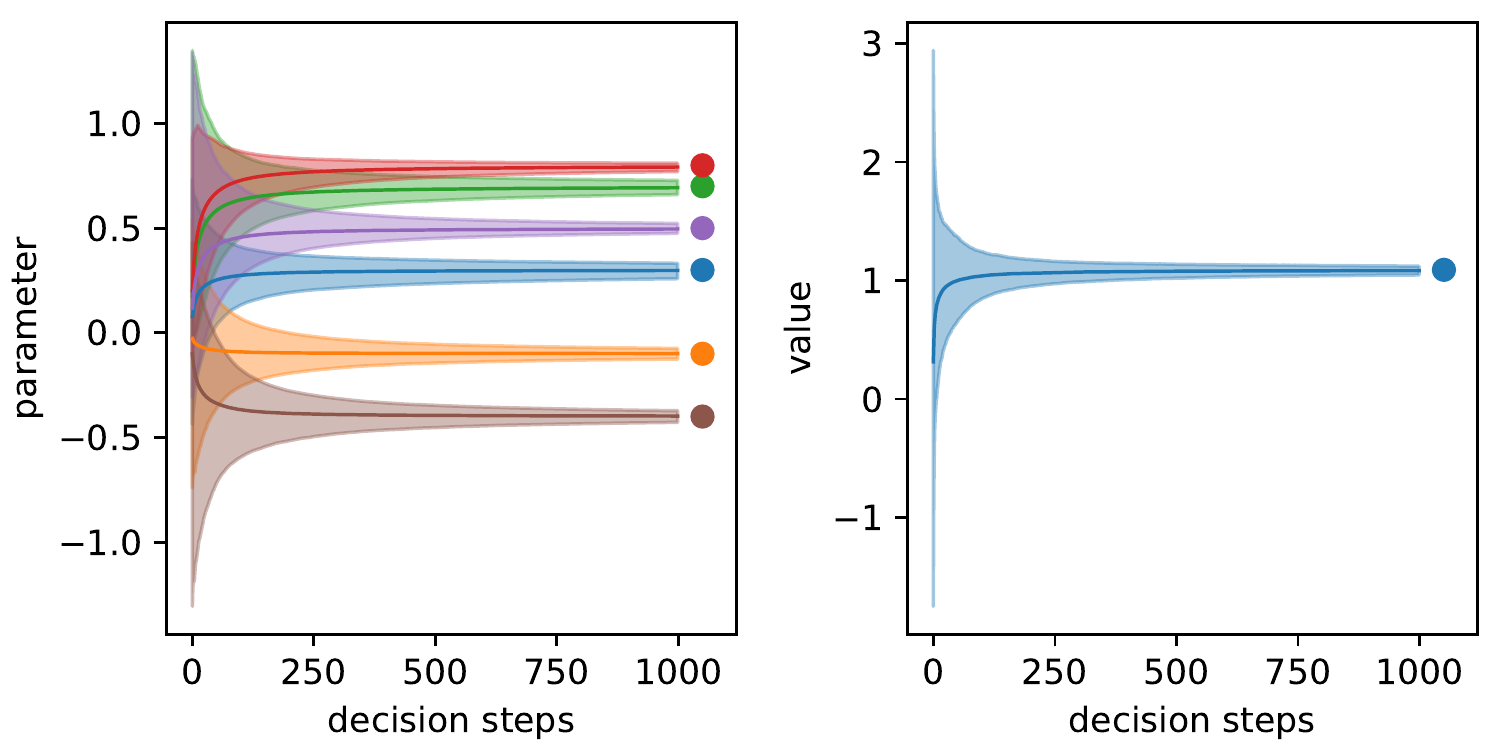}
			\caption{Linear model SGD}
		\end{subfigure}%
		\begin{subfigure}{0.5\textwidth}
			\centering
			\includegraphics[width=\linewidth]{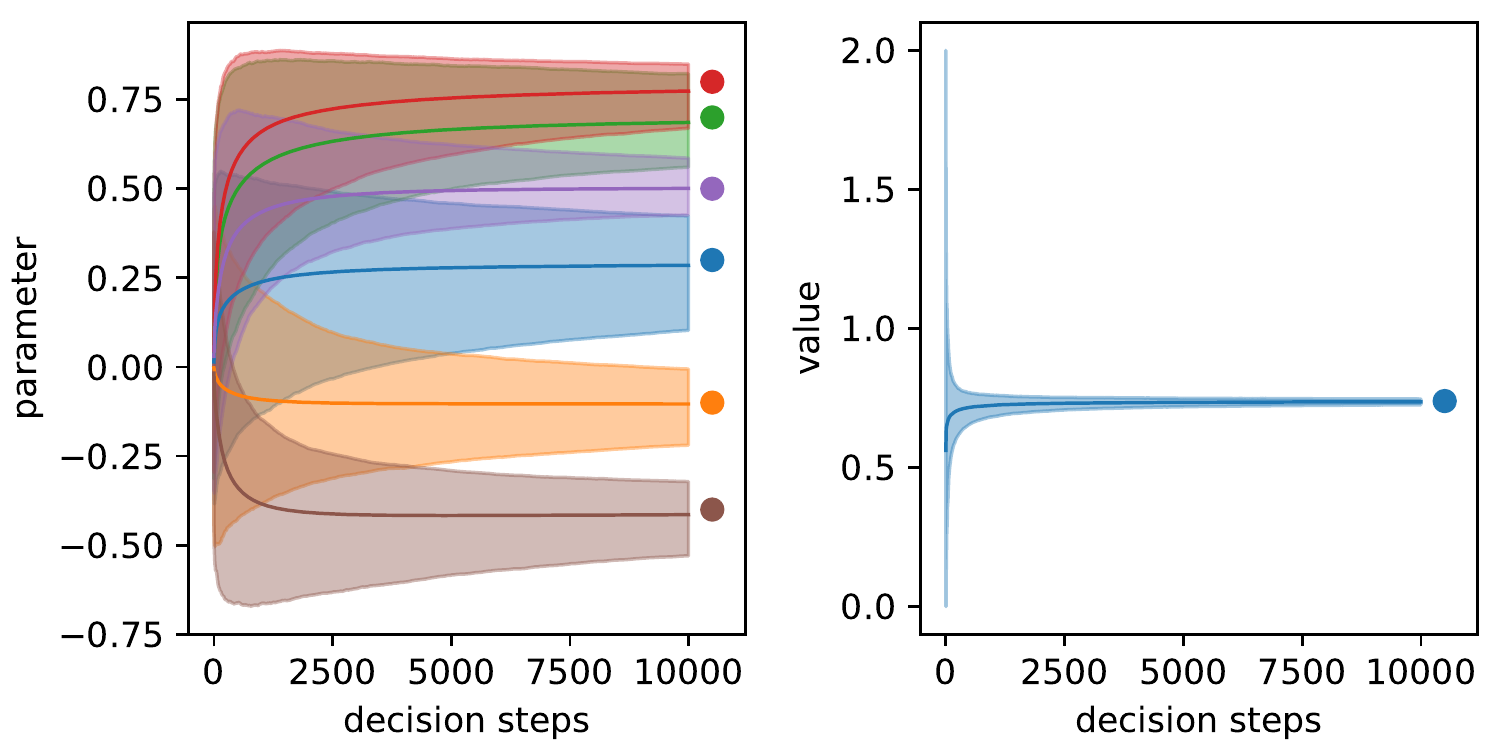}
			\caption{Logistic model SGD}
		\end{subfigure}
		\caption{Parameter and optimal value estimation from 5000 repeated experiments following the proposed weighted SGD method with IPW gradients and the conventional SGD method. The learning rate is $\alpha_t= 0.5t^{-0.501}$ and the exploration rate is $\varepsilon_t = 0.1$. The solid lines are mean estimates and the shaded regions are bounded by 2.5\% and 97.5\% percentiles of the estimates. The points at the end of the lines mark the true value.}
		\label{fig:consist_e1}
	\end{figure}
	
	\begin{figure}[!htbp]
		\centering
		\begin{subfigure}{0.5\textwidth}
			\centering
			\includegraphics[width=\linewidth]{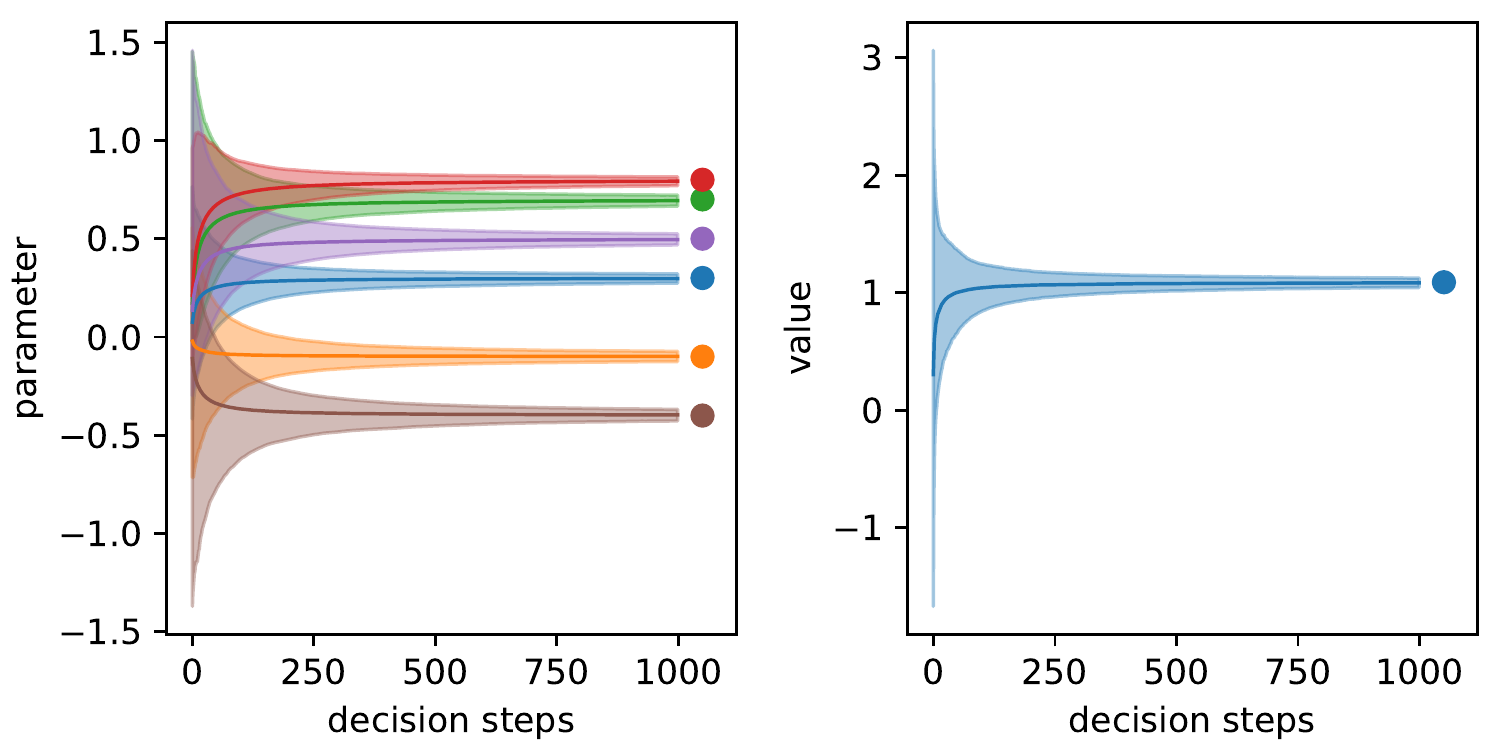}
			\caption{Linear model weighted SGD}
		\end{subfigure}%
		\begin{subfigure}{0.5\textwidth}
			\centering
			\includegraphics[width=\linewidth]{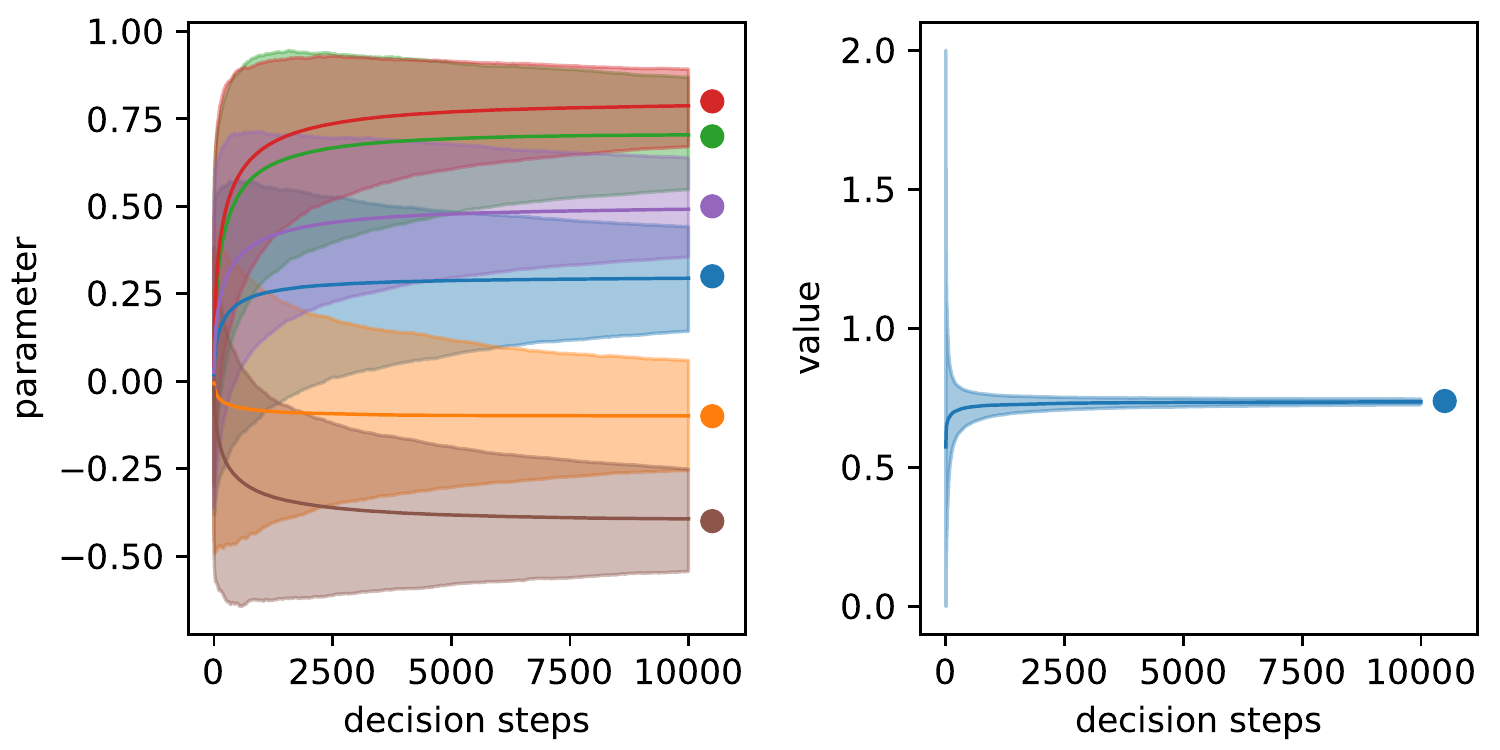}
			\caption{Logistic model weighted SGD}
		\end{subfigure}
		\\
		\begin{subfigure}{0.5\textwidth}
			\centering
			\includegraphics[width=\linewidth]{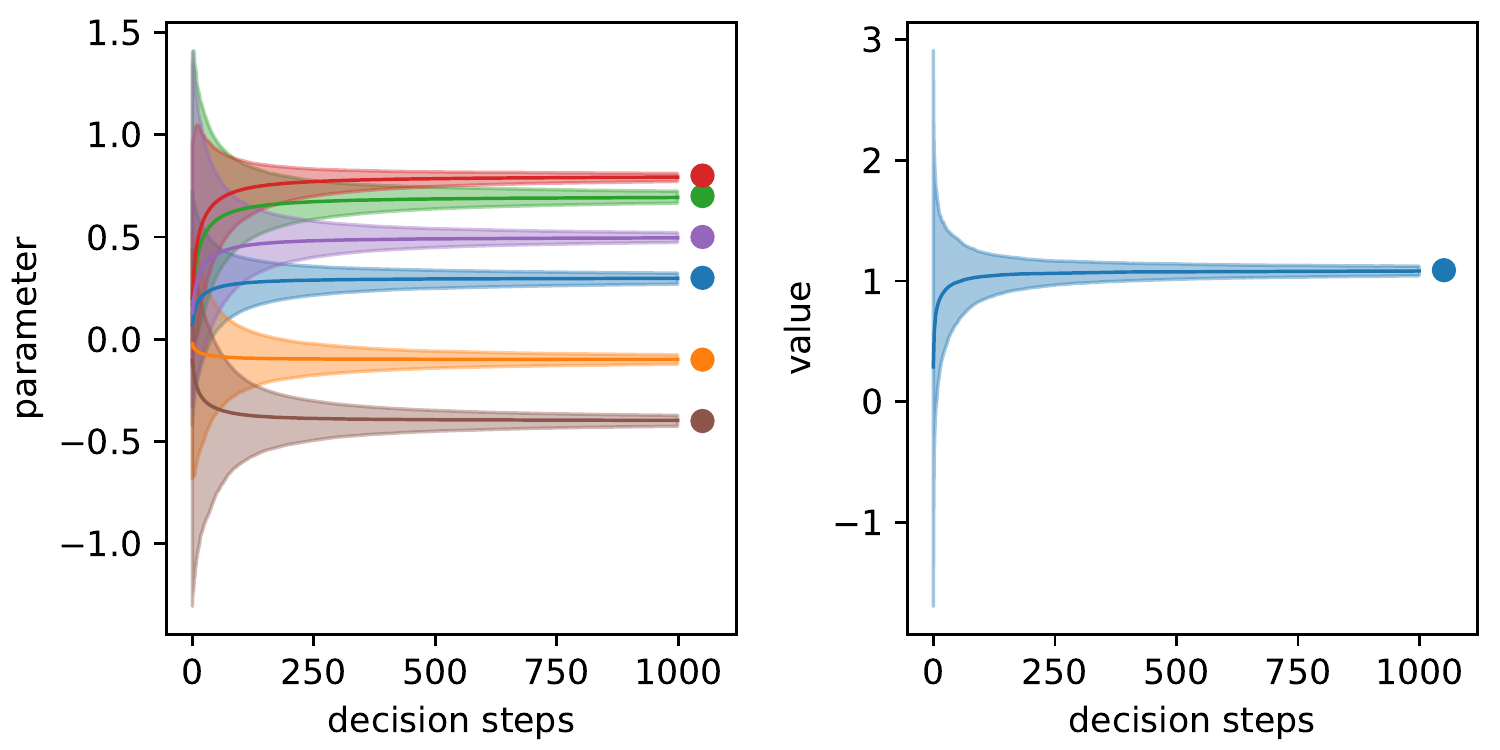}
			\caption{Linear model SGD}
		\end{subfigure}%
		\begin{subfigure}{0.5\textwidth}
			\centering
			\includegraphics[width=\linewidth]{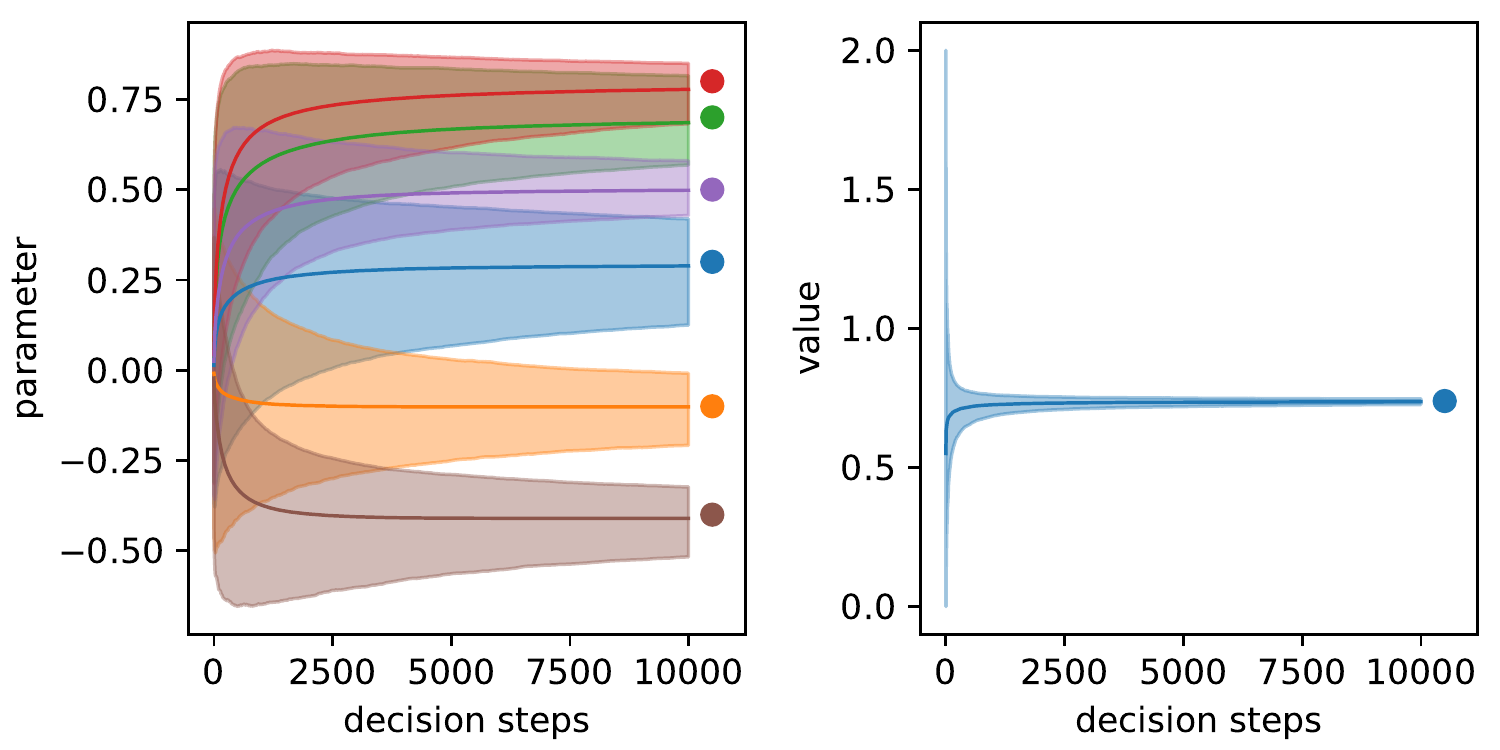}
			\caption{Logistic model SGD}
		\end{subfigure}
		\caption{Parameter and optimal value estimation from 5000 repeated experiments following the proposed weighted SGD method with IPW gradients and the conventional SGD method. The learning rate is $\alpha_t= 0.5t^{-0.501}$ and the exploration rate is $\varepsilon_t = t^{-0.3}\vee 0.1$. The solid lines are mean estimates and the shaded regions are bounded by 2.5\% and 97.5\% percentiles of the estimates. The points at the end of the lines mark the true value.}
		\label{fig:consist_e3}
	\end{figure}
	
	\newpage
	\subsection{Original data for Figure \ref{fig:exploration}}\label{sec:table}
	\begin{table}[!htbp]
		\centering
		\begin{tabular}{cccccccccc}
			\hline
			$\varepsilon_t$ & &$t$ &$\bar{\beta}_{t,01}$ &$\bar{\beta}_{t,02}$ &$\bar{\beta}_{t,03}$ &$\bar{\beta}_{t,11}$ &$\bar{\beta}_{t,12}$ &$\bar{\beta}_{t,13}$ & $\hat{V}_t(d^{opt})$\\
			\hline 
			           &R &       &0.876 &0.821 &0.796 &0.840 &0.815 &0.716 &0.894\\
			           &C &$10^3$ &0.919 &0.897 &0.875 &0.875 &0.885 &0.888 &0.903\\
			           &L &       &0.059 &0.062 &0.063 &0.049 &0.055 &0.059 &0.068\\
			           \cline{2-10}
			           &R &       &0.952 &0.904 &0.911 &0.945 &0.910 &0.901 &0.952\\
			Fixed      &C &$10^4$ &0.935 &0.928 &0.924 &0.933 &0.923 &0.923 &0.936\\
			0.1        &L &       &0.015 &0.015 &0.014 &0.011 &0.012 &0.013 &0.021\\
			           \cline{2-10}
			           &R &       &0.970 &0.973 &0.979 &0.974 &0.992 &0.955 &0.998\\
			           &C &$10^5$ &0.943 &0.945 &0.945 &0.944 &0.949 &0.940 &0.945\\
			           &L &       &0.005 &0.004 &0.004 &0.003 &0.004 &0.004 &0.007\\
			\hline 
			           &R &       &0.929 &0.907 &0.882 &0.892 &0.814 &0.854 &0.929\\
			           &C &$10^3$ &0.933 &0.934 &0.892 &0.857 &0.900 &0.916 &0.911\\
			           &L &       &0.043 &0.045 &0.048 &0.039 &0.045 &0.046 &0.077\\
			           \cline{2-10}
			           &R &       &0.963 &0.970 &0.957 &0.951 &0.938 &0.924 &1.005\\
			Fixed      &C &$10^4$ &0.938 &0.942 &0.934 &0.926 &0.933 &0.927 &0.945\\
			0.2        &L &       &0.011 &0.011 &0.010 &0.009 &0.009 &0.010 &0.024\\
			           \cline{2-10}
			           &R &       &0.983 &0.972 &0.961 &0.995 &0.988 &0.993 &1.008\\
			           &C &$10^5$ &0.947 &0.945 &0.940 &0.948 &0.945 &0.952 &0.948\\
			           &L &       &0.003 &0.003 &0.003 &0.003 &0.003 &0.003 &0.008\\
			\hline
			           &R &       &0.946 &0.911 &0.884 &0.905 &0.825 &0.857 &0.837\\
			           &C &$10^3$ &0.934 &0.931 &0.894 &0.870 &0.899 &0.916 &0.881\\
			           &L &       &0.045 &0.048 &0.050 &0.040 &0.046 &0.048 &0.068\\
			           \cline{2-10}
			           &R &       &0.963 &0.924 &0.923 &0.946 &0.929 &0.925 &0.995\\
			Decreasing &C &$10^4$ &0.942 &0.930 &0.924 &0.928 &0.931 &0.928 &0.947\\
			           &L &       &0.014 &0.014 &0.014 &0.011 &0.012 &0.013 &0.021\\
			           \cline{2-10}
			           &R &       &0.965 &0.975 &0.974 &0.987 &0.975 &0.977 &0.993\\
			           &C &$10^5$ &0.943 &0.948 &0.944 &0.944 &0.945 &0.946 &0.946\\
			           &L &       &0.005 &0.004 &0.004 &0.003 &0.004 &0.004 &0.007\\
			\hline
		\end{tabular}
		\caption{Average standard error to Monte Carlo standard deviation ratio (R), coverage probability (C), and average length of 95\% confidence interval (L) of parameter and value estimators in the linear model setting.}
		\label{tab:linear}
	\end{table}
	
	\begin{table}[!htbp]
		\centering
		\begin{tabular}{cccccccccc}
			\hline
			$\varepsilon_t$ &  &$t$ &$\bar{\beta}_{t,01}$ &$\bar{\beta}_{t,02}$ &$\bar{\beta}_{t,03}$ &$\bar{\beta}_{t,11}$ &$\bar{\beta}_{t,12}$ &$\bar{\beta}_{t,13}$ & $\hat{V}_t(d^{opt})$\\
			\hline
			           &R &       &1.071 &1.033 &1.063 &1.011 &0.994 &1.007 &0.671\\
			           &C &$10^3$ &0.962 &0.954 &0.945 &0.923 &0.898 &0.910 &0.717\\
			           &L &       &0.963 &0.977 &0.960 &0.708 &0.795 &0.846 &0.061\\
			           \cline{2-10}
			           &R &       &0.947 &0.925 &0.919 &0.953 &0.919 &0.922 &0.878\\
			Fixed      &C &$10^4$ &0.939 &0.934 &0.929 &0.941 &0.923 &0.925 &0.840\\
			0.1        &L &       &0.308 &0.312 &0.319 &0.224 &0.275 &0.287 &0.019\\
			           \cline{2-10}
			           &R &       &0.979 &0.963 &0.974 &0.994 &0.966 &0.987 &0.961\\
			           &C &$10^5$ &0.948 &0.941 &0.939 &0.948 &0.944 &0.950 &0.923\\
			           &L &       &0.097 &0.097 &0.100 &0.070 &0.088 &0.091 &0.006\\
			\hline
			           &R &       &1.037 &1.029 &1.016 &0.968 &0.992 &1.013 &0.816\\
			           &C &$10^3$ &0.955 &0.954 &0.909 &0.849 &0.876 &0.909 &0.814\\
			           &L &       &0.675 &0.684 &0.669 &0.521 &0.583 &0.614 &0.066\\
			           \cline{2-10}
			           &R &       &0.948 &0.924 &0.921 &0.940 &0.922 &0.924 &0.951\\
			Fixed      &C &$10^4$ &0.940 &0.926 &0.927 &0.929 &0.924 &0.926 &0.905\\
			0.2        &L &       &0.219 &0.220 &0.225 &0.166 &0.199 &0.206 &0.021\\
			           \cline{2-10}
			           &R &       &0.978 &0.966 &0.969 &0.993 &0.967 &0.970 &0.974\\
			           &C &$10^5$ &0.945 &0.945 &0.939 &0.946 &0.944 &0.945 &0.936\\
			           &L &       &0.069 &0.069 &0.072 &0.053 &0.064 &0.066 &0.007\\
			\hline
			           &R &       &1.117 &1.127 &1.127 &1.058 &1.059 &1.101 &0.781\\
			           &C &$10^3$ &0.967 &0.973 &0.944 &0.885 &0.899 &0.927 &0.792\\
			           &L &       &0.753 &0.758 &0.739 &0.566 &0.635 &0.676 &0.060\\
			           \cline{2-10}
			           &R &       &0.975 &0.944 &0.948 &0.960 &0.962 &0.954 &0.936\\
			Decreasing &C &$10^4$ &0.943 &0.938 &0.935 &0.942 &0.937 &0.935 &0.886\\
			           &L &       &0.298 &0.301 &0.306 &0.218 &0.266 &0.278 &0.019\\
			           \cline{2-10}
			           &R &       &0.976 &0.963 &0.978 &0.969 &0.953 &0.966 &0.991\\
			           &C &$10^5$ &0.944 &0.940 &0.935 &0.940 &0.935 &0.937 &0.939\\
			           &L &       &0.096 &0.097 &0.100 &0.070 &0.087 &0.091 &0.006\\
			\hline
		\end{tabular}
		\caption{Average standard error to Monte Carlo standard deviation ratio (R), coverage probability (C), and average length of 95\% confidence interval (L) of parameter and value estimators in the logistic model setting.}
		\label{tab:logistic}
	\end{table}
	
	\newpage
	\subsection{Simulation results of the linear reward model with $\sigma^2 = 0.25$}
	
	\begin{figure}[!htbp]
		\centering
		\begin{subfigure}{0.75\textwidth}
			\centering
			\includegraphics[width=\linewidth]{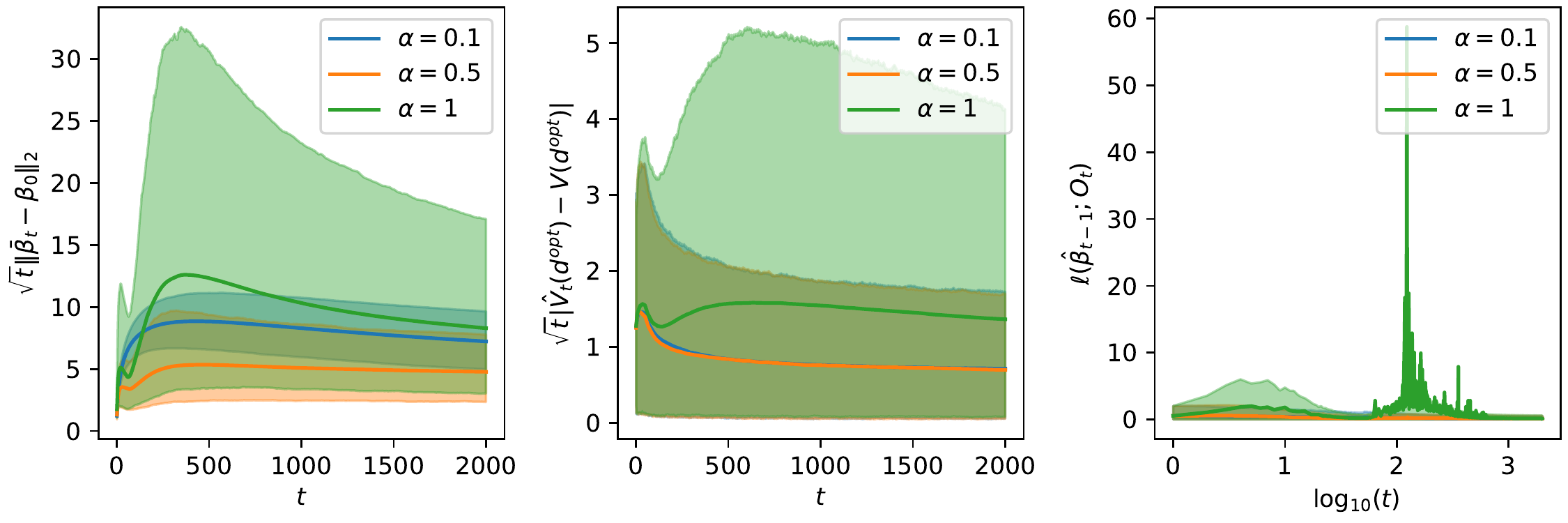}
			\caption{$\varepsilon_t = 0.1$}
		\end{subfigure}%
		\\
		\begin{subfigure}{0.75\textwidth}
			\centering
			\includegraphics[width=\linewidth]{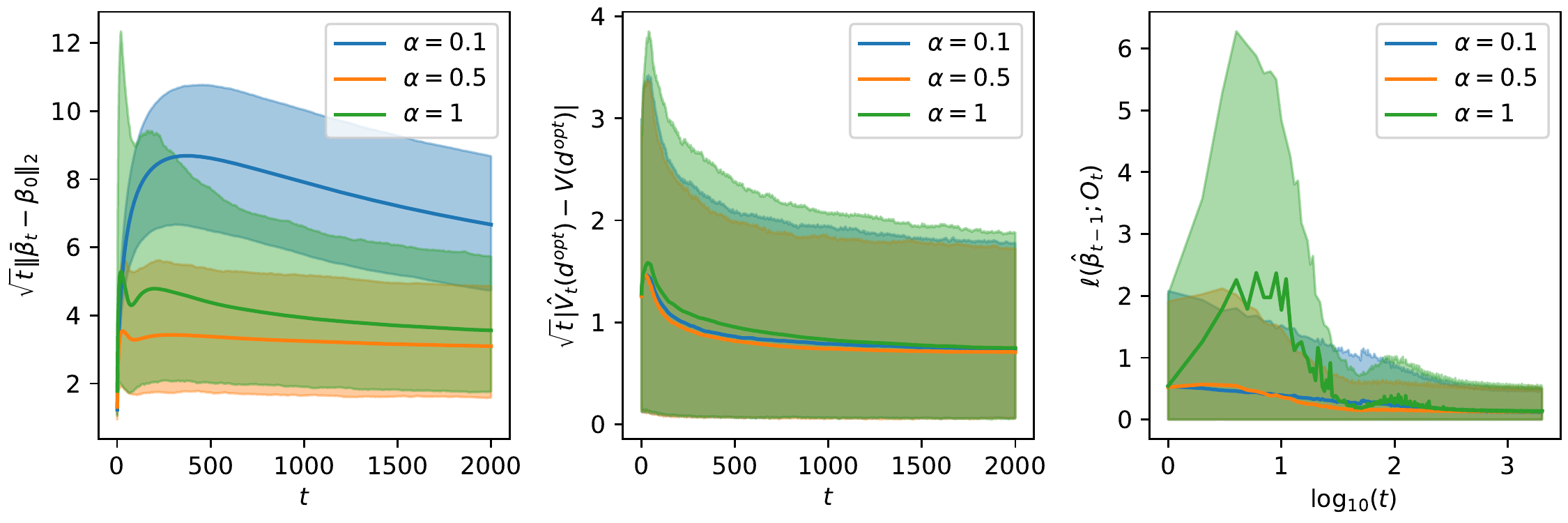}
			\caption{$\varepsilon_t = 0.2$}
		\end{subfigure}
		\\
		\begin{subfigure}{0.75\textwidth}
			\centering
			\includegraphics[width=\linewidth]{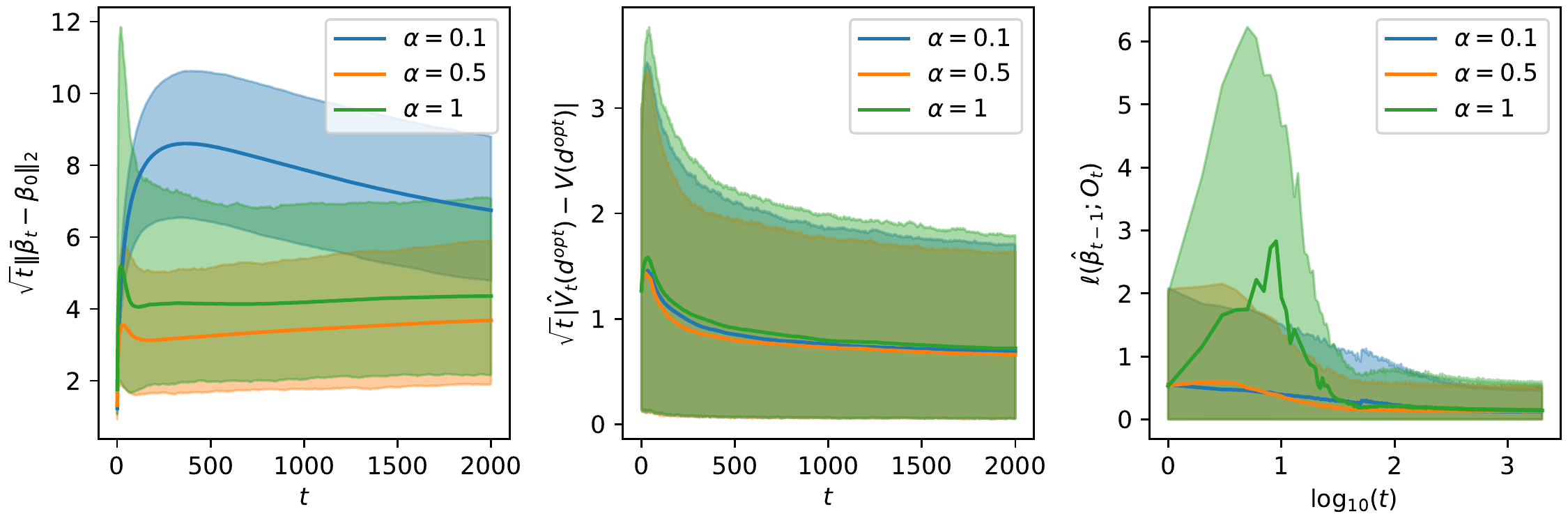}
			\caption{$\varepsilon_t = t^{-0.3}\vee 0.1$}
		\end{subfigure}
		\caption{Performance of the online decision making algorithm with different learning rates and exploration rates. The reward model is linear and $\sigma^2=0.25$. All experiments are repeated 5000 times. The solid lines are mean outcomes and the shaded regions are bounded by 5\% and 95\% percentiles of the outcomes.}
	\end{figure}
	
	\begin{figure}[!htbp]
		\centering
		\begin{subfigure}{0.5\textwidth}
			\centering
			\includegraphics[width=\linewidth]{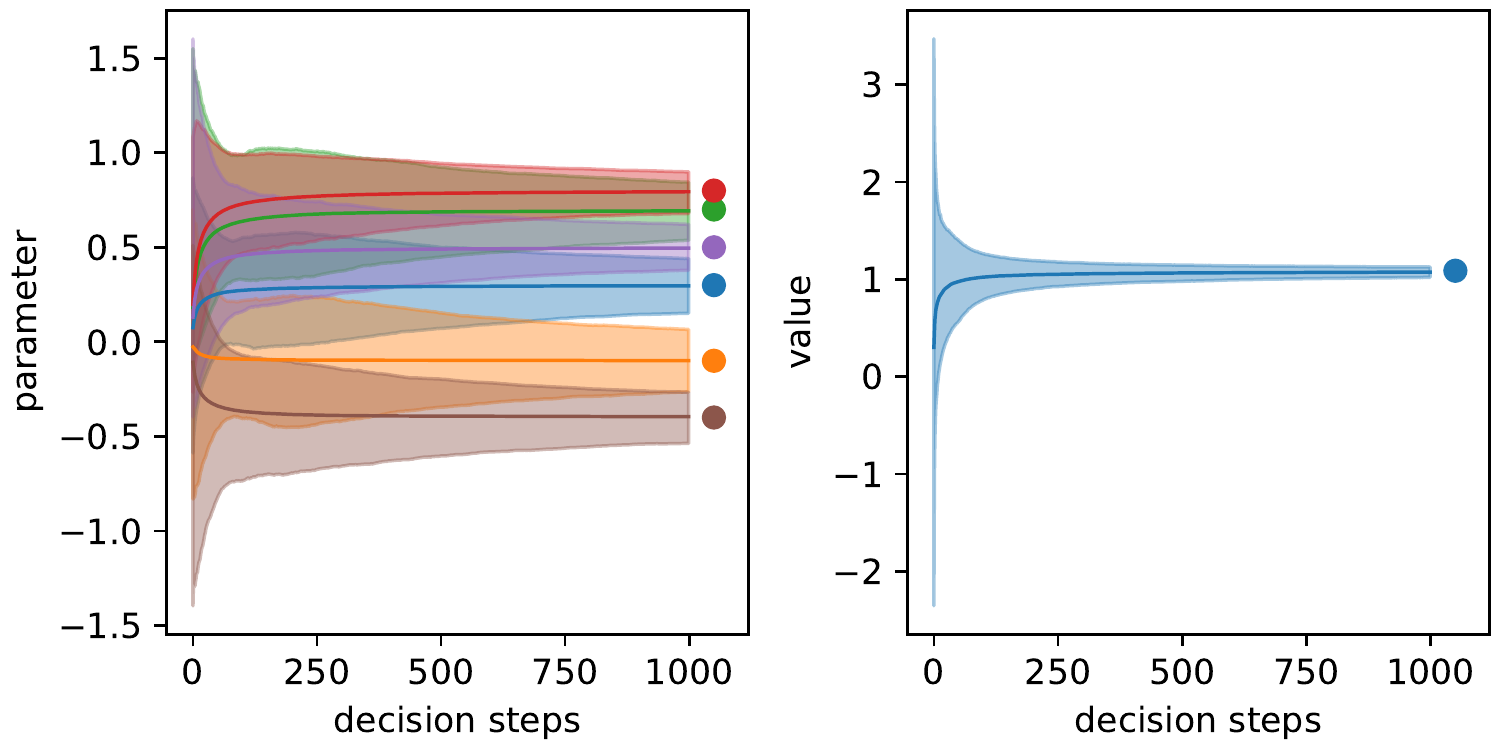}
			\caption{$\varepsilon_t = 0.1$}
		\end{subfigure}%
		\begin{subfigure}{0.5\textwidth}
			\centering
			\includegraphics[width=\linewidth]{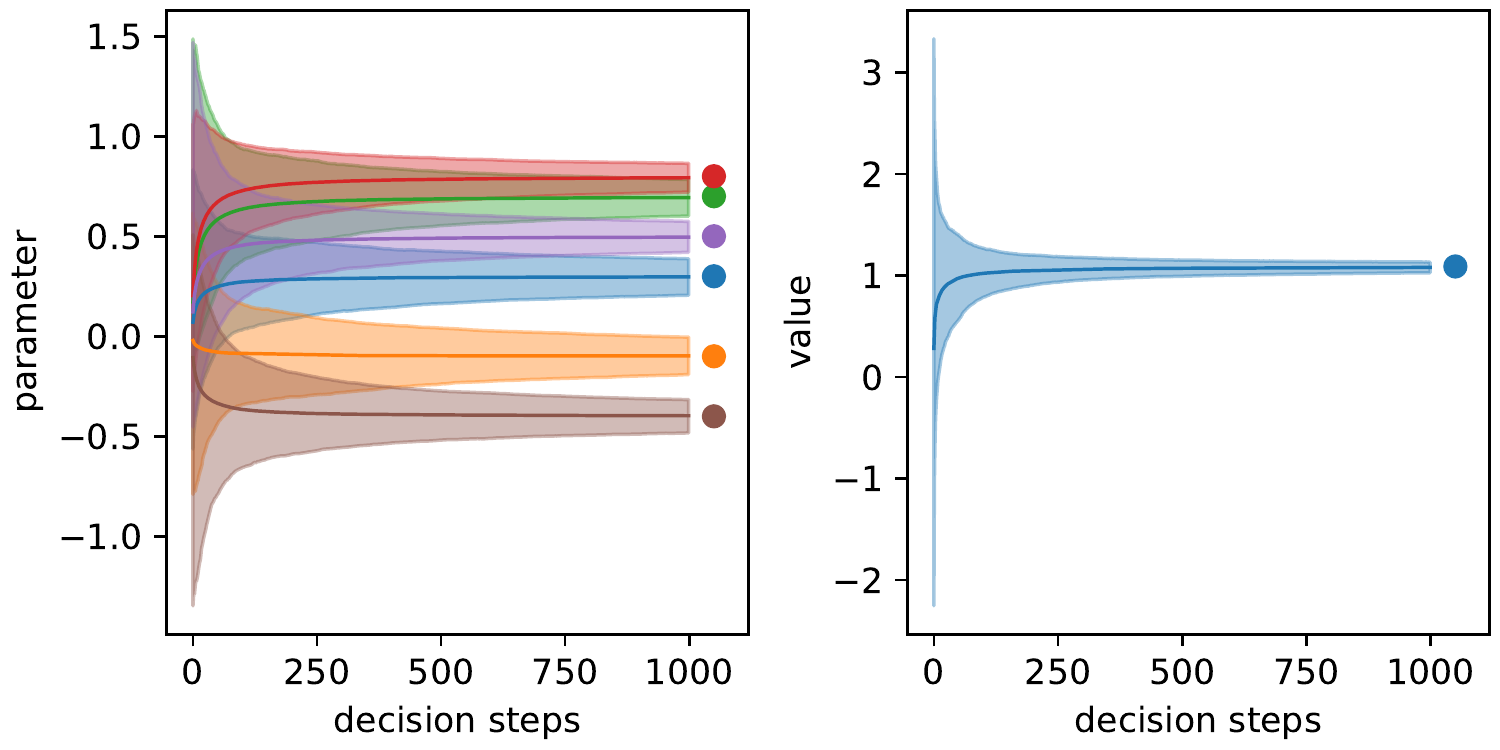}
			\caption{$\varepsilon_t = 0.2$}
		\end{subfigure}
		\\
		\begin{subfigure}{\textwidth}
			\centering
			\includegraphics[width=0.5\linewidth]{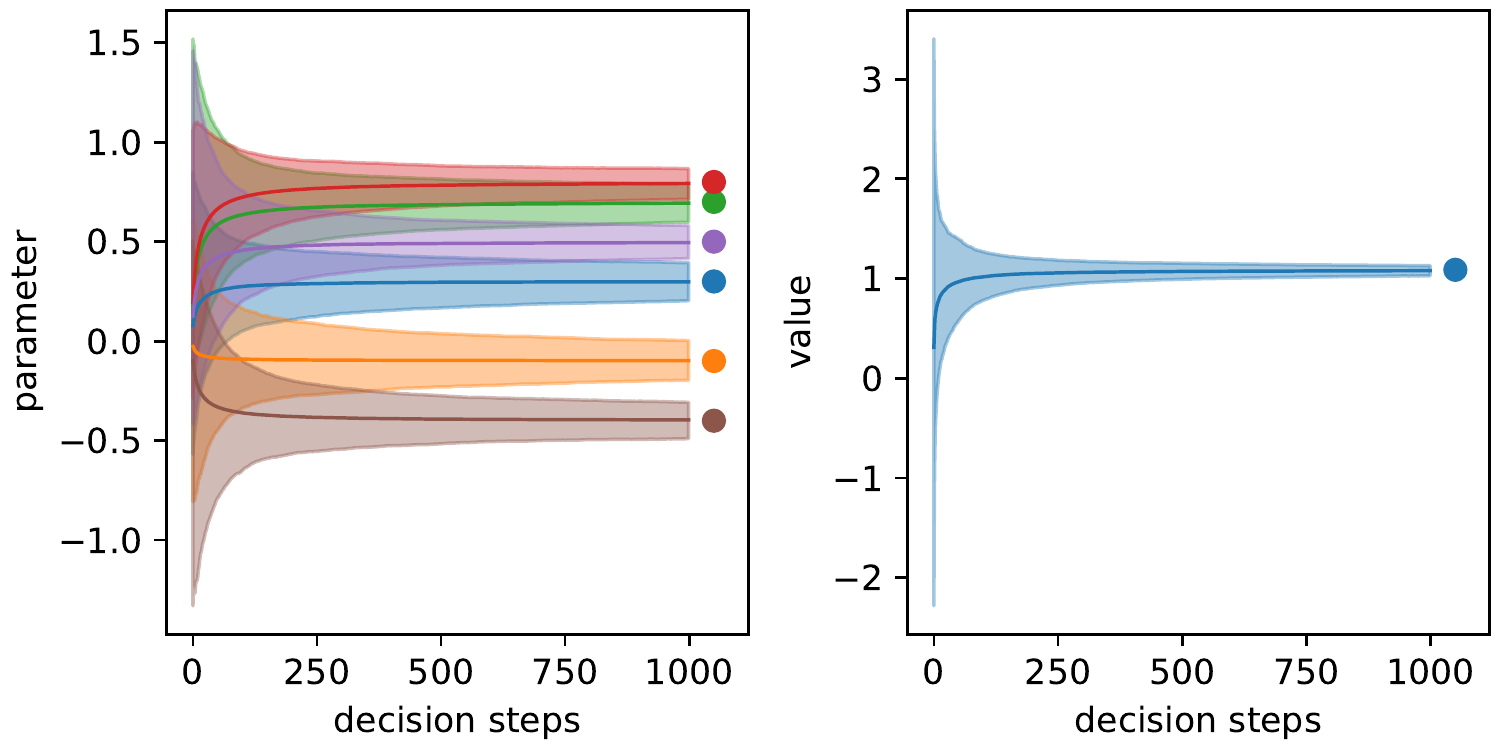}
			\caption{$\varepsilon_t = t^{-0.3}\vee 0.1$}
		\end{subfigure}
		\caption{Parameter and optimal value estimation from 5000 repeated experiments following the proposed weighted SGD method with IPW gradients. The reward model is linear and $\sigma^2=0.25$. The learning rate is $\alpha_t= 0.5t^{-0.501}$. The solid lines are mean estimates and the shaded regions are bounded by 2.5\% and 97.5\% percentiles of the estimates. The points at the end of the lines mark the true value.}
	\end{figure}
	
	\begin{figure}[!htbp]
		\centering
		\includegraphics[width=0.75\textwidth]{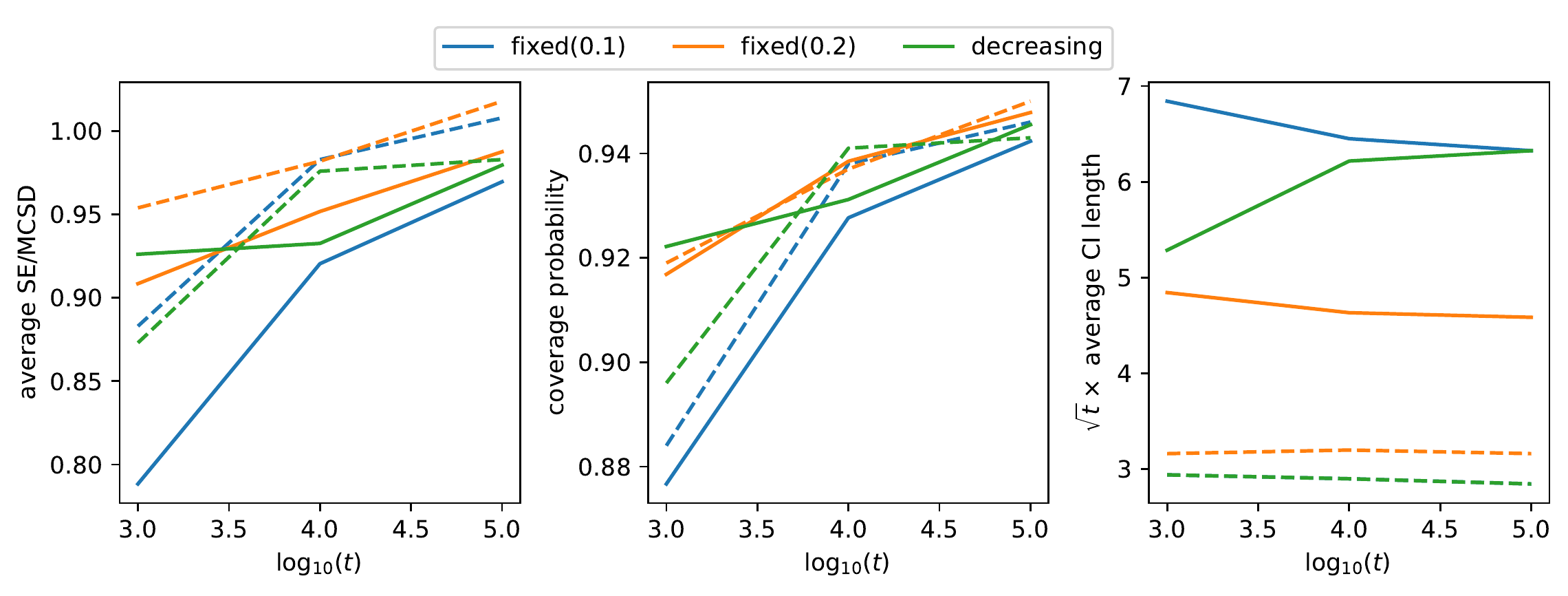}
		\caption{Online plugin variance estimation for the parameter and value estimators with different exploration rates. The reward model is linear and $\sigma^2=0.25$. The learning rate is $\alpha_t = 0.5t^{-0.501}$. The solid lines are the average results of the six parameters and the dashed lines are the results for the value estimation.}
	\end{figure}
	
	\begin{table}[!htbp]
		\centering
		\begin{tabular}{cccccccccc}
			\hline
			$\varepsilon_t$ & &$t$ &$\bar{\beta}_{t,01}$ &$\bar{\beta}_{t,02}$ &$\bar{\beta}_{t,03}$ &$\bar{\beta}_{t,11}$ &$\bar{\beta}_{t,12}$ &$\bar{\beta}_{t,13}$ & $\hat{V}_t(d^{opt})$\\
			\hline 
			&R &                &0.830 &0.754 &0.742 &0.855 &0.784 &0.765 &0.883\\
			&C &$10^3$          &0.897 &0.864 &0.858 &0.906 &0.865 &0.870 &0.884\\
			&L &                &0.246 &0.244 &0.234 &0.181 &0.187 &0.206 &0.093\\
			\cline{2-10}
			&R &                &0.962 &0.907 &0.907 &0.937 &0.900 &0.910 &0.983\\
			Fixed&C&$10^4$      &0.943 &0.920 &0.924 &0.932 &0.924 &0.923 &0.938\\
			0.1 &L &            &0.073 &0.072 &0.068 &0.055 &0.057 &0.062 &0.029\\
			\cline{2-10}
			&R &                &0.975 &0.953 &0.983 &0.970 &0.970 &0.967 &1.008\\
			&C &$10^5$          &0.944 &0.938 &0.949 &0.937 &0.941 &0.945 &0.946\\
			&L &                &0.023 &0.022 &0.021 &0.017 &0.018 &0.019 &0.009\\
			\hline
			&R &                &0.939 &0.889 &0.882 &0.934 &0.910 &0.897 &0.954\\
			&C &$10^3$          &0.931 &0.908 &0.908 &0.927 &0.916 &0.911 &0.919\\
			&L &                &0.169 &0.168 &0.161 &0.133 &0.139 &0.149 &0.100\\
			\cline{2-10}
			&R &                &0.966 &0.957 &0.951 &0.946 &0.942 &0.949 &0.982\\
			Fixed&C&$10^4$      &0.942 &0.941 &0.937 &0.937 &0.938 &0.936 &0.937\\
			0.2 &L &            &0.052 &0.051 &0.048 &0.040 &0.042 &0.045 &0.032\\
			\cline{2-10}
			&R &                &0.988 &0.987 &0.976 &1.000 &0.997 &0.978 &1.018\\
			&C &$10^5$          &0.945 &0.948 &0.945 &0.952 &0.950 &0.947 &0.950\\
			&L &                &0.016 &0.016 &0.015 &0.013 &0.013 &0.014 &0.010\\
			\hline
			&R &                &0.968 &0.911 &0.902 &0.956 &0.914 &0.906 &0.873\\
			&C &$10^3$          &0.940 &0.916 &0.911 &0.932 &0.920 &0.914 &0.896\\
			&L &                &0.187 &0.184 &0.177 &0.144 &0.149 &0.162 &0.093\\
			\cline{2-10}
			&R &                &0.950 &0.924 &0.921 &0.944 &0.930 &0.927 &0.976\\
			Decreasing&C&$10^4$ &0.940 &0.929 &0.931 &0.932 &0.931 &0.924 &0.941\\
			&L &                &0.071 &0.069 &0.065 &0.053 &0.055 &0.060 &0.029\\
			\cline{2-10}
			&R &                &1.000 &0.975 &0.994 &0.980 &0.970 &0.959 &0.983\\
			&C &$10^5$          &0.954 &0.941 &0.949 &0.945 &0.943 &0.941 &0.943\\
			&L &                &0.023 &0.022 &0.021 &0.017 &0.018 &0.019 &0.009\\
			\hline
		\end{tabular}
		\caption{Average standard error to Monte Carlo standard deviation ratio (R), coverage probability (C), and average length of 95\% confidence interval (L) of parameter and value estimators in the linear model setting with $\sigma^2=0.25$.}
		\label{tab:linear2}
	\end{table}
	
	\newpage
	
	\subsection{Comparison of variance estimation methods for the other exploration rates}
	
	\begin{figure}[!htbp]
		\centering
		\begin{subfigure}{0.5\textwidth}
			\centering
			\includegraphics[width=\linewidth]{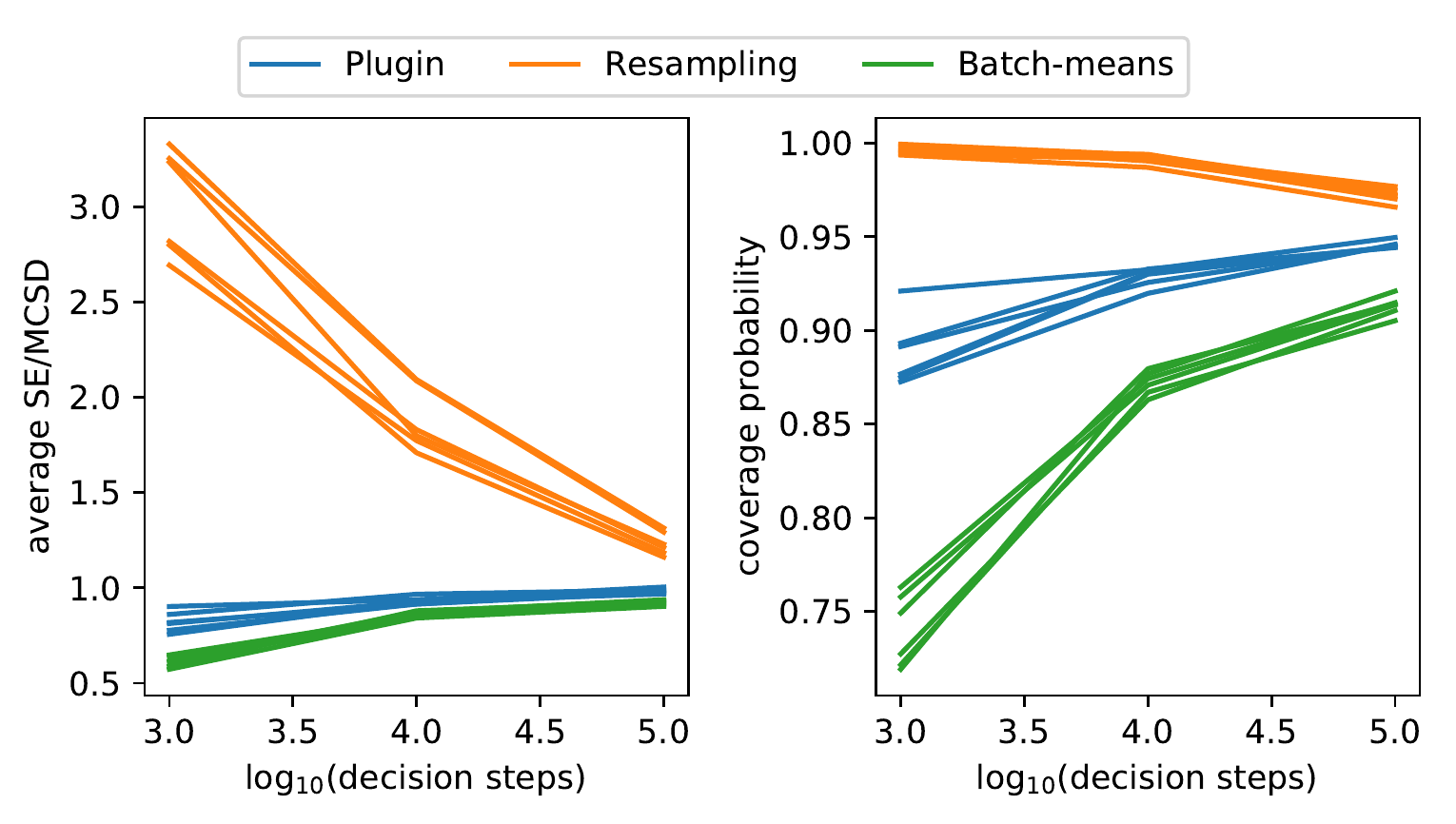}
			\caption{Linear reward model}
		\end{subfigure}%
		\begin{subfigure}{0.5\textwidth}
			\centering
			\includegraphics[width=\linewidth]{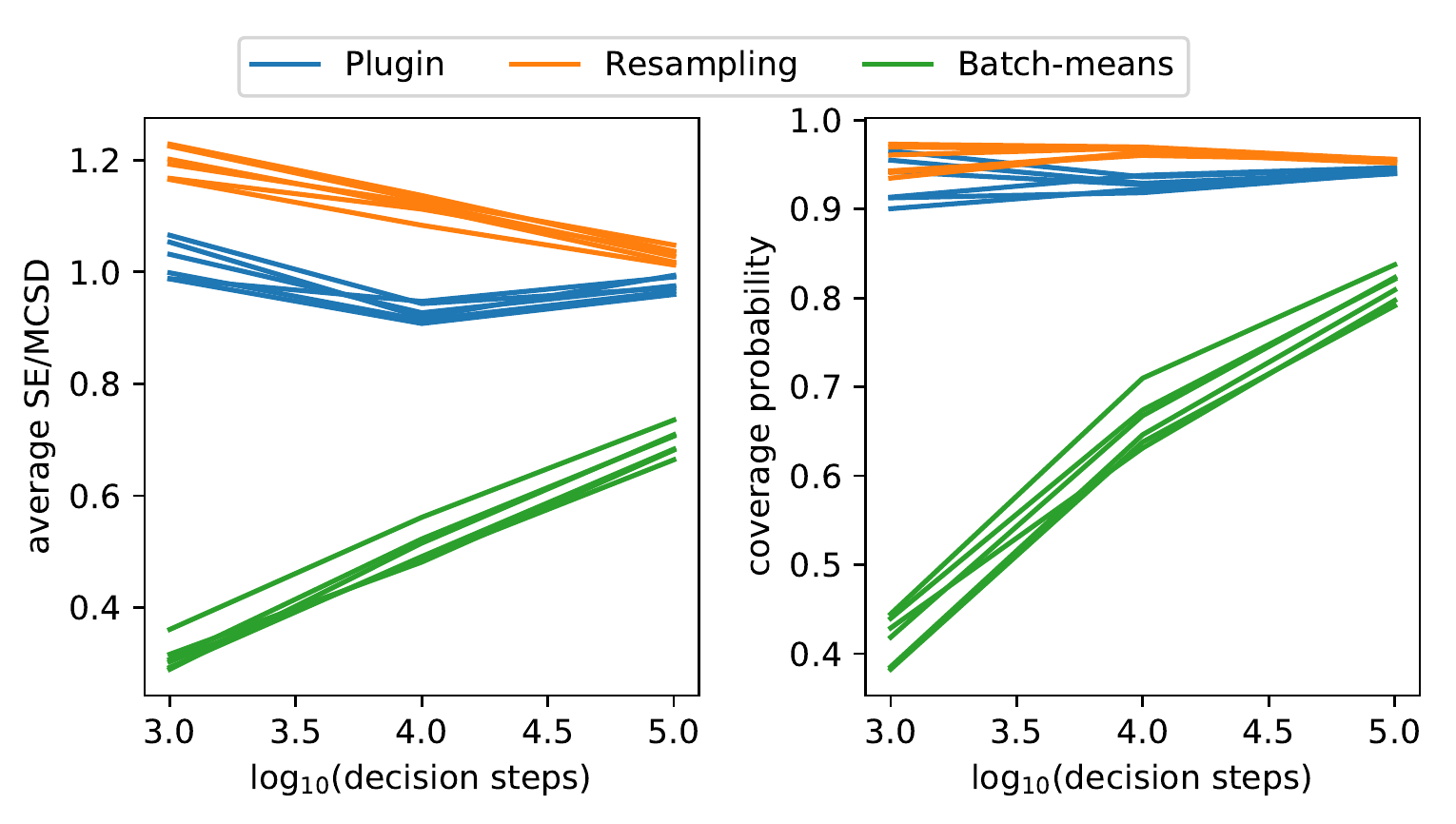}
			\caption{Logistic reward model}
		\end{subfigure}
		\caption{Comparison of variance estimation methods. The average standard error to Monte Carlo standard deviation and coverage probability are calculated from 5000 repeated experiments following the proposed SGD method with IPW gradients. The learning rate is $\alpha_t= 0.5t^{-0.501}$ and the exploration rate is $\varepsilon_t = 0.1$.}
		\label{fig:valueest_e1}
	\end{figure}
	
	\begin{figure}[!htbp]
		\centering
		\begin{subfigure}{0.5\textwidth}
			\centering
			\includegraphics[width=\linewidth]{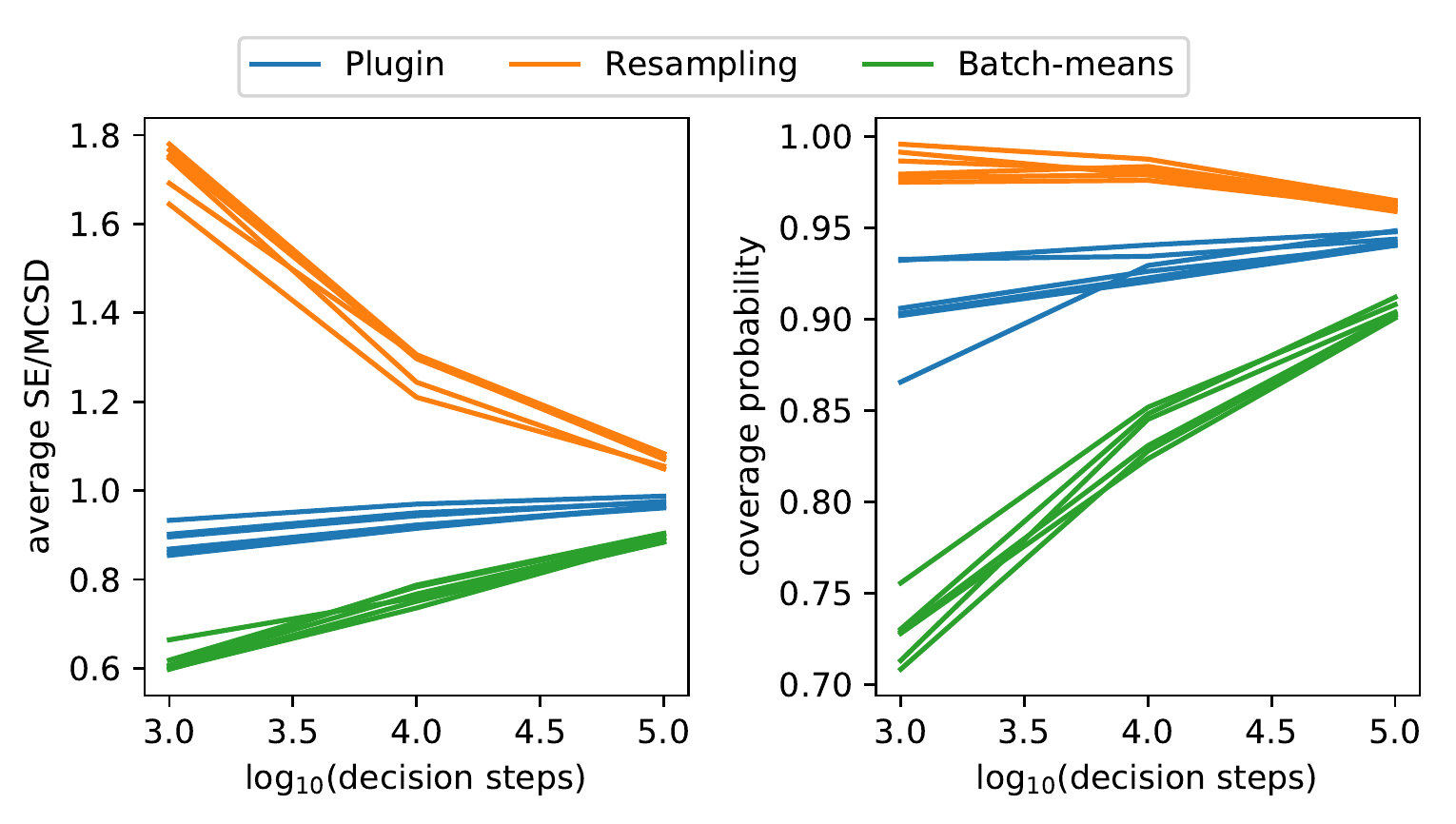}
			\caption{Linear reward model}
		\end{subfigure}%
		\begin{subfigure}{0.5\textwidth}
			\centering
			\includegraphics[width=\linewidth]{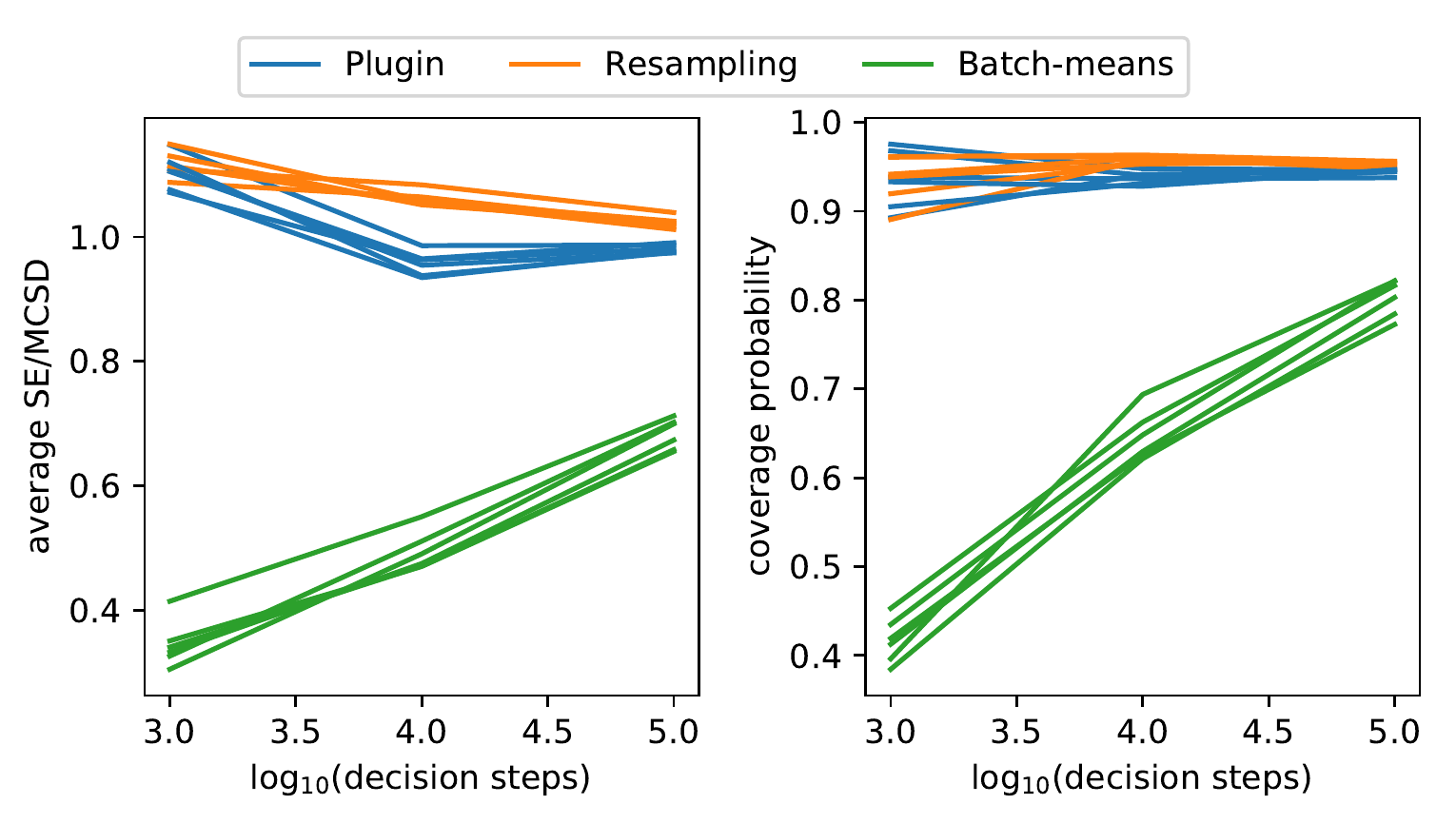}
			\caption{Logistic reward model}
		\end{subfigure}
		\caption{Comparison of variance estimation methods. The average standard error to Monte Carlo standard deviation and coverage probability are calculated from 5000 repeated experiments following the proposed SGD method with IPW gradients. The learning rate is $\alpha_t= 0.5t^{-0.501}$ and the exploration rate is $\varepsilon_t = t^{-0.3}\vee 0.1$.}
		\label{fig:valueest_e3}
	\end{figure}
	
	\newpage
	\bibliographystyle{apalike}
	\bibliography{manuscript_rev_unblinded}
	
\end{document}